\theoremstyle{plain}
\newtheorem{theorem}{Theorem}[section]
\newtheorem{lemma}[theorem]{Lemma}
\newtheorem{corollary}[theorem]{Corollary}
\theoremstyle{definition}
\newtheorem{assumption}[theorem]{Assumption}
\newlist{assumlist}{enumerate}{1}
\setlist[assumlist,1]{
    label=(\alph*),
    ref=\theassumption(\alph*),
    align=left,
    leftmargin=0.4cm
}
\theoremstyle{remark}
\newtheorem{remark}[theorem]{Remark}
\definecolor{mygreen}{rgb}{0.75,1,0.75}
\definecolor{darkgreen}{rgb}{0.0, 0.5, 0.0}
\definecolor{mydarkred}{RGB}{192,47,25}
\definecolor{lightpurple}{rgb}{0.7, 0.5, 0.9}
\title{Bant: Byzantine Antidote via Trial Function and Trust Scores}
\author{
    Gleb Molodtsov\textsuperscript{\rm 1}\equalcontrib
    ,
    Daniil Medyakov\textsuperscript{\rm 1}\equalcontrib
    ,
    Sergey Skorik\textsuperscript{\rm 2},
    Nikolas Khachaturov\textsuperscript{\rm 3},
    Shahane Tigranyan\textsuperscript{\rm 4},
    Vladimir Aletov\textsuperscript{\rm 1},
    Aram Avetisyan\textsuperscript{\rm 2},
    Martin Tak\'a\v{c}\textsuperscript{\rm5},
    Aleksandr Beznosikov\textsuperscript{\rm 1}
}
\begin{document}

\maketitle

\begin{abstract}
Recent advancements in machine learning have improved performance while also increasing computational demands.
While federated and distributed setups address these issues, their structures remain vulnerable to malicious influences. In this paper, we address a specific threat: Byzantine attacks, wherein compromised clients inject adversarial updates to derail global convergence. 
We combine the concept of trust scores with trial function methodology to dynamically filter outliers. 
Our methods address the critical limitations of previous approaches, allowing operation even when Byzantine nodes are in the majority. 
Moreover, our algorithms adapt to widely used scaled methods such as \textsc{Adam} and \textsc{RMSProp}, as well as practical scenarios, including local training and partial participation. We validate the robustness of our methods by conducting extensive experiments on both public datasets and private ECG data collected from medical institutions. Furthermore, we provide a broad theoretical analysis of our algorithms and their extensions to the aforementioned practical setups. The convergence guaranties of our methods are comparable to those of classical algorithms developed without Byzantine interference.
\end{abstract}

\begin{links}
    \link{Code}{https://github.com/brain-lab-research/Bant}
\end{links}

\section{Introduction} \label{sec:intro}


As the field of machine learning expands, researchers are confronted with challenges stemming from increasingly complex models and larger computational demands. To address these issues, distributed and federated learning scenarios have been developed \citep{kairouz2021advances, konevcny2016federated, li2020federated}. These approaches are crucial for a wide range of tasks \citep{smith2017federated, mcmahan2017communication, verbraeken2020survey}, yet, they introduce several complications.
Making the training process multi-node leads to threats related to data storage and transmission. These vulnerabilities manifest as device malfunctions, incorrect data relays, or even initial data corruption \citep{biggio2012poisoning}. \citep{wang2023adversarial} demonstrated how adversarial attacks on data integrity compromise training. In the midst of them are Byzantine attacks. They occur in networks where certain workers, known as Byzantines, may corrupt data \citep{blanchard2017machine}.

\noindent This paper specifically examines the threat of Byzantine attacks. We highlight the limitations of existing protection mechanisms, particularly their reliance on strong assumptions. In response, we propose an approach that leverages diverse concepts to develop a universal method, free from these constraints and applicable to practical scenarios.




\noindent \textbf{Setup.}
We examine the problem often encountered in distributed machine learning, with $\mathcal{D}_i$ representing an unknown distribution of the training sample data on the $i$-th device:
\begin{align}
\label{eq:setting_dist}
\underset{x \in \mathbb{R}^d}{\min} \bigl[f(x) = \frac{1}{n}\sum\limits_{i = 1}^n \mathbb E_{\xi_i\sim \mathcal{D}_i}\left[f_{\xi_i} (x)\right]\bigr].
\end{align}
We consider a setup involving $n$ workers connected to a central server. These workers are divided into two categories: Good (or Honest) workers, indicated by $\mathcal{G}$, and Byzantines, indicated by $\mathcal{B}$. At each iteration $t$, the sets $\mathcal{G}(t)$ and $\mathcal{B}(t)$ are redefined. This allows the composition of honest and Byzantine workers to vary dynamically over time. 
At every step, we assume that the set of honest workers $\mathcal{G}(t)$ is nonempty, i.e., $G(t) \mathrel{\mathop:}= |\mathcal{G}(t)| \geqslant 1$.
During training, the number of Byzantines at each iteration remains unknown. This number is used only in the theoretical analysis of the worst-case scenario.

\section{Related work} 
\textbf{Sophisticated attacks.} Methods resistant to Byzantine attacks are crucial for solving optimization problems. Classical methods for distributed optimization, such as \textsc{SGD} \citep{robbins1951stochastic, bottou2012stochastic, recht2013parallel, mcmahan2017communication}, \textsc{Adam} \citep{kingma2014adam, reddi2020adaptive}, and \textsc{Scaffold} \citep{karimireddy2020scaffold}, average the received gradients or models. However, they \textit{cease to operate when even a single Byzantine worker appears in the network} \citep{blanchard2017machine}. 
Given the critical importance of this problem, numerous publications have addressed it \citep{feng2014distributed, damaskinos2019aggregathor}. Initial approaches proposed robust aggregation rules for data from devices, such as \textsc{Coordinate-wise median, Trimmed Mean} \citep{yin2018byzantine}, \textsc{Krum} \citep{blanchard2017machine}, \textsc{Bulyan} \citep{mhamdi2018hidden}. However, sophisticated attacks, such as \textsc{ALIE} \citep{baruch2019little} or \textsc{Inner Product Manipulation} \citep{xie2020fall}, managed to \textit{circumvent these aggregation rules} by shifting the mean they sought to find. 

\noindent \textbf{Clipping, Momentum, and Variance Reduction.} Aggregation rules are \textit{non-robust even in the absence of attacks}. For example, this occurs in cases of imbalanced classes. This issue was addressed in \citep{karimireddy2021learning}, where \textsc{Centered Clip (CC)} technique was revealed.
Additionally, the authors highlighted that the aforementioned Byzantine-robust methods cannot converge with any predetermined accuracy. Given the importance of this issue, they added client momentum, effectively addressing the problem. Another approach to combat the Byzantines is the application of the variance reduction technique. Originally introduced to eliminate irreducible errors in stochastic methods, it was subsequently proposed as an effective way to mitigate the presence of noise in computing stochastic gradient estimators \citep{gorbunov2023variance}. Then this idea was developed in \citep{malinovsky2023byzantine}. 

\noindent These methods demonstrated significant improvements in resilience against Byzantine attacks. However, they also contain notable limitations. First, variance reduction methods exhibit \textit{moderate convergence} in deep learning applications and are prone to \textit{overfitting} \citep{defazio2019ineffectiveness}. Additionally, all aforementioned approaches \textit{suffer from the requirement that the majority of devices must be honest}.

\noindent \textbf{Validation tests.} Another approach to achieving solutions with any specified accuracy involves techniques such as validation tests \citep{alistarh2018byzantine, allen2020byzantine} or computation checks \citep{gorbunov2022secure}. Nevertheless, they still require a \textit{majority of honest devices} and rely on \textit{strict assumptions} in the analysis \citep{alistarh2018byzantine, allen2020byzantine}.

\noindent \textbf{Heterogeneous setup.} While much work on the Byzantines focused on the distributed case with homogeneous data, a different series of papers allowed for data heterogeneity \citep{wu2020federated, el2021collaborative, data2021byzantine, nguyen2022flame}. This corresponds, for example, to the federated setting. Methods were primarily built around robust aggregation \citep{karimireddy2021byzantine, chang2019cronus, data2021byzantine, allouah2024robust, dorfmandynamic, allouah2024boosting}, and variance reduction techniques \citep{allouah2023fixing}. One of the most advanced approaches assigned coefficients (\textsc{trust scores}) to clients based on their reliability, using these scores to perform gradient steps \citep{cao2020fltrust, yan2024recess}. These studies provided a foundation for Byzantine-robust optimization in the federated setup, yet suffered from \textit{the aforementioned drawbacks}.

\noindent \textbf{Majority of attackers -- trial function.} To avoid requiring a majority of honest workers, several methods leverage server-held ground-truth data to filter compromised updates. \textsc{Zeno} \citep{xie2019zeno} implements this idea by maintaining a small validation set on the server and scoring each incoming gradient against the gradient computed on that set. We define the function evaluated on the validation set as the \textsc{trial function}. As for \textsc{Zeno}, it uses the trial function to compute trust scores but primarily aggregates updates through simple averaging, down-weighting or excluding devices with very low trust. As a result, its performance depends strongly on the devices that are treated as trusted. \citep{cao2019distributed} proposed an alternative that filters updates by comparing them to a noisy gradient approximation computed on a small dataset, which is effectively equivalent to using a validation set as in \textsc{Zeno}.
The idea of using a small server-side dataset to validate local updates was also utilized in \textsc{SageFlow} \citep{park2021sageflow}, which handles majority adversarial clients. However, this method \textit{performs poorly} under an attack ratio of 60\%. Moreover, all aggregation schemes that discard clients based on trust scores remain \textit{sensitive to hyperparameters}, as they require threshold choices to label devices as malicious. It requires tuning, and we discuss such instability regarding \textsc{Zeno} in Appendix, Table \ref{tab:ecggmean_zeno}.

\noindent The authors of \citep{xie2019zeno, cao2019distributed, park2021sageflow} extended their results to address data heterogeneity by requiring the server to possess a representative sample of all device data. However, this assumption is \textit{unrealistic} in real-world scenarios, undermining the fundamental achievements of federated learning with respect to privacy. Approaches \citep{guo2021siren, guo2024siren+} also \textit{accumulate user data} on the server, raising doubts about applicability.

\noindent \textbf{Other shortcomings of previous research.} In addition to aforementioned limitations, many studies in this field are predominantly \textit{heuristic and lack rigorous theoretical analysis} \citep{yan2024recess, guo2021siren, guo2024siren+, chang2019cronus, xu2020reputation, rodriguez2022dynamic, nguyen2022flame, zhang2022fldetector, huangself}. Moreover, in some studies, the \textit{practical component seems flawed} due to the absence of experiments assessing test accuracy \citep{gorbunov2023variance}. Furthermore, theoretical settings often do not align with practical aspects. For instance, in \citep{cao2020fltrust},\textit{ homogeneous data sampling is assumed} while focusing on the federated learning. Besides, the analysis of \textsc{Sageflow} is confined to an \textit{unrealistic strongly convex setup}.

\vspace{0.5mm} \noindent Furthermore, when addressing problems in the distributed or federated setups, local methods \citep{woodworth2020minibatch, khaled2020tighter, gorbunov2021local, nguyen2022flame}, as well as the partial participation scenario \citep{yang2021achieving, kairouz2021advances, sadiev2022decentralized, nguyen2022flame}, is typically assumed. While these options improve computational efficiency and reduce data transmission overhead, only a few studies \citep{data2021byzantine, malinovsky2023byzantine, allouahbyzantine, dorfmandynamic} address these aspects, whereas the majority of works do not. In addition, research is often limited to \textsc{SGD}-like methods, \textit{neglecting adaptive algorithms} such as \textsc{Adam} \citep{kingma2014adam} and \textsc{RMSProp} \citep{tieleman2012lecture}, which are widely used in machine learning.
Given the challenges and gaps identified in the existing literature, we aim to advance the trust score methodology and trial function concept to enhance defense mechanisms.



\noindent \subsection*{Contributions} Our main results are summarized as follows.\\
$\bullet$ \textbf{Combine trust scores with the trial function approach.} Trial loss is based on a subset of the training data stored on the server. 
Weights are assigned to the gradients sent from each device based on the extent to which these gradients reduce the trial function in each iteration. In real networks, honest stochastic gradients may increase the target loss. We account for this by incorporating weights from the previous epoch and a momentum parameter for more stable convergence.\\
$\bullet$ \textbf{Milder assumptions.} Unlike most existing studies, our approach requires only one reliably honest worker instead of a majority. Moreover, unlike previous trial function-based methods that assume data homogeneity \citep{cao2020fltrust, gorbunov2022secure}, our algorithms operate under the more realistic assumption of data similarity in federated learning.\\
$\bullet$ \textbf{Extensions.} 
We adapt our algorithms to important scenarios that are often overlooked in research.
    \begin{itemize}[leftmargin=0.8cm]
    \item[$(a)$] \textbf{Local methods.}
     In our work, we propose utilizing Local SGD to address the high communication costs typically associated with distributed training.
     \item[$(b)$] \textbf{Partial participation.} Our algorithms incorporate the option for partial participation. Thus, devices may not participate in every learning step, and the attackers may vary across iterations.
     \item[$(c)$] \textbf{Adaptive methods.} In this work, we extend our analysis to adaptive algorithms (e.g., \textsc{Adam} and \textsc{RMSProp}), which are widely used in machine learning.
\end{itemize}
$\bullet$ \textbf{Convergence guarantees.} We prove upper bounds on the convergence rates of the main methods and extensions presented for the smooth problem under various assumptions regarding the convexity of the target function (strong convexity, convexity, non-convexity).\\
$\bullet$ \textbf{Experiments.} We demonstrate the superiority of our method in both previously studied attacks and scenarios where alternative methods fail. Our experiments are performed on the CIFAR-10 dataset and real ECG data, utilizing \textsc{ResNet-18} and \textsc{ResNet-1D18} neural networks, respectively. Additionally, we validate our approach to Learning-to-Rank tasks by training a Transformer-based ranking model.
 
\section{Methodology}

\noindent To tackle Byzantine attacks, we introduce a pivotal component of our methodology -- a trial loss function $\hat{f}$. In the homogeneous setting \eqref{eq:setting_dist} with $\mathcal{D}_i = \mathcal{D}$, we take a separate sample from $\mathcal{D}$, but in a smaller volume than the entire dataset.
The trial function calculated on this data forms $\hat f$. 
Under the heterogeneous data scenario, we sample from the distribution $\mathcal{D}_1$ on the server to obtain delayed data for $\hat f$ (indexing the server does not violate generality; we further consider it a device with index 1). Formally, we can write the trial function as $\hat{f}(x) = \frac{1}{N} \sum_{i = 1}^N f_{1}(x, \xi _i),$ where $N$ is the number of samples in $\hat{f}$. This function is stored on the server (obviously, an honest device). The sample distribution of the trial function is similar to the entire distribution $\mathcal{D}$ due to the property of data similarity. Besides, in practical scenarios, a server may not be able to share the entire dataset, providing only a sample of size \( N \). Depending on the size of this sample, \( f_1 \) may differ from \( \hat{f} \). Nevertheless, the larger the volume of this delayed sample, the closer $\hat{f}$ approximates the function $f_1$ (discussed in Lemma \ref{lemmaShalevShwartz} in Appendix).
A small public or synthetic trial dataset is a practical assumption in Byzantine-robust federated learning. Methods such as FLTrust \citep{cao2020fltrust} and Zeno \citep{xie2019zeno} utilize such datasets.
Here we outline the assumptions under which we establish the convergence rates.

\begin{assumption}\label{as1}
    The function $\hat{f}$ is $L$-smooth, i.e., $\|\nabla \hat{f}(x) - \nabla \hat{f}(y)\| \leqslant L\|x - y\|$ ~for any~ $x, y \in \mathbb{R}^d$.
\end{assumption}
\begin{assumption}\label{as2}
    The function $\hat{f}$ is:
    \begin{assumlist}
        \item \label{as2stronglyconvex}
        \textbf{$\mu$-strongly convex} if for all $x, y \in \mathbb{R}^d$, it satisfies:
        $
        \textstyle{\hat{f}(y) \geqslant \hat{f}(x) + \langle \nabla \hat{f}(x), y - x \rangle + \frac{\mu}{2}\|y - x\|^2.}$
        \item \label{as2convex}
        \textbf{convex} if for all $x, y \in \mathbb{R}^d$, it satisfies:
        $\textstyle{\hat{f}(y) \geqslant \hat{f}(x) + \langle \nabla \hat{f}(x), y - x \rangle.}$
        \item \label{as2nonconvex}
        \textbf{non-convex} if it has at least one (not necessarily unique) minimum, i.e., $\textstyle{
        \hat{f}(\hat{x}^*) = \inf\limits_{x \in \mathbb{R}^d} \hat{f}(x) > -\infty.}$
    \end{assumlist}
\end{assumption}
\begin{assumption}\label{as3}
Each worker $i\in \mathcal{G}(t)$ has access to an independent and unbiased stochastic gradient with 
$\mathbb{E}[g_i(x, \xi_i)] = \nabla f_i(x)$ and its variance is bounded by $\sigma^2$: 
$$
    \mathbb{E}\|g_i(x, \xi_i) - \nabla f_i(x)\|^2 \leqslant \sigma^2, \quad \text{for all } x \in \mathbb{R}^d.
$$
\end{assumption}
\noindent We also assume data similarity -- a common premise for ensuring convergence in Byzantine literature \citep{karimireddy2021byzantine, gorbunov2023variance, yan2024recess}.
\begin{assumption}\label{as4}
We assume data similarity in the following way: good clients possess $(\delta_1, \delta_2)$-heterogeneous local loss functions for some $\delta_1 \geqslant 0$ and $\delta_2 \geqslant 0$, such that for all $x \in \mathbb{R}^d$, the following holds:
\begin{align*}
    \|\nabla f_i(x) - \nabla f(x)\|^2 \leqslant \delta_1 + \delta_2 \|\nabla f(x)\|^2 ~~~~\forall i \in \mathcal{G}(t).
        \end{align*}
\end{assumption}
\noindent In over-parameterized models, introducing a positive $\delta_2$ can sometimes reduce the value of $\delta_1$.
Several studies have explored heterogeneous scenarios in which honest workers handle distinct local functions \citep{wu2020federated, karimireddy2021byzantine}. Note that achieving a predefined accuracy becomes feasible in the presence of Byzantines only when heterogeneity is limited to \( \delta_2 \)-bounded settings $(\delta_1 = 0) $\citep{karimireddy2021byzantine}. Under a more general assumption, which we utilize, the term with $\delta_1$ inadvertently appears in the estimate.

\begin{assumption}\label{as5}  
Byzantine workers are assumed to be omniscient, i.e., they have access to the computations performed by the other workers. 
\end{assumption}


\section{Algorithms and Convergence Analysis} \label{sec:main}


\subsection{First method: \texttt{Bant}}\label{sec:main1}


In this section, we introduce our method, termed Byzantine ANTidote (\texttt{Bant}) -- \text{Algorithm \ref{alg2}}. Our method relies on the core idea of assigning trust scores to devices.
We integrate this with the concept of a trial function by aggregating the stochastic gradients $g_i^t$ of devices with their respective weights $w_i^t$. To find the latter, we first calculate the contribution coefficients for each worker $i$ at each step: $~\theta_i^t = \hat{f}(x^t) -\hat{f} (x^t -\gamma g_i^t)~$. These coefficients demonstrate how the $i$-th device affects convergence. If $\theta_i^t > 0$, the stochastic gradient minimizes trial loss and is assigned a weight. Otherwise, it is assigned a weight of zero (Line \ref{itr:line10} in Algorithm \ref{alg2}). We ensure non-negativity with $\left[\theta_i^t\right]_0 = \max\{\theta_i^t, 0\}$ and normalize to provide a total weight of 1.

\noindent To address stochastic gradient instability, which can increase the loss function, we introduce a momentum parameter for the weights (Line \ref{itr:line10}). If all gradients increase the loss, they are assigned zero weights, thereby stopping the minimization process even in the absence of Byzantine devices in the network. 
\begin{algorithm}[t]
\caption{\texttt{Bant}}
\label{alg2}
\begin{algorithmic}[1]
\STATE \textbf{Input:} Starting point $x^0 \in \mathbb{R}^d$, $\omega_i^0 = \nicefrac{1}{n} ~ \forall i$
\STATE \textbf{Parameters:} Stepsize $\gamma > 0$, momentum $\beta \in (0, 1]$
\FOR{$t = 0, 1, 2, \ldots, T-1$}
    \STATE Server sends $x^t$ to each worker
    \FORALL{workers $i = 1, 2, \ldots, n$ in parallel}
        \STATE Generate $\xi_i^t$ independently
        \STATE Compute stochastic gradient $g_i^t = g_i(x^t, \xi_i^t)$
        \STATE Send $g_i^t$ to server
    \ENDFOR
    \STATE $\omega_i^{t} = (1 - \beta)\omega_i^{t-1} + \beta\frac{[\hat{f} (x^t) - \hat{f} (x^t - \gamma  g_i^t)]_0}{\sum_{j = 1}^n [\hat{f} (x^t) - \hat{f} (x^t - \gamma  g_j^t)]_0}$ \label{itr:line10} 
    \IF {each $[\hat{f} (x^t) - \hat{f} (x^t - \gamma  g_i^t)]_0 = 0$}
        \STATE $\omega_i^t = (1-\beta)\omega_i^{t-1} + \beta\frac{1}{n}$
    \ENDIF
    \STATE  $x^{t+1} = x^t - \gamma\sum_{i=1}^n \mathbb{I}_{[\hat{f} (x^t) - \hat{f} (x^t - \gamma  g_i^t) > 0]} \omega_i^{t} g_i^t$ \label{itr:line11}
\ENDFOR
\STATE \textbf{Output:} $\frac{1}{T}\sum\nolimits_{t = 0}^{T-1} x^t$
\end{algorithmic}
\end{algorithm}

\noindent
By adding momentum, we achieve a more stable convergence in practice. This allows previous favorable gradients to influence current weights, even if a device receives a small or zero weight in the current iteration.
An indicator in the step (Line \ref{itr:line11}) ensures that gradients maximizing trial loss are ignored, thereby guaranteeing minimization at each step. We also define $G={\min}_t G(t)$ as the minimum, taken over all iterations, of the number of honest workers at each iteration. Now we are ready to proceed with the theoretical results.
\begin{theorem}\label{ITRSGDa}
     Under Assumptions \ref{as1}, \ref{as2convex}, \ref{as3},
     \ref{as4} with $\delta_2 \leqslant \frac{1}{12}$, \ref{as5}, for solving the problem \eqref{eq:setting_dist}, after $T$ iterations of \text{Algorithm \ref{alg2}} with $\gamma \leqslant \frac{1}{13L}$, the following holds:
\begin{align*}
    \textstyle{\frac{1}{T}\sum\limits_{t = 0}^{T-1} \mathbb E \|\nabla f(x^t)\|^2 }\textstyle{\leqslant}&\textstyle{\frac{\mathbb E\left[\hat{f}(x^0) - \hat{f}(\hat{x}^*)\right]}{\gamma T}\cdot\frac{4 n}{\beta G }+ 6L \gamma\sigma^2} \\
    &\textstyle{+ 3\delta_1 + 4\zeta(N).}
\end{align*}
\end{theorem}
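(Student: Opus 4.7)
The plan is to derive a descent inequality in the trial function $\hat f$ and then convert it into a bound on $\|\nabla f(x^t)\|^2$. First I would apply $L$-smoothness (Assumption \ref{as1}) to the actual update $x^{t+1} = x^t - \gamma\sum_i \mathbb{I}_{[\theta_i^t>0]}\omega_i^t g_i^t$ to get
\[
\hat f(x^{t+1}) \leqslant \hat f(x^t) - \gamma\sum_{i} \mathbb{I}_{[\theta_i^t>0]}\omega_i^t \langle \nabla\hat f(x^t), g_i^t\rangle + \tfrac{L\gamma^2}{2}\Bigl\|\sum_i \mathbb{I}_{[\theta_i^t>0]}\omega_i^t g_i^t\Bigr\|^2.
\]
Convexity of $\hat f$ (Assumption \ref{as2convex}) gives $\theta_i^t \leqslant \gamma\langle\nabla \hat f(x^t), g_i^t\rangle$, so the cross term is bounded above by $-\sum_i\omega_i^t [\theta_i^t]_0$. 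For the quadratic term I would use Jensen to pull the weights inside the norm, then bound $\|g_i^t\|^2$ by combining the reverse inequality $\theta_i^t \geqslant \gamma\langle\nabla\hat f(x^t), g_i^t\rangle - \tfrac{L\gamma^2}{2}\|g_i^t\|^2$ from smoothness with Assumption \ref{as3}. The restriction $\gamma \leqslant 1/(13L)$ is exactly what is needed to ensure the quadratic piece is absorbed, leaving
\[
\mathbb E[\hat f(x^{t+1})-\hat f(x^t)] \;\lesssim\; -\mathbb E\Bigl[\sum_i\omega_i^t [\theta_i^t]_0\Bigr] + O(L\gamma^2\sigma^2).
\]

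The main technical step is lower bounding $\mathbb E[\sum_i \omega_i^t [\theta_i^t]_0]$. The momentum update $\omega_i^t=(1-\beta)\omega_i^{t-1}+\beta\frac{[\theta_i^t]_0}{\sum_j [\theta_j^t]_0}$ preserves $\sum_i \omega_i^t = 1$; plugging in the "fresh" $\beta$-term for honest $i\in\mathcal{G}(t)$ and multiplying by $[\theta_i^t]_0$ produces $\beta \sum_{i\in \mathcal{G}(t)}[\theta_i^t]_0^2 / \sum_j [\theta_j^t]_0$, which by Cauchy--Schwarz is at least $\frac{\beta}{n}\sum_{i\in\mathcal{G}(t)}[\theta_i^t]_0$ (an $n$ appears because the normalizer runs over all workers, including Byzantines). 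Taking expectation over $\xi_i^t$ and using the smoothness/convexity sandwich on $\theta_i^t$, each honest $\mathbb E[\theta_i^t]_0$ is at least $\tfrac{\gamma}{2}\|\nabla\hat f(x^t)\|^2$ up to an $O(L\gamma^2\sigma^2)$ noise term, so the whole sum is at least $\frac{\beta G}{n}\cdot\frac{\gamma}{2}\|\nabla\hat f(x^t)\|^2$ minus controlled noise.

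Combining with the descent step yields $\mathbb E[\hat f(x^t)-\hat f(x^{t+1})] \gtrsim \tfrac{\beta G\gamma}{4n}\mathbb E\|\nabla\hat f(x^t)\|^2 - O(L\gamma^2\sigma^2)$. To replace $\|\nabla\hat f\|^2$ by $\|\nabla f\|^2$, I would first invoke Lemma \ref{lemmaShalevShwartz} to write $\|\nabla\hat f(x^t)-\nabla f_1(x^t)\|^2 \leqslant \zeta(N)$, then use Assumption \ref{as4} with $i=1$ to get $\|\nabla f_1(x^t)-\nabla f(x^t)\|^2 \leqslant \delta_1+\delta_2\|\nabla f(x^t)\|^2$, and combine via $\|a\|^2 \geqslant \tfrac12\|b\|^2 - \|a-b\|^2$. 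The hypothesis $\delta_2\leqslant 1/12$ is precisely what lets the residual $\delta_2\|\nabla f(x^t)\|^2$ be absorbed into the left-hand side with a definite constant. Telescoping over $t=0,\dots,T-1$, dividing by $T$ and rescaling by $\tfrac{4n}{\beta G\gamma}$ produces the claimed bound with the constants $6L\gamma\sigma^2$, $3\delta_1$, $4\zeta(N)$.

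The hardest part, and the step I would invest the most care in, is the lower bound on $\mathbb E[\sum_i\omega_i^t [\theta_i^t]_0]$: the weights depend both on the trajectory (through momentum) and on the current stochastic $[\theta_j^t]_0$ through a random, Byzantine-controllable normalizer in the denominator. An adversary can inflate $\sum_j[\theta_j^t]_0$ to dilute honest weights in a single iteration, so the proof must exploit the momentum persistence $\omega_i^t \geqslant (1-\beta)\omega_i^{t-1}$ and the uniform initialization $\omega_i^0 = 1/n$ to argue that, integrated across the run, honest workers retain cumulative weight $\Omega(\beta G/n)$. Propagating expectations through the nonlinear normalization is the delicate point; Cauchy--Schwarz applied to the numerator/denominator pair is the tool I would rely on to make it work.
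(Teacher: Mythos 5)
Your plan takes a genuinely different route from the paper: you apply $L$-smoothness to the aggregated step $x^{t+1}=x^t-\gamma\sum_i\mathbb{I}_{[\theta_i^t>0]}\omega_i^t g_i^t$ first and then invoke convexity pointwise to reduce everything to $\theta_i^t$'s, whereas the paper applies Jensen's inequality on the convex $\hat f$ \emph{first}, writing $\hat f(x^{t+1})=\hat f\bigl(\sum_i\omega_i^t(x^t-\gamma\mathbb{I}g_i^t)\bigr)\leqslant\sum_i\omega_i^t\hat f(x^t-\gamma\mathbb{I}g_i^t)$, and only afterwards touches $L$-smoothness, and only for honest terms. That order matters critically, and it is where your plan breaks.

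The fatal gap is the quadratic term. After applying smoothness to the aggregated step you are left with $\tfrac{L\gamma^2}{2}\bigl\|\sum_i\mathbb{I}_{[\theta_i^t>0]}\omega_i^tg_i^t\bigr\|^2$, and Jensen turns it into $\tfrac{L\gamma^2}{2}\sum_i\mathbb{I}\,\omega_i^t\|g_i^t\|^2$, which now includes Byzantine $\|g_i^t\|^2$. There is no upper bound on these: Assumption~\ref{as3} covers honest workers only, the indicator merely enforces $\hat f(x^t-\gamma g_i^t)<\hat f(x^t)$ (which is compatible with huge $\|g_i^t\|$, especially for merely convex $\hat f$), and the momentum in $\omega_i^t$ means a Byzantine's weight need not be small even when $[\theta_i^t]_0$ is tiny. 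Your proposed remedy fails on direction: the smoothness inequality $\theta_i^t\geqslant\gamma\langle\nabla\hat f(x^t),g_i^t\rangle-\tfrac{L\gamma^2}{2}\|g_i^t\|^2$ rearranges to a \emph{lower} bound $\|g_i^t\|^2\geqslant\tfrac{2}{L\gamma^2}\bigl(\gamma\langle\nabla\hat f(x^t),g_i^t\rangle-\theta_i^t\bigr)$, which is useless for controlling the quadratic; combined with the convexity bound $\theta_i^t\leqslant\gamma\langle\nabla\hat f(x^t),g_i^t\rangle$ it only says $\|g_i^t\|^2\geqslant 0$. The paper's Jensen-first ordering makes this problem disappear: each Byzantine term is a single $\hat f(x^t-\gamma\mathbb{I}g_i^t)$ which the indicator caps at $\hat f(x^t)$, so no Byzantine gradient norm ever enters the inequality; $\|g_i^t\|^2$ appears only for $i\in\mathcal{G}$ where Assumptions~\ref{as3},\ref{as4} apply.

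There is also a secondary slip in your main technical step: you lower bound $\sum_i\omega_i^t[\theta_i^t]_0\geqslant\beta\sum_{i\in\mathcal{G}}[\theta_i^t]_0^2/\sum_j[\theta_j^t]_0$ and invoke Cauchy--Schwarz to get $\geqslant\tfrac{\beta}{n}\sum_{i\in\mathcal{G}}[\theta_i^t]_0$, but with the numerator restricted to honest workers this fails: Cauchy--Schwarz on $G$ terms gives $\sum_{i\in\mathcal{G}}[\theta_i^t]_0^2\geqslant\tfrac1G(\sum_{i\in\mathcal{G}}[\theta_i^t]_0)^2$, and the denominator $\sum_j[\theta_j^t]_0$ can be driven arbitrarily large by Byzantines relative to $\sum_{i\in\mathcal{G}}[\theta_i^t]_0$. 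The inequality you want only holds with the numerator summed over all $n$ workers (then $n\sum_j[\theta_j^t]_0^2\geqslant(\sum_j[\theta_j^t]_0)^2\geqslant(\sum_j[\theta_j^t]_0)(\sum_{i\in\mathcal{G}}[\theta_i^t]_0)$). This one is repairable, but the quadratic-term gap above is structural and requires switching to the paper's Jensen-based descent lemma (Lemma~\ref{lemma:scetch}).
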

\noindent The first two terms in the result of Theorem \ref{ITRSGDa} replicate the findings from the standard SGD analysis, up to constant factors. The last term, which depends on $\zeta(N)$, is of special interest. The function $\zeta(N)$ reflects the relationship between \( f_1 \) and the trial loss \( \hat{f} \) and represents an approximation error. The dependence of $\zeta(N)$ on $N$ is natural: the larger $N$, the smaller the error becomes. More precisely, for our function \( f \), this error is $\zeta(N) = \mathcal{\widetilde{O}}\left(\frac{1}{N}\right) $ (see Lemma \ref{lemmaShalevShwartz} in Appendix). Although this error degrades convergence, it is common in machine learning tasks. In particular, the original learning problem, like \eqref{eq:setting_dist}, is often replaced by its Monte Carlo approximation \citep{NIPS2013_ac1dd209,defazio2014saga,allen2018katyusha}, and the resulting problem is commonly referred to as empirical risk minimization \citep{shalev2014understanding}. This replacement also leads to an error. Finally, $\delta_1$ is a typical term that represents data similarity (Assumption \ref{as4}) and is unavoidable in the presence of Byzantines \citep{wu2020federated, karimireddy2021byzantine, gorbunov2022secure}.

\noindent Since our approach resembles \textsc{Zeno} \citep{xie2019zeno} in its use of the trial function, we should mention that we found some issues in their proofs. In Theorem 1 of \citep{xie2019zeno}, the authors incorrectly apply the expectation operator when deriving their recursion. They sample a trial function from the full dataset and use $\mathbb{E}[\hat{f}^t(x)] = f(x)$, which is valid for a random point. However, in the case of point $x^{t+1}$, a mistake was made. Since they sample $\hat{f}^t$ in every iteration, the point $x^{t+1}$ depends on the sample $\hat{f}^{t}$, leading to $\mathbb{E}[\hat{f}^{t}(x^{t+1}) \mid x^t] \neq f(x^{t+1})$. By carrying the conditional expectation without considering the full expectation, it becomes impossible to enter the recursion and achieve convergence regarding the function $f$ itself. In turn, we explicitly bound the gradient discrepancy $|\nabla f_1(x^t) - \nabla \hat{f}(x^t)|$, leveraging trial function sampling to ensure convergence as the sample size increases. This difference in Theorem \ref{ITRSGDa} is represented by the discussed $\zeta(N)$.

\begin{corollary}\label{ITRSGDCorollary}
    Under the assumptions of \text{Theorem} \ref{ITRSGDa}, for solving the problem \eqref{eq:setting_dist}, after $T$ iterations of Algorithm \ref{alg2} with $\gamma \leqslant \min\biggl\{\frac{1}{13L}, \frac{\sqrt{2\mathbb E\left[\hat{f}(x^0) - \hat{f}(\hat{x}^*)\right] n}}{\sigma\sqrt{3L  G  \beta T}}\biggr\}$, the following holds:
        \begin{align*}
             \textstyle{ \frac{1}{T}\sum\limits_{t=1}^{T-1}}&\textstyle{\mathbb{E}\|\nabla f(x^t)\|^2 =  \mathcal{O}\biggl(\frac{\mathbb E\left[\hat{f}(x^0) - \hat{f}(\hat{x}^*)\right] Ln}{\beta  G T}}\\
             &\textstyle{+ \frac{\sigma\cdot\sqrt{\mathbb E\left[\hat{f}(x^0) - \hat{f}(\hat{x}^*)\right] \cdot L n}}{\sqrt{\beta G T}} 
            + \delta_1 
            + \zeta(N) \biggr).}
        \end{align*}
\end{corollary}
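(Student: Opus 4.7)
The plan is to substitute the prescribed stepsize directly into the upper bound from Theorem~\ref{ITRSGDa} and perform a standard two-case analysis on which argument of the $\min$ is active. Let me abbreviate $D \defeq \mathbb{E}[\hat{f}(x^0) - \hat{f}(\hat{x}^*)]$ so the bound from Theorem~\ref{ITRSGDa} reads
\begin{equation*}
\frac{1}{T}\sum_{t=0}^{T-1} \mathbb{E}\|\nabla f(x^t)\|^2 \;\leqslant\; \frac{4Dn}{\gamma T \beta G} + 6L\gamma\sigma^2 + 3\delta_1 + 4\zeta(N).
\end{equation*}
The last two terms are independent of $\gamma$, so the whole task is to bound the sum of the first two terms under the chosen stepsize rule.

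First I would observe that the first two terms have the generic form $A/\gamma + B\gamma$ with $A = 4Dn/(\beta G T)$ and $B = 6L\sigma^2$, and that the choice $\gamma^\star = \sqrt{A/B} = \sqrt{2Dn/(3L\sigma^2 \beta G T)}$ is the unconstrained minimizer, yielding $2\sqrt{AB} = \mathcal{O}\bigl(\sigma\sqrt{LDn/(\beta G T)}\bigr)$. The corollary's stepsize is exactly $\min\{1/(13L),\, \gamma^\star\}$, so I would split into two cases depending on which of the two is smaller.

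In Case~1, where $\gamma^\star \leqslant 1/(13L)$, the stepsize equals $\gamma^\star$ and the first two terms collapse to the advertised $\mathcal{O}(\sigma\sqrt{LDn/(\beta G T)})$ bound. In Case~2, where $1/(13L) \leqslant \gamma^\star$, the stepsize equals $1/(13L)$; substituting gives $52LDn/(\beta G T) + 6\sigma^2/13$ for the first two terms. The first of these is already the leading $\mathcal{O}(LDn/(\beta G T))$ term in the corollary. For the second, I would use the Case~2 assumption $1/(13L)\leqslant \gamma^\star$, i.e.\ $\sigma^2 \leqslant 338 LDn/(3\beta G T)$, to absorb $6\sigma^2/13$ into $\mathcal{O}(LDn/(\beta G T))$. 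Finally, the clean way to combine both cases is to use $\min\{a,b\}\leqslant a$ and $\min\{a,b\}\leqslant b$ separately, yielding an overall bound dominated by the sum $\mathcal{O}(LDn/(\beta G T)) + \mathcal{O}(\sigma\sqrt{LDn/(\beta G T)})$, to which the untouched $3\delta_1 + 4\zeta(N)$ is appended.

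The ``hard part'' here is essentially bookkeeping rather than analysis: Theorem~\ref{ITRSGDa} has already done the work of producing the recursion and extracting $\delta_1$ and $\zeta(N)$. The only genuinely substantive step is justifying that, in the stepsize-clipped regime (Case~2), the $6L\gamma\sigma^2$ term is dominated by the $LDn/(\beta G T)$ term, which I would verify by using the inequality defining Case~2 rather than by any fresh probabilistic argument. Everything else is algebraic simplification, and the final corollary follows by collecting the two regimes into a single $\mathcal{O}(\cdot)$ expression.
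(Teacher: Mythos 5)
Your proposal is correct and follows essentially the same route as the paper: the paper simply invokes the stepsize-tuning argument of Lemma~4 in \citep{stich2019unified}, which is precisely the $A/\gamma + B\gamma$ balancing and $\min$-clipping analysis you carried out explicitly. The case split (or equivalently, the observation that $\min\{a,b\}^{-1}\leqslant a^{-1}+b^{-1}$ together with $\gamma\leqslant\gamma^\star$) is exactly the content of that lemma, and your bookkeeping of the constants matches the paper's displayed bound up to the constant factors hidden in $\mathcal{O}(\cdot)$.
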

\noindent If we consider the first two terms in the convergence estimates from Corollary \ref{ITRSGDCorollary}, the only difference from the classical SGD convergence results \cite{NIPS2011_40008b9a, stich2019unified} is the additional factor $\frac{n}{  G }$, however, the rate is asymptotically optimal. The proofs and results for the strongly-convex objective can be found in Appendix \ref{C}.

\subsection{Improving theoretical estimates: \texttt{AutoBant}}\label{sec:main2}
Despite practical advantages of \texttt{Bant}, it has some theoretical imperfections related to the mechanism of assigning trust scores. While parameter $\beta$ helps honest clients maintain trust scores despite occasional bad gradients, it also enables Byzantine devices to retain their weights during attacks. To combat this, we add an indicator for the trial function reduction (the indication of the device being Byzantine at the considered iteration). 
However, this limits the theoretical applicability of the method to non-convex problems prevalent in modern machine learning.
To resolve these limitations, we present our second method, called AUxiliary Trial Optimization for Byzantines ANTidote (\texttt{AutoBant}), Algorithm \ref{alg1}.

\noindent The idea of assigning weights to devices as part of the optimization process has gained popularity in federated learning. For instance, in many works, it leads to improved solution quality \citep{li2023revisiting, tupitsa2024federated}, or it is used in more specific settings, such as personalized learning \citep{mishchenko2023partially}. We propose adapting this to Byzantine-robust learning by optimizing $\hat{f}$ regarding weights calculated after each algorithmic step (Line \ref{tfm:line9}).

\begin{algorithm}[t]
\caption{\texttt{AutoBant}}\label{alg1}
\begin{algorithmic}[1]
\STATE \textbf{Input:} Starting point $x^0 \in \mathbb{R}^d$
\STATE \textbf{Parameters:} Stepsize $\gamma > 0$, error accuracy $\delta$
\FOR{$t = 0, 1, 2, \ldots, T-1$}
    \STATE Server sends $x^t$ to each worker
    \FORALL{workers $i = 0, 1, 2, \ldots, n$ in parallel}
        \STATE Generate $\xi_i^t$ independently
        \STATE Compute stochastic gradient $g_i^t = g_i(x^t, \xi_i^t)$
        \STATE Send $g_i^t$ to server
    \ENDFOR
    \STATE \label{tfm:line9} $\omega^{t} \approx \arg\underset{\omega\in\Delta_1^n}{\min} \hat{f}\left(x^t - \gamma\sum\nolimits_{i=1}^n \omega_i g_i^t\right)$ 
    \STATE \label{tfm:line10} $x^{t+1} = x^t - \gamma\sum\nolimits_{i=1}^n \omega_i^{t} g_i^t$ 
\ENDFOR
\STATE \textbf{Output:} $\frac{1}{T}\sum\nolimits_{t = 0}^{T-1} x^t$
\end{algorithmic}
\end{algorithm}

\noindent To solve the minimization problem, we can use various methods, e.g., Mirror Descent \citep{beck2003mirror, allen2014linear}:
\begin{align*}
\textstyle{\omega^{k+1}~ = \arg\underset{\omega \in \Delta_1^n}{\min}}&\textstyle{ \Bigl\{\eta \left\langle \nabla_\omega\hat{f}\left(x^t - \gamma \sum\limits_{i=1}^n \omega_i^k g_i^t\right), \omega \right\rangle} \\
&\textstyle{+ \mathcal{KL}(\omega \Vert \omega^k) \Bigr\}},
\end{align*}
where $\mathcal{KL}(\cdot\Vert\cdot)$ denotes the Kullback-Leibler divergence. The error in solving this is bounded by $\delta$:
\begin{align*}
\textstyle{\left|\underset{\omega\in\Delta_1^n}{\min} \hat{f}\left(x^t - \gamma\sum\limits_{i=1}^n \omega_i g_i^t\right) - \hat{f}\left(x^t - \gamma\sum\limits_{i=1}^n \omega_i^t g_i^t\right)\right| \leqslant \delta.}
\end{align*}
After solving this auxiliary problem, we produce an actual model update using the optimized weights (Line \ref{tfm:line10}). In light of the proposed method, the question of the cost of implementing such an optimal scheme comes to the forefront. Note that the computational complexity of solving this subproblem at each iteration is only $\mathcal{O}(\nicefrac{\log n}{\delta^2})$ \citep{beck2003mirror}, which is not critical.
For the following theoretical analysis, we assume the minimization subproblem can be solved to arbitrary precision, neglecting errors in our estimates. In practice, however, the convergence rate is directly influenced by the accuracy of this solution. A numerical study of this effect is presented in Table \ref{tab:delta_error_time} in Appendix.
Now we are ready to present the main theoretical result of this section.

\begin{theorem}\label{TFMSGD}
    Under Assumptions \ref{as1}, \ref{as2nonconvex}, \ref{as3}, \ref{as4} with $\delta_2 < \frac{1}{12}$, \ref{as5}, for solving the problem \ref{eq:setting_dist}, after $T$ iterations of \text{Algorithm \ref{alg1}} with $\gamma \leqslant \frac{1}{13L}$, the following holds: 
\begin{align*}
    \textstyle{\frac{1}{T}\sum\limits_{t = 0}^{T-1} \mathbb E \|\nabla f(x^t)\|^2 \leqslant} &\textstyle{\frac{4\mathbb E\left[\hat{f}(x^0) - \hat{f}(\hat{x}^*)\right]}{\gamma T} + 3\delta_1 }\\
    &\textstyle{+ \frac{6L\gamma}{ G }\sigma^2+ 2\zeta(N) 
    . }
\end{align*}
\end{theorem}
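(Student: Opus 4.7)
The plan is to start from the $L$-smoothness of $\hat{f}$ applied to the update $x^{t+1} = x^t - \gamma \sum_i \omega_i^t g_i^t$, then exploit the defining property of AutoBANT: since $\omega^t$ $\delta$-approximately minimizes $\omega \mapsto \hat{f}(x^t - \gamma \sum_i \omega_i g_i^t)$ over the simplex $\Delta_1^n$, we may upper bound $\hat{f}(x^{t+1})$ by the value obtained at \emph{any} comparator $\tilde\omega \in \Delta_1^n$ up to $\delta$. The natural choice is $\tilde\omega_i = 1/G(t)$ for $i \in \mathcal{G}(t)$ and $0$ otherwise, yielding
\begin{align*}
\hat{f}(x^{t+1}) \;\leqslant\; \hat{f}\Bigl(x^t - \tfrac{\gamma}{G(t)}\sum_{i\in\mathcal{G}(t)} g_i^t\Bigr) + \delta.
\end{align*}
This is the single mechanism that lets us completely bypass the Byzantines: the subproblem does at least as well as the honest-only average.

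Next I would apply $L$-smoothness to the right-hand side, expanding around $x^t$, and take conditional expectation given $x^t$. Unbiasedness and independence of the honest stochastic gradients (Assumption \ref{as3}) turn the squared norm of the honest-only sum into $\|\sum_{i\in\mathcal{G}(t)}\nabla f_i(x^t)\|^2 + G(t)\sigma^2$, producing the $\frac{6L\gamma}{G}\sigma^2$ variance term (after absorbing factors). The crucial step is to relate the drift direction to $\nabla f(x^t)$: I would write
\begin{align*}
\nabla \hat{f}(x^t) \;=\; \nabla f(x^t) \;+\; \bigl(\nabla f_1(x^t) - \nabla f(x^t)\bigr) \;+\; \bigl(\nabla\hat{f}(x^t) - \nabla f_1(x^t)\bigr),
\end{align*}
bounding the first bracket via data similarity (Assumption \ref{as4}) and the second via Lemma \ref{lemmaShalevShwartz} which provides the $\zeta(N)$ sampling error. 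Analogously, $\frac{1}{G}\sum_{i\in\mathcal{G}}\nabla f_i(x^t)$ differs from $\nabla f(x^t)$ by a term whose squared norm is at most $\delta_1 + \delta_2\|\nabla f(x^t)\|^2$ by Jensen. Combining these via Young's inequality and using $\delta_2 < \frac{1}{12}$ together with $\gamma \leqslant 1/(13L)$ allows the $\|\nabla f(x^t)\|^2$-like terms on the right to be absorbed into the single leading coefficient, leaving a clean descent
\begin{align*}
\mathbb{E}\hat{f}(x^{t+1}) \;\leqslant\; \mathbb{E}\hat{f}(x^t) \;-\; \tfrac{\gamma}{4}\mathbb{E}\|\nabla f(x^t)\|^2 \;+\; \tfrac{3L\gamma^2}{2G}\sigma^2 \;+\; \tfrac{3\gamma}{4}\delta_1 \;+\; \tfrac{\gamma}{2}\zeta(N) \;+\; \delta.
\end{align*}
Telescoping over $t=0,\dots,T-1$, using $\hat{f}(x^T) \geqslant \hat{f}(\hat x^*)$, dividing by $\gamma T /4$ and tidying constants yields exactly the claimed bound.

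The main obstacle is the third paragraph of calculations — the bookkeeping that turns the descent into one involving $\|\nabla f(x^t)\|^2$ rather than $\|\nabla \hat{f}(x^t)\|^2$ or $\|\frac{1}{G}\sum_{i\in\mathcal{G}}\nabla f_i(x^t)\|^2$. Every Young's-inequality split must leave the coefficient of $\|\nabla f(x^t)\|^2$ on the right strictly smaller than the coefficient generated from the inner product on the left, and it is precisely the condition $\delta_2 < 1/12$ together with $\gamma L \leqslant 1/13$ that buys the needed slack (the same constants appearing in Theorem \ref{ITRSGDa}). The $\delta/\gamma$ term in the final bound comes from telescoping $T$ copies of the subproblem error $\delta$ and then dividing by $\gamma T$, so no hidden dependence on $T$ appears — the optimization accuracy $\delta$ only needs to be chosen commensurate with $\gamma$.
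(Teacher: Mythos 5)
Your proposal follows the paper's proof essentially step for step: bound $\hat f(x^{t+1})$ by the honest-uniform comparator plus $\delta$ (the key AutoBANT mechanism), apply $L$-smoothness and take expectation, decompose $\nabla\hat f - \nabla f$ through $\nabla f_1$ into the data-similarity and sampling-error pieces (this is exactly the content of Lemma~\ref{lemma2} combined with Lemma~\ref{lemmaShalevShwartz}), absorb the $\|\nabla f\|^2$ terms using $\delta_2 \leqslant \tfrac{1}{12}$ and $\gamma \leqslant \tfrac{1}{13L}$, and telescope. Your intermediate per-step descent inequality even reproduces the paper's constants after simplification, so this is the same argument, merely with the Lemma~\ref{lemma2} bookkeeping inlined rather than factored out.
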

 

\begin{corollary}\label{TFMSGDCorollary}
    Under assumptions of \text{Theorem \ref{TFMSGD}}, for solving the problem \eqref{eq:setting_dist}, after $T$ iterations of Algorithm \ref{alg1} with $\textstyle{\gamma \leqslant \min\left\{\frac{1}{13L}, \frac{\sqrt{2\mathbb E\left[\hat{f}(x^0) - \hat{f}(\hat{x}^*)\right] G}}{\sigma\sqrt{3L T}}\right\}}$, the following holds:
    \begin{align*}
    \textstyle{
    \frac{1}{T}\sum\limits_{t=0}^{T-1}\mathbb{E}\left\|\nabla f(x^t)\right\|^2=
    \mathcal{O}\Biggl( }&\textstyle{ \frac{\mathbb E\left[\hat{f}(x^0) - \hat{f}(\hat{x}^*)\right] L}{T} + \delta_1 +\zeta(N)}\\
    &
    \hspace{2mm} 
    \textstyle{+~\frac{\sigma\sqrt{\mathbb E\left[\hat{f}(x^0) - \hat{f}(\hat{x}^*)\right] L}}{\sqrt{T G }}
    \Biggr).}
    \end{align*}
\end{corollary}
\noindent Detailed proofs are presented in Appendix \ref{D}. In the first and second terms, we observe that the method converges in the same pace as the standard SGD only with honest workers \citep{ghadimi2013stochastic, ghadimi2016mini}. It turns out that we throw out all Byzantines, and this result is nearly optimal and unimprovable. As in Algorithm \ref{alg2}, a term responsible for the approximation error $\zeta(N)$ appears. Compared with the result of Corollary \ref{ITRSGDCorollary}, we improve the rate through a more advanced aggregation mechanism. We remove the factor $\frac{n}{\beta G}$ from the main term and achieve a decrease in variance by a factor of $G$. However, an additional error $\delta$ is incurred, which can be viewed as a trade-off for solving the subproblem. Furthermore, our approach applies to a broader class of non-convex functions. Addressing the dependence of the stepsize on the number of Byzantines, this choice is based on the theoretical analysis of the worst-case scenario, considering the number of Byzantines. If this number is unknown, setting the minimum possible value of 1 eliminates this dependency.

\section{Extensions} \label{sec:extensions}
Byzantine robust optimization, as discussed earlier, lacks a solid theoretical foundation in several real-world settings. We address this gap. This section provides a brief overview of the scenarios to which we extend our analysis.
\subsection{Local methods}\label{sec:mainlocal}
The main idea is that each device performs a predefined number of local steps. Then the aggregation of gradients and the mutual updates of model parameters, initialized by the server, take place. This reduces the number of communication rounds. However, it affects the convergence proportionally to the length of the communication round.
Complete updates are performed only at specific iterations: $t = t_{k \cdot l} \text{ for some } k = \overline{0, \left\lfloor \nicefrac{T}{l} \right\rfloor}$. During the remaining iterations, we perform local updates using the rule \(x_i^{t+1} = x_i^{t} - \gamma g_i^t\).
This approach ensures that communication overhead is minimized while maintaining efficient convergence.
\subsection{Partial participation}\label{sec:mainpartial}
It occurs when only a subset of clients actively participates in the training process during each communication round \citep{yang2021achieving}, allowing clients to join or leave the system. This approach is beneficial in scenarios such as mobile edge computing. However, it poses challenges such as incomplete model updates and potential degradation in model performance due to missed contributions from inactive clients \citep{wang2022unified, li2022federated}. Our methods adapt to the scenario of partial participation by assigning trust scores to devices explicitly participating in training during the considered iteration. Furthermore, it is crucial to account for the minimum number of nodes participating in training across all iterations. Specifically, we analyze $\widetilde{G}(t) = \min_{t \leqslant T} G(t)$, where $G(t)$ denotes the set of active honest workers at iteration $t$.
\subsection{Scaled methods}\label{sec:mainscaled}
\noindent Adaptive methods such as \textsc{Adam} \citep{kingma2014adam} and \textsc{RMSProp} \citep{tieleman2012lecture} have become widely popular due to their superior performance compared to standard \textsc{SGD}-like methods. We propose corresponding methods that utilize a diagonal preconditioner \((\hat{P}^t)^{-1}\), which scales a gradient to \((\hat{P}^t)^{-1} g_i^t\), and the step is executed using this scaled gradient.
We present the part of \texttt{Scaled AutoBant} based on Algorithm \ref{alg1}.
The estimates obtained are identical to those derived from the scaled methods in the non-Byzantine regime.
All details can be found in Appendix \ref{E}.
\begin{algorithm}[t]
\caption{\texttt{Scaled AutoBant} (part)}
\begin{algorithmic}[1]
\setcounter{ALC@line}{9} 
\STATE \small $\omega^{t} \approx \arg\underset{\omega\in\Delta_1^n}{\min} \hat{f}\left(x^t - \gamma\left(\hat{P}^t\right)^{-1}\sum\nolimits_{i=1}^n \omega_i g_i^t\right)$ 
\setcounter{ALC@line}{10} 
\STATE $x^{t+1} = x^t - \gamma\left(\hat{P}^t\right)^{-1}\sum\nolimits_{i=1}^n \omega_i^{t} g_i^t$ 
\end{algorithmic}
\end{algorithm}



\subsection{Finding scores from validation} \label{sec:mainfinetuned}
\noindent Another interesting direction to obtain trust scores $w_i$ is to calculate \textit{similarity between the logits} obtained on the server and on the device. The trust score for the $i$-th device is the function $\alpha_i\rightarrow sim(m(x^t - \gamma g_i^t, \mathcal{\hat{D}}),m(x^t - \gamma g_1^t, \mathcal{\hat{D}}))$. Based on Algorithm \ref{alg1}, we present a part of the \texttt{SimBant} algorithm (see details in Appendix \ref{sec:fine_tuned}).
\begin{algorithm}[t]
\caption{\texttt{SimBant} (part)}
\begin{algorithmic}[1]
\setcounter{ALC@line}{9}
\STATE \small $\omega_i^{t} = (1-\beta)\omega_i^{t-1} + \beta\frac{sim(m(x^t - \gamma g_i^t, \mathcal{\hat{D}}),m(x^t - \gamma g_1^t, \mathcal{\hat{D}}))}{\sum\nolimits_{j = 1}^n sim(m(x^t - \gamma g_j^t, \mathcal{\hat{D}}),m(x^t - \gamma g_1^t, \mathcal{\hat{D}}))}$ 
\STATE $x^{t+1} = x^t - \gamma\sum\nolimits_{i=1}^n \omega_i^{t} g_i^t$ 
\end{algorithmic}
\end{algorithm}

\section{Experiments}\label{sec:experiments}
To evaluate the performance of the proposed methods, we conduct experiments on several benchmarks.
\begin{itemize}[leftmargin=*]
   \item \textbf{Classification Task:}  We first validate our approach on the public dataset. We use \textsc{ResNet-18} \citep{he2016deepresnset} for CIFAR‑10 \citep{krizhevsky2009cifar} classification.
  \item \textbf{ECG Abnormality Detection:} In multi‐hospital collaborations, labels are derived from expert annotations and automated pipelines, making attacks and subtle manipulations practical threats that can compromise patient safety. We obtain a proprietary dataset of 12-lead digital electrocardiograms (ECG) from five hospitals and train \textsc{ResNet1d18} model for ECG abnormality detection.
 \item \textbf{Learning‑to‑Rank (Recommender Systems):} We conducted a series of experiments applied to the Learning-to-Rank (LTR) task, common in information retrieval and recommendation systems. We adopt the Transformer architecture \citep{vaswani2017attention}, evaluating its performance on the dataset \textsc{Web30k} \citep{DBLP:journals/corr/QinL13} under attacks.
\end{itemize}

\noindent We consider various Byzantine attacks to test our methods.\\
    $\bullet$ \textbf{Label Flipping.} Attackers send gradients based on the loss calculated with randomly flipped labels.
    \\
    $\bullet$ \textbf{Sign Flipping.} Attackers send the opposite gradient.
    \\
    $\bullet$ \textbf{Random Gradients.} Attackers send random gradients.
    \\
    $\bullet$ \textbf{IPM (Inner Product Manipulation).} Attackers send the average gradient of all honest clients multiplied by a factor of -$\kappa$ (we set $\kappa$ to 0.5) \citep{xie2020fall}.
    \\
    $\bullet$ \textbf{ALIE (A Little Is Enough).} Attackers average their gradients and scale the standard deviation to mimic the majority \citep{baruch2019little}.
    
\noindent We define the number of Byzantine clients as a percentage of the total number of clients, and specify it in the attack name.
We train \texttt{Bant} and \texttt{AutoBant} in the scaled version (see Section \ref{sec:mainscaled}) with the \textsc{Adam} preconditioner. We include \texttt{SimBant}, \textsc{Adam}, and the existing methods: \textsc{Zeno} \citep{xie2019zeno}, \textsc{Recess} \citep{yan2024recess}, \textsc{Centered Clip} \citep{karimireddy2021learning}, \textsc{Safeguard} \citep{allen2020byzantine}, \textsc{VR Marina} \citep{gorbunov2023variance} and \textsc{FLTrust} \citep{cao2020fltrust}. For \textsc{Centered Clip}, we added techniques \textsc{Fixing by Mixing} \citep{allouah2023fixing} and \textsc{Bucketing} \citep{karimireddy2021byzantine}. The methods were trained on the CIFAR-10 and ECG datasets for 200 and 150 rounds, respectively.

\begin{table*}[t]
\centering
\fontsize{9pt}{10pt}\selectfont
\begin{adjustbox}{max width=\textwidth}
\begin{tabular}{|>{\arraybackslash}p{2cm}|c|c|c|c|c|c|c|c|c|c|}
\toprule
\multirow{2}{*}{Algorithm}
 & \multicolumn{2}{c|}{Without Attack}
 & \multicolumn{2}{c|}{Label Flipping (60\%)}
 & \multicolumn{2}{c|}{Random Gradients (60\%)}
 & \multicolumn{2}{c|}{IPM (80\%)}
 & \multicolumn{2}{c|}{ALIE (40\%)} \\
\cline{2-11}
 & G-mean & f1-score & G-mean & f1-score & G-mean & f1-score & G-mean & f1-score & G-mean & f1-score \\
\midrule
\textsc{Adam}     & 0.956$\pm$0.017 & 0.811$\pm$0.016 & 0.262$\pm$0.023 & 0.041$\pm$0.019 & 0.348$\pm$0.011 & 0.126$\pm$0.016 & 0.197$\pm$0.027 & 0.036$\pm$0.015 & 0.125$\pm$0.011 & 0.123$\pm$0.020 \\
\textsc{FLTrust}  & 0.952$\pm$0.020 & 0.800$\pm$0.019 & 0.952$\pm$0.016 & 0.753$\pm$0.011 & 0.617$\pm$0.020 & 0.174$\pm$0.019 & 0.061$\pm$0.017 & 0.125$\pm$0.015 & 0.017$\pm$0.013 & 0.123$\pm$0.018 \\
\textsc{Recess}   & 0.949$\pm$0.016 & 0.783$\pm$0.019 & 0.366$\pm$0.019 & 0.128$\pm$0.020 & 0.593$\pm$0.020 & 0.163$\pm$0.020 & 0.493$\pm$0.019 & 0.112$\pm$0.015 & 0.450$\pm$0.014 & 0.127$\pm$0.018 \\
\textsc{Zeno} & 0.921$\pm$0.012 & 0.787$\pm$0.014 & 0.014$\pm$0.017 & 0.110$\pm$0.015 & 0.163$\pm$0.010 & 0.089$\pm$0.014 & 0.102$\pm$0.012 & 0.066$\pm$0.018 & 0.010$\pm$0.009 & 0.091$\pm$0.011 \\
\textsc{CC}       & 0.949$\pm$0.020 & 0.772$\pm$0.019 & 0.285$\pm$0.018 & 0.114$\pm$0.020 & 0.580$\pm$0.019 & 0.155$\pm$0.020 & 0.084$\pm$0.019 & 0.014$\pm$0.020 & 0.530$\pm$0.018 & 0.154$\pm$0.020 \\
\textsc{CC+fbm}   & 0.954$\pm$0.016 & 0.808$\pm$0.020 & 0.840$\pm$0.019 & 0.716$\pm$0.014 & 0.562$\pm$0.011 & 0.151$\pm$0.020 & 0.027$\pm$0.018 & 0.123$\pm$0.015 & 0.876$\pm$0.017 & 0.594$\pm$0.013 \\
\footnotesize\textsc{CC+bucketing}  & 0.947$\pm$0.013 & 0.790$\pm$0.018 & 0.829$\pm$0.011 & 0.708$\pm$0.020 & 0.570$\pm$0.012 & 0.164$\pm$0.018 & 0.035$\pm$0.020 & 0.118$\pm$0.012 & 0.870$\pm$0.019 & 0.587$\pm$0.014 \\
\textsc{Safeguard} & \textbf{0.957$\pm$0.020} & 0.821$\pm$0.019 & 0.107$\pm$0.012 & 0.123$\pm$0.020 & 0.258$\pm$0.011 & 0.124$\pm$0.019 & 0.951$\pm$0.018 & 0.082$\pm$0.020 & 0.010$\pm$0.009 & 0.123$\pm$0.012 \\
\textsc{VR Marina} & 0.010$\pm$0.014 & 0.120$\pm$0.010 & 0.027$\pm$0.018 & 0.123$\pm$0.020 & 0.176$\pm$0.012 & 0.103$\pm$0.013 & 0.127$\pm$0.013 & 0.079$\pm$0.019 & 0.012$\pm$0.010 & 0.108$\pm$0.013 \\
\midrule
\texttt{Bant}     & 0.953$\pm$0.017 & \textbf{0.830$\pm$0.020} & \textbf{0.956$\pm$0.016} & \textbf{0.777$\pm$0.020} & \textbf{0.948$\pm$0.018} & \textbf{0.809$\pm$0.020} & 0.946$\pm$0.020 & 0.676$\pm$0.015 & \textbf{0.947$\pm$0.018} & \textbf{0.770$\pm$0.020} \\
\texttt{AutoBant} & 0.953$\pm$0.019 & 0.781$\pm$0.020 & 0.790$\pm$0.020 & 0.276$\pm$0.020 & 0.946$\pm$0.019 & 0.748$\pm$0.018 & 0.942$\pm$0.020 & 0.690$\pm$0.020 & 0.892$\pm$0.016 & 0.585$\pm$0.020 \\
\texttt{SimBant}  & 0.956$\pm$0.020 & 0.790$\pm$0.018 & 0.949$\pm$0.020 & 0.774$\pm$0.020 & 0.945$\pm$0.020 & 0.712$\pm$0.018 & \textbf{0.955$\pm$0.020} & \textbf{0.783$\pm$0.018} & 0.946$\pm$0.019 & 0.705$\pm$0.020 \\
\bottomrule
\end{tabular}
\end{adjustbox}
\caption{\textsc{ResNet1d18} on ECG (AFIB) for Byzantine-tolerance techniques under various attacks.}
\label{tab:ecggmean}
\end{table*}

\paragraph{CIFAR‑10 and ECG setups.}
For the CIFAR-10 dataset, we divide the data among 10 and 100 (see Appendix \ref{sec:appendix_experiments_cifar_stress}) clients. We consider a homogeneous split with 5,000 images per client, as well as a Dirichlet split with $\alpha=0.5$ and $\alpha=1$. We use 500 separate samples to form $\hat{f}$. For the ECG dataset, we consider five clients, each representing a hospital with 10,000 and 20,000 records. To form $\hat{f}$ on ECG, we use 100 samples from the publicly-available external PTB-XL dataset \citep{wagner2020ptb}. We solve the task of multiclass classification for CIFAR-10 and binary classification of 4 heart abnormalities for ECG: Atrial FIBrillation (AFIB), First-degree AV Block (1AVB), Premature Ventricular Complex (PVC), and Complete Left Bundle Branch Block (CLBBB).

\noindent The accuracy for all considered attacks on the CIFAR-10 test dataset are illustrated in Figure \ref{fig:cifaraccuracy}. To further stress test the proposed methods, we consider the most strong Byzantine attacks under heterogeneous setups, as well as homogeneous split under 100 clients. Figure \ref{fig:cifaraccuracy_dirichlet} shows the accuracy plots of the proposed methods with Dirichlet $\alpha=0.5$ for the ALIE and Random Gradients attacks. More details are presented in the Appendix. 

\includegraphics[width=0.95\linewidth]{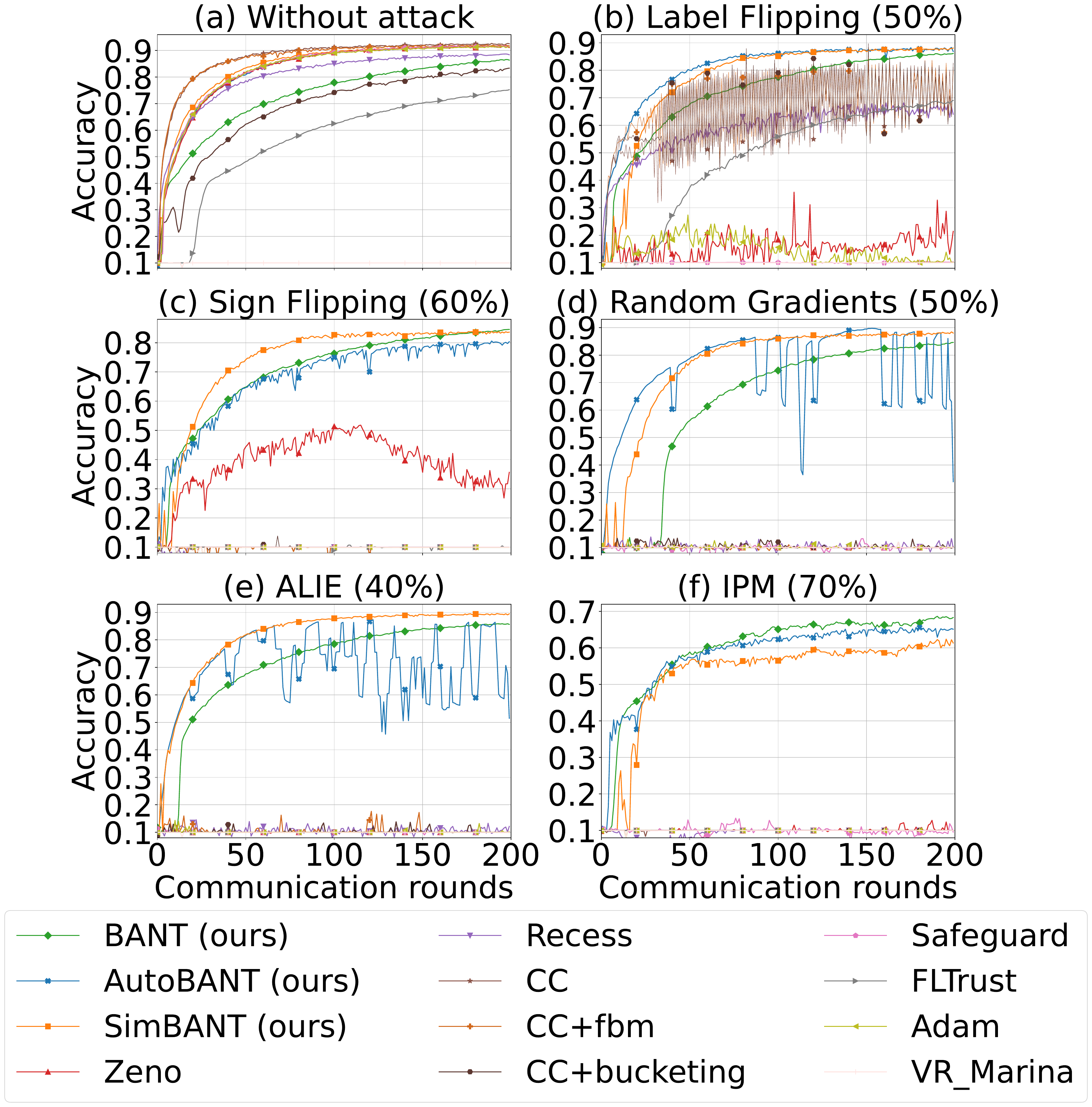}
\captionof{figure}{Test accuracy, ResNet18 on CIFAR-10.}
\label{fig:cifaraccuracy}

\includegraphics[width=\linewidth]{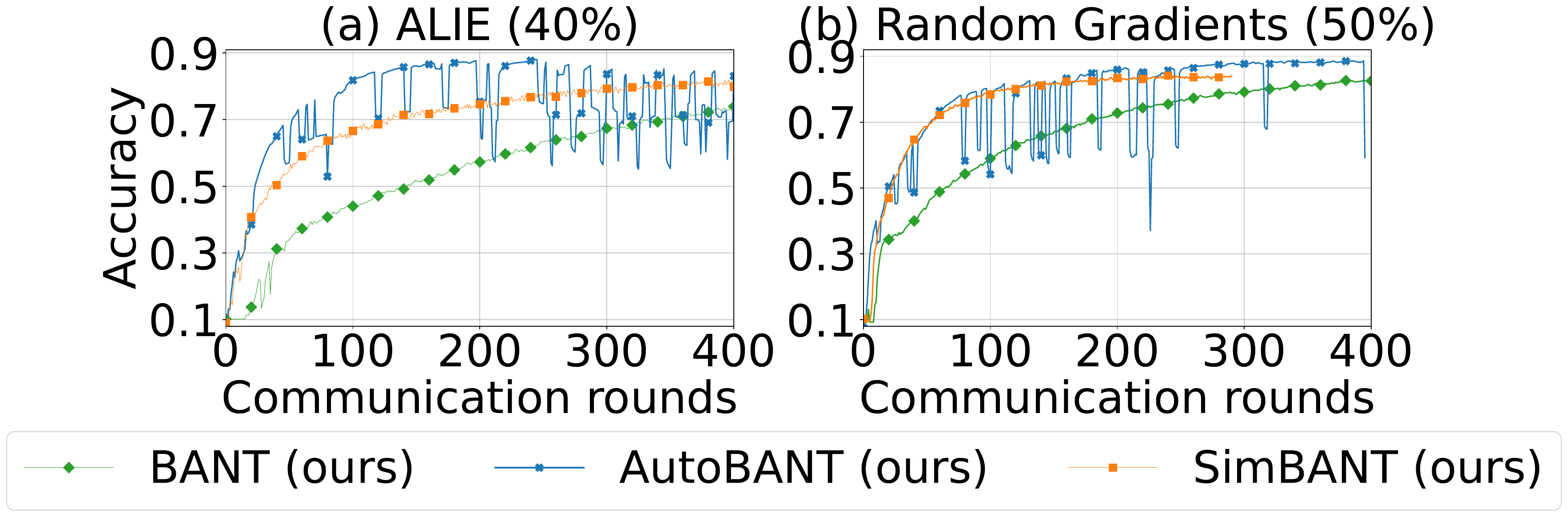}
\captionof{figure}{Test accuracy, ResNet18 on Dirichlet.}
\label{fig:cifaraccuracy_dirichlet}

\includegraphics[width=\linewidth]{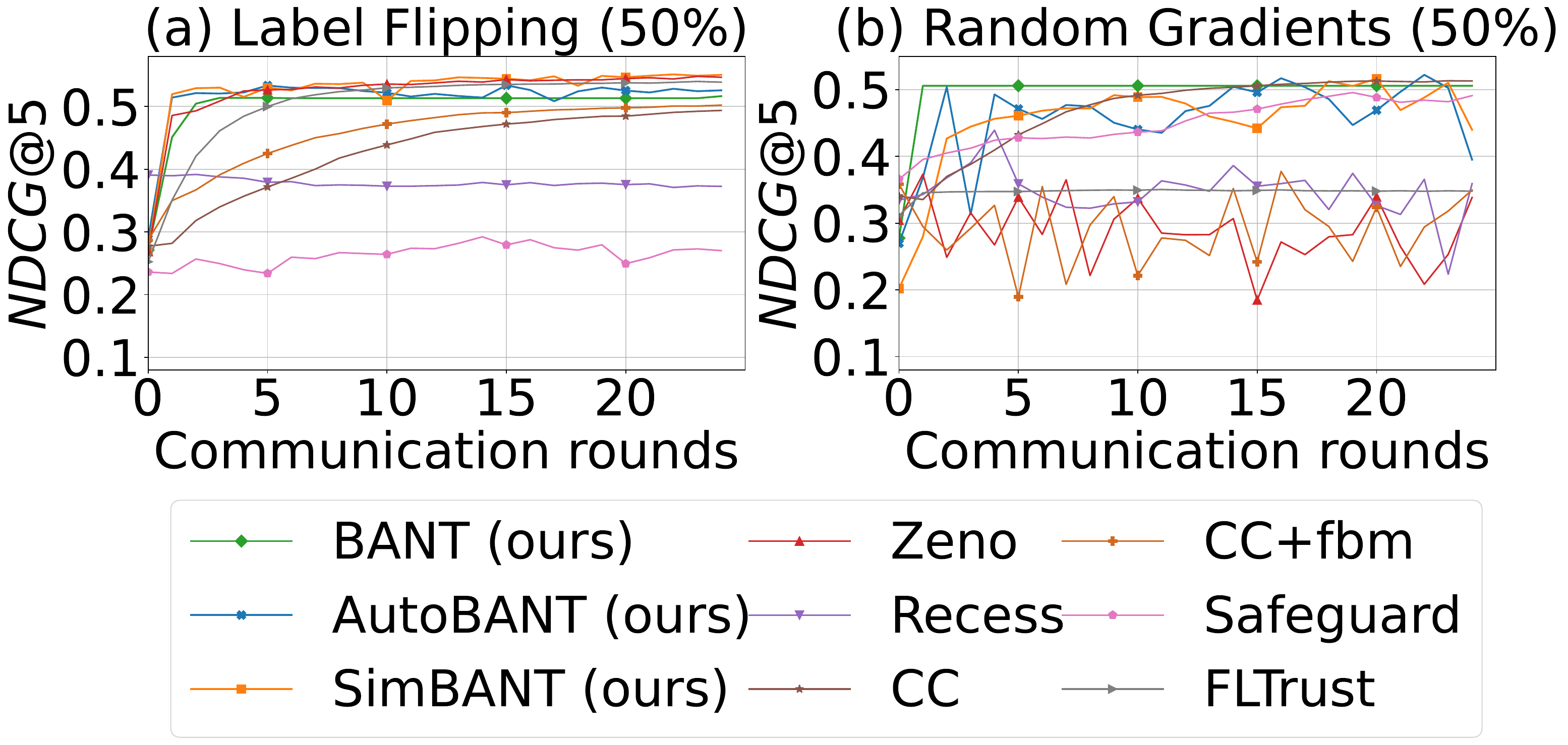}
    \caption{Test NDCG@5, Transformer on LTR task.}
    \label{fig:ranking_main}

\noindent To assess model performance on the ECG data, we use the G-mean (the square root of sensitivity multiplied by specificity) and the f1-score metrics. Table \ref{tab:ecggmean} summarizes the results of the methods for the AFIB disease classification. Detailed results for all considered abnormalities across multiple metrics are presented in Tables \ref{tab:ecgafib}-\ref{tab:ecgclbbb} in Appendix.

\noindent Unlike previously established techniques, our methods exhibit robustness against all Byzantine attacks across different benchmarks. We note that \texttt{AutoBant} performs slightly worse than \texttt{Bant} and \texttt{SimBant} under Random Gradients and ALIE attacks. This occurs due to solving an auxiliary subproblem (Line \ref{tfm:line9} in Algorithm \ref{alg1}) using \textsc{Mirror Descent} with KL-divergence. According to its properties, the algorithm assigns small but non-zero weights to Byzantines, contributing to unstable convergence, while \texttt{Bant} and \texttt{SimBant} lack this drawback. We analyze the required number of such iterations and the size of $\hat{f}$ in Appendix \ref{sec:appendix_experiments_ecg}. \textsc{Recess} and \textsc{FLTrust} leverage the concept of trust scores but rely on a majority of honest devices. As a result, this leads to a significant decrease in the final quality under the majority of Byzantines in Random Gradients, IPM, and ALIE attacks that simulate a malicious majority. Similar behavior is observed for the \textsc{CC} and \textsc{Safeguard} methods, which suffer from sensitivity to parameter tuning. \textsc{Fixing by Mixing} and \textsc{Bucketing} increase Label Flipping and ALIE metrics for ECG setup, but do not provide reliable convergence for all cases. \textsc{Zeno} exploits the trial function approach, however, it relies on the number of Byzantine clients. 
We address this in Appendix, Table \ref{tab:ecggmean_zeno}.

\paragraph{Learning-to-Rank.} In the LTR task, the goal is to learn a ranking function over query-document pairs. Each pair is represented using standard frequency-based feature vectors. The target labels correspond to human-assigned relevance scores, reflecting how well a document matches a given query.
This setting provides a natural context for exploring Byzantine robustness. For example, label flipping attacks arise from this scenario. Annotators may provide inconsistent or biased relevance assessments. Such inconsistencies reflect real-world challenges in supervised learning from human-generated data. The quality of labeling is influenced by personal biases or mistakes. We compare our methods against the baselines from prior experiments -- under the most severe attacks (Label Flipping 50\%, Random Gradients 50\%), see Figure~\ref{fig:ranking_main}.



\bibliography{refs}
\bibstyle{plainnat}
\addcontentsline{toc}{section}{References} \label{references}

\newpage

\onecolumn
\appendix
\part*{Appendix}

\allowdisplaybreaks

\section{Additional Experiments}\label{A}

\paragraph{Overview.} 
This section provides an extensive overview of additional experiments. We begin by outlining the key technical details relevant to our experimental setup, including a description of the computational resources and a complete list of hyperparameters for all compared methods. We then present a series of extended experiments across various settings.

In Section~\ref{sec:appendix_experiments_cifar}, we compare the final test accuracies of all methods under different attack scenarios, accompanied by training loss curves over epochs. We also provide a comprehensive table reporting the runtime of all methods both in the absence of attacks and under the ALIE attack.

We then address another important factor: the number of samples used in the trial function. A key question in our methodology is how the size of the local dataset on the honest device influences convergence. We demonstrate that while such an effect exists, it remains minor, and even with a small number of samples, our methods maintain strong convergence properties.

Next, we analyze how the solution quality of the inner minimization problem in \texttt{AutoBant} affects the final performance. By varying the depth of the mirror descent procedure, we obtain different levels of accuracy in solving the inner problem and assess their influence on downstream metrics.

In Section~\ref{sec:appendix_experiments_cifar_stress}, we investigate the impact of scaling the number of clients and data heterogeneity. As claimed in the main part of the paper, our methods remain effective in these stressful conditions. For the first one, we split the \textsc{CIFAR-10} training dataset between 100 clients. For the second, we simulate heterogeneity using a Dirichlet$(\alpha)$ distribution with varying values of $\alpha$.

Section \ref{sec:appendix_experiments_ecg} focuses on experiments in the ECG domain. We start by analyzing the sensitivity of the \textsc{Zeno} algorithm to its hyperparameters. Although \textsc{Zeno} performs well in the main part of the paper, we show that this is largely due to a correctly chosen estimate of the number of Byzantine clients. In scenarios where this proportion is unknown, the method fails to converge. We also provide extended results on the detection of various cardiac conditions, including AFIB, 1AVB, PVC, and CLBBB.

Finally, we present additional experiments for the Learning-to-Rank task in Section \ref{sec:appendix_experiments_ltr}. While the main part of the paper includes a comparison of our proposed methods with only the most robust baselines under strong attacks, this section offers a broader experimental validation to further support the superiority of our approach.

\subsection{Technical details} \label{app:details}

\paragraph{Compute resources.} Our implementation is developed in Python 3.10. We simulate a distributed system on a server. The server is equipped with an AMD EPYC 7513 32-Core Processor running at 2.6 GHz and $4\times$Nvidia A100 SXM4 40GB. This configuration is used for the experiments described in Section \ref{sec:experiments}.

\begin{minipage}{\textwidth}
\centering
\captionof{table}{General hyperparameter setup}
\label{tab:gen_hyp_setup}
\begin{adjustbox}{width=0.5\textwidth}

\begin{tabular}{|c|c|c|}
\toprule
\small \textbf{Hyperparameters} & \textsc{CIFAR-10}, LTR & ECG \\
\midrule
Batch Size & 32 & 64 \\
Client lr & 0.003 & 0.003 \\
Loss & Cross-Entropy (CE) & Binary CE \\
\bottomrule
\end{tabular}
\end{adjustbox}
\end{minipage}

\paragraph{Hyperparameters and strategies.} To ensure a fair comparison, we maintain consistent hyperparameters across all methods. Table \ref{tab:gen_hyp_setup} summarizes them.  The batch size is set to 32 for CIFAR-10 and 64 for the ECG dataset, with a local client learning rate of 0.003 and \textsc{adam} preconditioner. All clients perform local computations for 1 epoch. For CIFAR-10, cross-entropy was used as the local client loss function, while for ECG is was its binary version. To address the problem of imbalance of positive and negative examples in the ECG, the minority class was reweighted to the majority in the corresponding loss function. For the ECG classification task, we train the models on 10-second 12-lead ECG records, with all records resampled to a frequency of 500 Hz. Additionally, we train the models exclusively on patients older than 18 years of age.  

We also select specific parameters for the implemented methods. Table \ref{tab:method_hyp_setup} summarizes them. For the \textsc{Safeguard} method, we use window sizes of 1 and 6 for two different accumulation settings, with the threshold chosen automatically, as described in the original paper. We adapt the \textsc{CC} method to a local computation case and set the clipping coefficient $\tau = 0.1$ and  the \textsc{SGD} momentum $\beta = 0.9$ for CIFAR-10, as well as $\tau = 1$ and $\beta = 0.5$ for the ECG case. For both setups we fixed the number of clipping iterations to $l = 1$. For the \textsc{Fixing-by-Mixing} technique, we set the number of Byzantine clients $f$ to be less than half of all clients as suggested in the article: for the CIFAR-10 and LTR $f = 4$, while for the ECG $f = 2$. For the \textsc{Bucketing} technique we apply \textit{2-bucketing} strategy with global learning rate $\eta=0.9$. For the \textsc{Recess} method, we set the decrease score equal to $0.1$. For the \texttt{Bant} method, we set the momentum parameter $\beta=0.5$. The \texttt{AutoBant} method uses the number of optimization epochs equal to 5 and $\gamma = 1$. For \texttt{SimBant}, we set the softmax temperature parameter for the model logits to 0.05.

Specifically, we want to highlight the choice of hyperparameters for the \textsc{Zeno} method. We set the regularization weight $\rho = 0.0005$ as a default value in the paper. As for the threshold for defining Byzantines -- trim parameter -- we set $b = 2$ for the ECG setup, $b = 3$ for CIFAR-10 and $b = 5$ for Learning-to-Rank. We address the choice of hyperparameter $b$ in Table \ref{tab:ecggmean_zeno} in Appendix \ref{sec:appendix_experiments_ecg}, as it is critical in real-world scenarios. 

Additionally, we define distinct functions for \texttt{SimBant} based on the dataset, as ECG classification is binary, whereas CIFAR-10 is a multi-class classification problem. Specifically, for the ECG dataset, we use:
\begin{equation}\label{eq:similarity_ecg} sim_{\text{ECG}}(x, y) = 1 - |x - y|, \end{equation}
where $x$ is the output of the client model, and $y$ is the output of the model fine-tuned on the server. For the CIFAR-10 dataset, we apply cosine similarity:
\begin{equation}\label{eq:similarity_cifar} sim_{\text{CIFAR}}(x, y) = \frac{x \cdot y}{\|x\| \cdot \|y\|},
\end{equation}
where $y$ is one-hot encode targets.

\begin{table}[]
\centering
\caption{Specific hyperparameter setup}
\label{tab:method_hyp_setup}
\begin{adjustbox}{width=\columnwidth}
\begin{tabular}{|c|c|}
\hline
\textbf{Method} & \textbf{Hyperparameters} \\ 
\hline
\textsc{Safeguard} & 
\begin{tabular}[c]{lccc}
& \hspace{30mm} & Window Sizes ($T_0, T_1$) & \hspace{20mm}\\ 
• CIFAR-10, LTR: & \hspace{30mm} & $T_0 = 1$, $T_1 = 6$ & \hspace{20mm} \\
• ECG: &  & $T_0 = 1$, $T_1 = 6$ &
\end{tabular} \\ 
\hline
\textsc{Central Clip} & 
\begin{tabular}[c]{lccc}
& \hspace{8mm} Clip coefficient ($\tau$) \hspace{6mm} & \hspace{1mm} Momentum ($\beta$) & Clip iterations ($l$) \\
• CIFAR-10: &  \hspace{8mm} $\tau = 0.1$ \hspace{6mm} & \hspace{1mm} $\beta = 0.9$ & $l = 1$ \\
• ECG: & \hspace{8mm} $\tau = 1$ \hspace{6mm} & \hspace{1mm} $\beta = 0.5$ & $l = 1$ \\
\end{tabular} \\ 
\hline
\textsc{Fixing-By-Mixing} & 
\begin{tabular}[c]{lccc}
& \hspace{28mm} & Number of Byzantines ($f$) & \hspace{17mm} \\
• CIFAR-10, LTR: & \hspace{28mm} & $f = 4$ &  \hspace{17mm} \\
• ECG: & & $f = 2$ & 
\end{tabular} \\ 
\hline
\textsc{Bucketing} & 
\begin{tabular}[c]{lccc}
& \hspace{38mm} & Global Learning Rate ($\eta$) & \hspace{17mm} \\ 
• CIFAR-10: & \hspace{38mm} & $\eta = 0.9$ & \hspace{17mm} \\
• ECG: & & $\eta = 0.9$ &
\end{tabular} \\ 
\hline
\textsc{Recess} & 
\begin{tabular}[c]{lccc}
& \hspace{35mm} & Decrease Score ($d$) & \hspace{20mm} \\ 
• CIFAR-10, LTR: & \hspace{35mm} & $d = 0.1$ &  \hspace{20mm} \\
• ECG: & & $d = 0.1$ &
\end{tabular} \\ 
\hline
\textsc{Zeno} & 
\begin{tabular}[c]{lccc}
& Regularization weight ($\rho$) & \hspace{14mm} &  \hspace{6mm} Trim parameter ($b$) \\
• CIFAR-10, LTR: & \hspace{-6mm} $\rho = 0.005$ & \hspace{14mm} & \hspace{6mm} $b = 3, 5$\\
• ECG: & \hspace{-6mm} $\rho = 0.005$ & \hspace{14mm} & \hspace{6mm} $b = 2$
\end{tabular} \\ 
\hline
\hline
\texttt{Bant} & 
\begin{tabular}[c]{lccc}
& \hspace{4mm} Momentum ($\beta$) \hspace{10mm} &  \hspace{26mm} & Trial size ($ts$)  \\
• CIFAR-10, LTR: & \hspace{4mm} $\beta= 0.5$ \hspace{10mm} & \hspace{26mm} &  $ts = 500$ \\
• ECG: & \hspace{-6mm} $\beta= 0.5$ \hspace{10mm}& \hspace{26mm} & $ts = 100$
\end{tabular} \\ 
\hline
\texttt{AutoBant} & 
\begin{tabular}[c]{lccc}
& \hspace{4mm} Mirror epochs ($e$) \hspace{12mm} & Mirror $\gamma$ \hspace{8mm} & Trial size ($ts$) \\
• CIFAR-10, LTR: & \hspace{4mm} $e = 5$ \hspace{12mm} & $\gamma = 1$ \hspace{8mm} & $ts = 500$ \\
• ECG: & \hspace{4mm} $e = 5$ \hspace{12mm} & $\gamma = 1$ \hspace{8mm} & $ts = 100$
\end{tabular} \\  
\hline
\texttt{SimBant} & 
\begin{tabular}[c]{lccc}
& Softmax temperature ($T$) & Similarity function $\gamma$ & Trial size ($ts$) \\
• CIFAR-10, LTR: & $T = 0.05$ & see eq. \eqref{eq:similarity_cifar} & $ts = 500$ \\
• ECG: & $T = 0.05$ & see eq. \eqref{eq:similarity_ecg} & $ts = 100$
\end{tabular} \\ 
\hline
\end{tabular}
\end{adjustbox}
\end{table}

\subsection{CIFAR-10 Experiments} \label{sec:appendix_experiments_cifar}

\paragraph{Final accuracy.} In this section, we present supplementary data regarding the experiments conducted. As mentioned in the main part of the paper, we utilized \textsc{ResNet-18} models on the \textsc{CIFAR-10} dataset. We begin with a comparative Table \ref{tab:cifaraccuracy} of the methods applied to the CIFAR-10 dataset. 

\begin{table}[ht]
\vspace{-4mm}
\centering
\caption{\textsc{ResNet18} on CIFAR-10. Accuracy under various attacks.}
\begin{adjustbox}{width=\columnwidth}
\centering
\renewcommand{\arraystretch}{1.0}
\renewcommand{\tabcolsep}{4pt}
\scriptsize
\begin{tabular}{|
        p{0.12\columnwidth} |
        >{\centering\arraybackslash}p{0.08\columnwidth} |
        >{\centering\arraybackslash}p{0.11\columnwidth} |
        >{\centering\arraybackslash}p{0.10\columnwidth} |
        >{\centering\arraybackslash}p{0.12\columnwidth} |
        >{\centering\arraybackslash}p{0.06\columnwidth} |
        >{\centering\arraybackslash}p{0.06\columnwidth} |
        >{\centering\arraybackslash}p{0.10\columnwidth} |
        >{\centering\arraybackslash}p{0.06\columnwidth} |
    }
\toprule 
Algorithm & Without Attack & Label Flipping (50\%) & Sign Flipping (60\%) & Random Gradients (50\%) & IPM (70\%) & ALIE (40\%) & Sign Flipping (40\%) & IPM (50\%) \\ 
\midrule
\textsc{Adam} & 0.902 & 0.207 & 0.100 & 0.100 & 0.100 & 0.100 & 0.624 & 0.832 \\
\textsc{FLTrust} & 0.767 & 0.694 & 0.100 & 0.100 & 0.100 & 0.100 & 0.254 & 0.519 \\
\textsc{Recess} & 0.887 & 0.633 & 0.100 & 0.103 & 0.106 & 0.128 & 0.488 & 0.774 \\
\textsc{Zeno} & 0.910 & 0.156 & 0.410 & 0.100 & 0.100 & 0.100 & 0.838 & 0.100 \\
\textsc{CC} & \textbf{0.917} & 0.603 & 0.102 & 0.100 & 0.100 & 0.100 & 0.511 & \textbf{0.864} \\
\textsc{CC+fbm} & 0.915 & \textbf{0.887} & 0.098 & 0.100 & 0.101 & 0.100 & 0.823 & 0.923 \\
\textsc{CC+bucketing} & 0.845 & 0.818 & 0.089 & 0.101 & 0.100 & 0.101 & 0.815 & 0.100 \\
\textsc{Safeguard} & 0.918 & 0.102 & 0.100 & 0.102 & 0.104 & 0.113 & 0.826 & 0.112 \\
\textsc{VR Marina} & 0.100 & 0.100 & 0.100 & 0.100 & 0.100 & 0.100 & 0.100 & 0.100 \\
\midrule
\texttt{Bant} & 0.864 & 0.861 & \textbf{0.846} & 0.846 & \textbf{0.725} & 0.856 & \textbf{0.856} & 0.751 \\
\texttt{AutoBant} & 0.906 & 0.884 & 0.783 & \textbf{0.898} & 0.666 & \textbf{0.882} & 0.839 & 0.847 \\
\texttt{SimBant} & 0.909 & 0.855 & 0.827 & 0.865 & 0.623 & 0.878 & 0.852 & 0.827 \\
\bottomrule
\end{tabular}
\end{adjustbox}
\label{tab:cifaraccuracy}
\end{table}

This table illustrates the performance of different algorithms under various attacks, highlighting their effectiveness. We've supplemented it with scenarios of Sign Flipping ($40\%$) and IPM ($50\%$) attacks compared to the main part to demonstrate the results of baselines in less stressfull conditions. It is noteworthy that while existing methods provide some level of protection against certain attacks, none are effective when faced with a majority of malicious clients in gradient attacks, as well as in IPM ($70\%$) and ALIE ($40\%$), which simulates the majority. In contrast, all three of our methods demonstrate impressive performance in such attack scenarios. 
Furthermore, it is important to compare these results with those in the first column, which represents the metric in the absence of attacks. As we can see, our methods achieve only a slight reduction in the metric, yet they maintain relatively strong performance even under the most severe attacks. This resilience underscores the effectiveness of our approaches and their potential for real-world applications where robust defense mechanisms are crucial.

\paragraph{Decrease of loss functions.} Now we examine the graphs depicting the reduction of loss over the course of training (Figure \ref{fig:cifarloss}). The result is similar: as the number of attackers increases, the existing methods exhibit divergence, while our methods continue to decrease the loss effectively.
As mentioned in the main part of the paper, \texttt{AutoBant} may behave inconsistently under Random Gradients and ALIE attacks. This instability can be attributed to the solution of an additional minimization problem and the absence of an indicator that prevents theoretical advancement in non-convex scenarios. Nevertheless, even under these circumstances, \texttt{AutoBant} demonstrates significantly better results compared to its counterparts.

\begin{figure}[ht]
    \centering
    \includegraphics[width=0.96\textwidth]{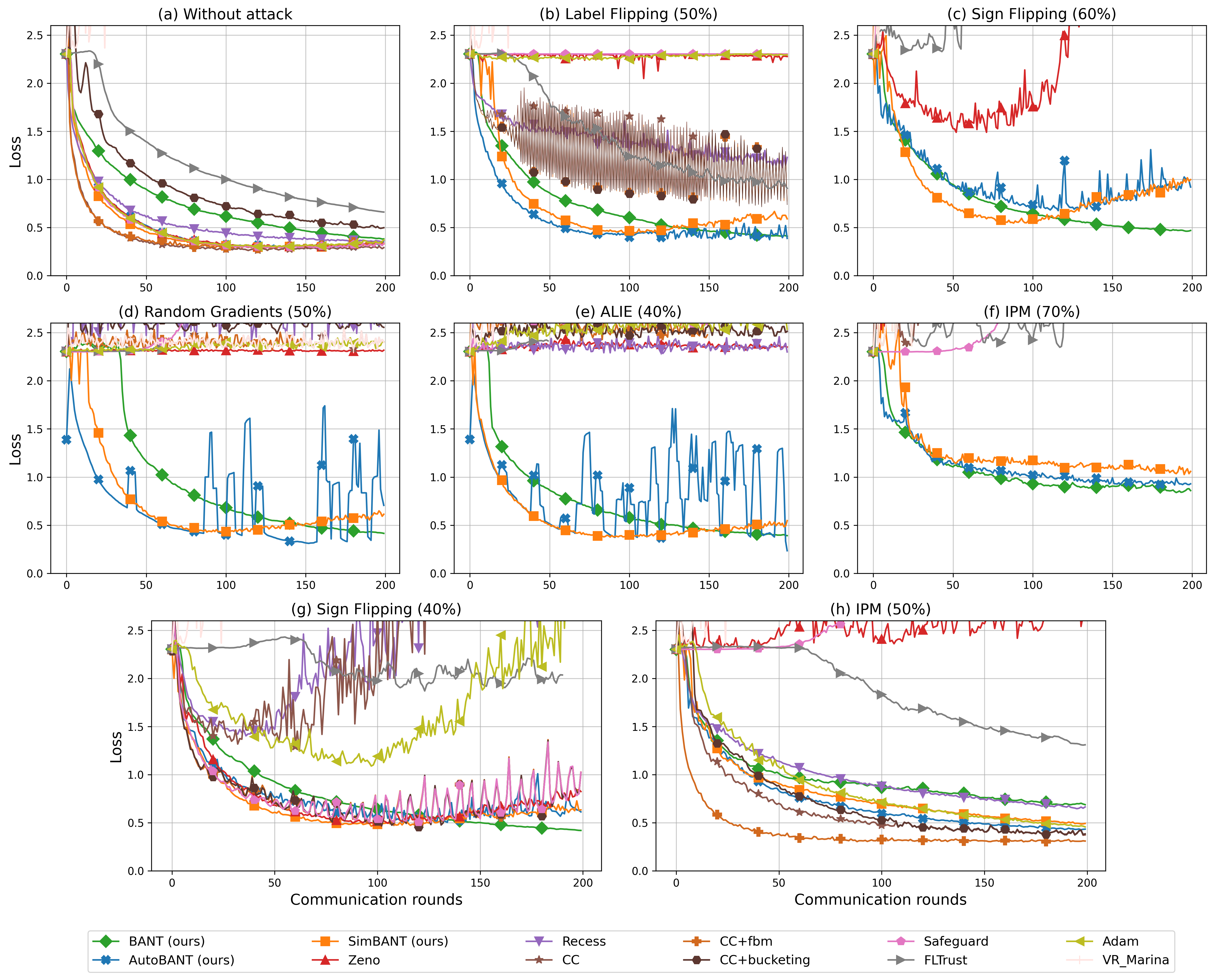}
    \caption{\textsc{ResNet18} on CIFAR-10. Loss on test for Byzantine-tolerance techniques under various attacks.}
    \label{fig:cifarloss}
\end{figure}

\begin{table}[htbp]
\centering
\captionsetup{justification=centering}
\caption{Time per communication round under ALIE attack (40\%). \textsc{ResNet18} on \textsc{CIFAR-10} and \textsc{ResNet1d18} on ECG (AFIB).}
\label{tab:time_alie_sidebyside}

\begin{minipage}{0.49\textwidth}
\centering
\captionof*{table}{\textsc{CIFAR-10} (ResNet18)}
\label{tab:cifartime}
\begin{tabular}{|l|c|}
\toprule
Algorithm & ALIE (40\%) \\
\midrule
\textsc{Adam} & 15.38 ± 0.94 \\
\textsc{FLTrust} & 36.38 ± 0.92 \\
\textsc{Recess} & 38.50 ± 3.14 \\
\textsc{Zeno} & 27.44 ± 2.36 \\
\textsc{CC} & 21.67 ± 3.14 \\
\textsc{CC+fbm} & 23.59 ± 3.98 \\
\textsc{CC+bucketing} & 21.73 ± 3.10 \\
\textsc{Safeguard} & 29.15 ± 1.16 \\
\textsc{VR Marina} & 36.96 ± 3.45 \\
\midrule
\texttt{Bant} & 27.94 ± 3.14 \\
\texttt{AutoBant} & 33.63 ± 1.74 \\
\texttt{SimBant} & 30.05 ± 4.35 \\
\bottomrule
\end{tabular}
\end{minipage}
\hfill
\begin{minipage}{0.49\textwidth}
\centering
\captionof*{table}{ECG (AFIB) (ResNet1d18)}
\label{tab:ecgtime_vertical}
\begin{tabular}{|l|c|}
\toprule
Method & ALIE (40\%) \\
\midrule
\textsc{Adam} & 32.14 ± 2.83 \\
\textsc{FLTrust} & 48.57 ± 8.17 \\
\textsc{Recess} & 49.55 ± 2.24 \\
\textsc{Zeno} & 34.23 ± 4.22 \\
\textsc{CC} & 43.36 ± 3.47 \\
\textsc{CC + fbm} & 45.84 ± 4.41 \\
\textsc{CC + bucketing} & 47.13 ± 3.56 \\
\textsc{Safeguard} & 35.67 ± 2.34 \\
\textsc{VR Marina} & 45.13 ± 6.41 \\
\midrule
\texttt{Bant} (ours) & 36.10 ± 2.63 \\
\texttt{AutoBant} (ours) & 41.34 ± 1.03 \\
\texttt{SimBant} (ours) & 36.24 ± 3.82 \\
\bottomrule
\end{tabular}
\end{minipage}
\end{table}

\paragraph{Time measurement.} Tables above report the average time per communication round, including standard deviation for various federated learning algorithms using \textsc{ResNet18} on \textsc{CIFAR-10} and  \textsc{ResNet1d18} on ECG (AFIB) in two settings: Without attack and ALIE. We see that \textsc{Recess} took the longest time, since it requires double local calculations of the client in one round of communication. \texttt{AutoBant} requires a little more time compared to baselines, except for the \textsc{Recess} and \textsc{FLTrust} methods. We attribute this to the solution of the auxiliary task Algorithm \ref{sec:mainscaled}, Line 10), which is slightly longer than in the ECG setup and has a greater impact on the execution time. However, it is not expected that the size of $\hat{f}$ increases significantly as the system scales. Thus, this contribution is negligible in real-world scenarios, as demonstrated by the ECG experiment in Table \ref{tab:ecgtime_vertical}. In addition, these computations are performed on the central node, which has more computational resources in the federated learning paradigm. All of this reflects the applicability of the proposed methods in real-world setups.

\paragraph{Sensitivity to Trial Set Size.}

The trial dataset plays a central role in all proposed methods, serving as a reference for evaluating client gradients via the surrogate loss $\hat f$. While our theoretical analysis suggests that the impact of finite sampling is mild (via $\zeta(N)$), it remains essential to validate this empirically -- particularly for small $N$, which is desirable in privacy-sensitive or resource-constrained settings. A robust method should maintain performance even when the server has access to only a limited trial set. To this end, we investigate the impact of the trial set size $N$ on convergence and stability for \texttt{Bant}, \texttt{SimBant}, and \texttt{AutoBant}. In the main experiments, we use $N = 500$. Here, we vary $N \in \{100,150,200,250,500,1000\}$.

\texttt{Bant} and \texttt{SimBant} demonstrate stable convergence across all values of $N$, even as low as 100, confirming their robustness to trial set sampling. \texttt{AutoBant}, however, exhibits higher sensitivity. At $N = 100$, convergence breaks down entirely, and even for $N \leq 200$, we observe increased variance and less stable updates. Nonetheless, performance remains strong in the range $N = 250$–$1000$, with fluctuations that do not degrade the overall results.
This behavior aligns with the design of \texttt{AutoBant}, which solves an optimization problem over client weights using noisy evaluations of the surrogate objective $\hat f$. When $N$ is too small, noise dominates, leading to unstable direction selection. In contrast, the trust-averaged updates in \texttt{Bant} and \texttt{SimBant} mitigate this effect and remain effective even under highly reduced supervision.

Nevertheless, we address the issue related to unstable minimization problem solving in the \texttt{AutoBant} algorithm below.

\begin{figure}[h]
    \centering
    \includegraphics[width=\textwidth]{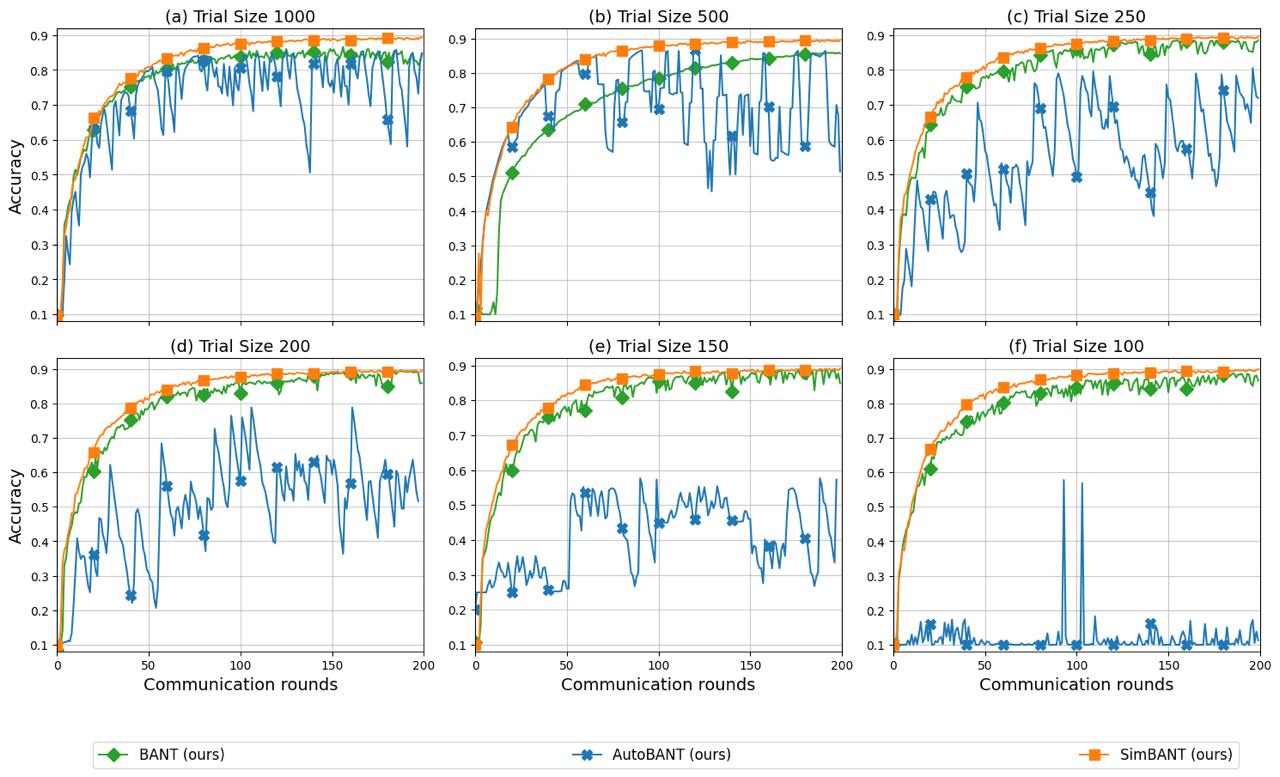}
    \caption{Test accuracy for \textsc{ResNet18} on \textsc{CIFAR-10} with different number of samples to obtain trial data.}
    \label{fig:trial_sample_size}
    \vspace{-4mm}
\end{figure}

\begin{minipage}{\columnwidth}
\centering
\includegraphics[width=0.5\linewidth]{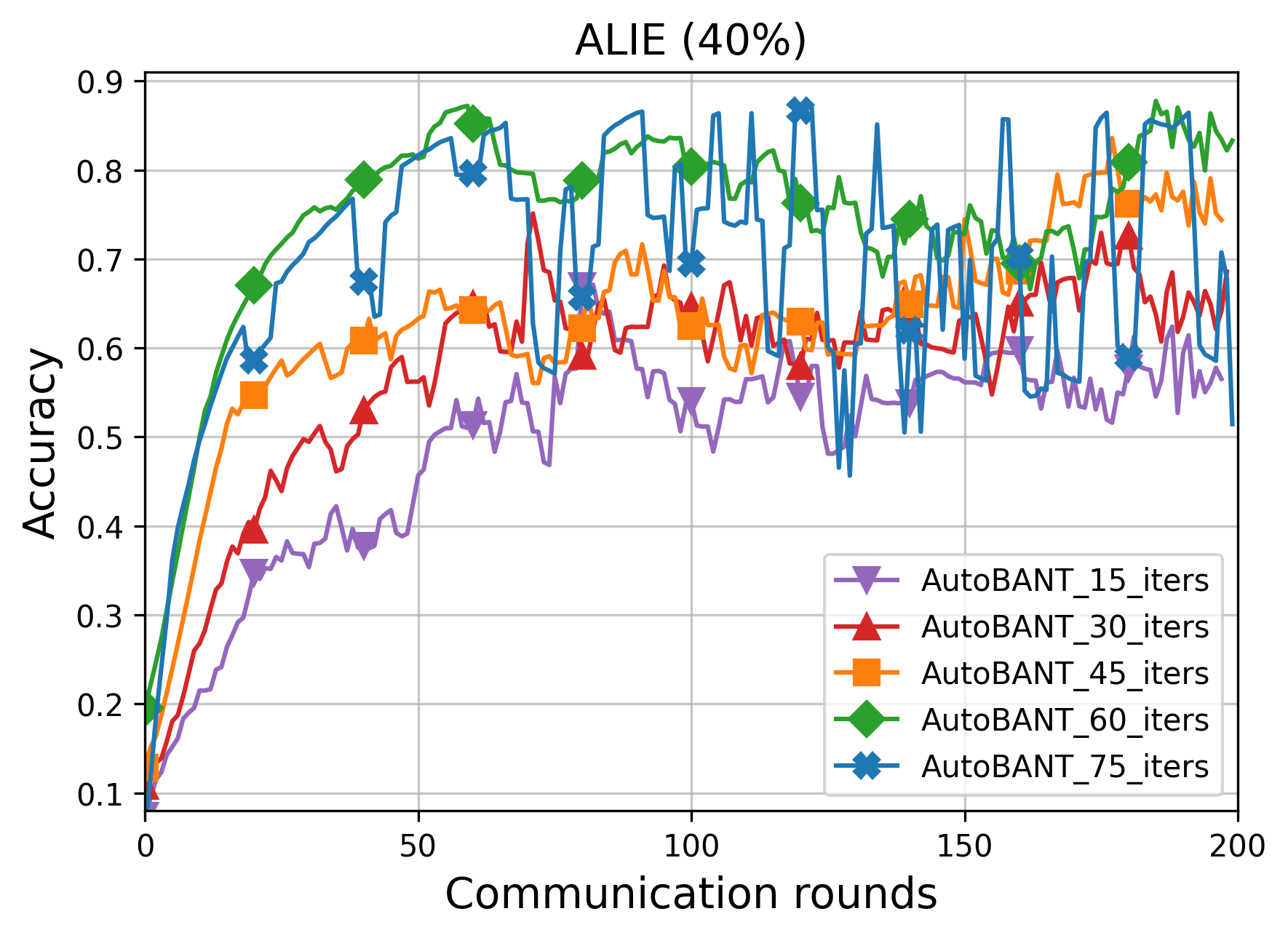}
\captionof{figure}{Number of Mirror Descent steps impact on global convergence}
\label{fig:delta_error}
\end{minipage}

\newpage
\paragraph{Study on precision of the auxiliary minimization task  in \texttt{AutoBant}.} 
\texttt{AutoBant} introduces an adaptive weighting mechanism via an optimization step over the client weight simplex. The quality of this step is controlled by the number of iterations in the mirror descent routine, corresponding to a target precision $\delta$. From a practical perspective, this parameter governs a trade-off between computational cost and stability. On the one hand, too coarse a solution may lead to noisy updates. On the other hand, the method may overfit transient fluctuations in the trial loss. Therefore, it is important to assess how sensitive the method is to this optimization accuracy and whether reliable convergence is retained under realistic constraints. In our main experiments, we set the number of mirror descent steps to $T=75$. Here, we vary $T \in \{15, 30, 45, 60, 75\}$.

\begin{minipage}{\textwidth}
\centering
\captionof{table}{Accuracy and training time from the Number of Mirror Descent steps.}
\label{tab:delta_error_time}
\begin{adjustbox}{width=0.4\linewidth}
\begin{tabular}{|c|c|c|c|}
\toprule
\textbf{T} & \textbf{Accuracy} & \textbf{Mirror Time} \\
\midrule
15 & 0.565 & \textbf{18.35} ± 4.19 \\
30 & 0.686 & 27.62 ± 5.11 \\
45 & 0.744 & 37.58 ± 3.16 \\
60 & \textbf{0.833} & 50.43 ± 4.25 \\
75 & 0.715 & 67.27 ± 3.49 \\
\bottomrule
\end{tabular}
\end{adjustbox}
\end{minipage}

We mention that due to the inherent stochasticity arisen from solving a convex minimization problem over the client weight simplex, the method is potentially unstable. To mitigate this, we adopt a natural stabilization strategy in our experimental setup. Thus, we leverage the history of client weights over the last round via momentum mechanism, thereby smoothing abrupt shifts that may result from sample-level variability. This prevents the algorithm from over-committing to transiently favorable clients and promotes consistent progress.

We observe that:
\begin{itemize}[leftmargin=*]
    \item At $T=15$ and $T=30$, a low deterioration in convergence occurs.
    \item At $T=45$ and $T=60$, convergence improves significantly, yielding smooth and high-quality updates.
    \item At $T=75$, performance becomes less stable due to overfitting to a single dominant client, a known issue when mirror descent pushes weights too aggressively toward one vertex of the simplex.
\end{itemize}

Despite this instability, convergence is preserved, and the method still outperforms baselines. Moreover, according to our convergence analysis (see Theorem \ref{TFMSGD}), the error introduced by inexact minimization--quantified by the optimization precision parameter $\delta$--only enters as a second-order additive term and does not fundamentally affect the convergence rate. As evident from Table \ref{tab:delta_error_time}, reducing the number of mirror descent iterations directly corresponds to a lower computational overhead, with $T=45$ requiring approximately $45\%$ less computation time than $T=75$ while still maintaining reasonable accuracy. This trade-off is particularly valuable for resource-constrained server node where computation time can bottleneck synchronization, affecting overall system efficiency. These findings highlight that \texttt{AutoBant} benefits from moderate optimization depth and additional smoothing over rounds, confirming that its performance can be controlled through simple and intuitive mechanisms without excessive parameter tuning.


\subsection{Stress Testing conditions} \label{sec:appendix_experiments_cifar_stress}

To further test the proposed methods, we examine the stress conditions of the experiments. For this purpose, the strongest Byzantine attacks (Random Gradients, ALIE, IPM) were considered, in which only \texttt{Bant}-like methods show resistance. The first experiment splits \textsc{CIFAR-10} homogeneously among 100 clients, while the second splits heterogeneously among 10 clients using the Dirichlet distribution.

\paragraph{\textsc{CIFAR-10} on 100 clients.} To test the robustness of the methods in a scalable setup, we split CIFAR-10 homogeneously among 100 clients. In this case, each client has less local data and its results are less representative compared to the Byzantine, which affects our methods when measuring $\hat{f}$. Due to computational challenges, we test only the ALIE (40\%) attack. Figure \ref{fig:cifar_100client} illustrates the convergence results on the test part of the dataset. 

\begin{minipage}{\columnwidth}
\centering
\includegraphics[width=0.5\linewidth]{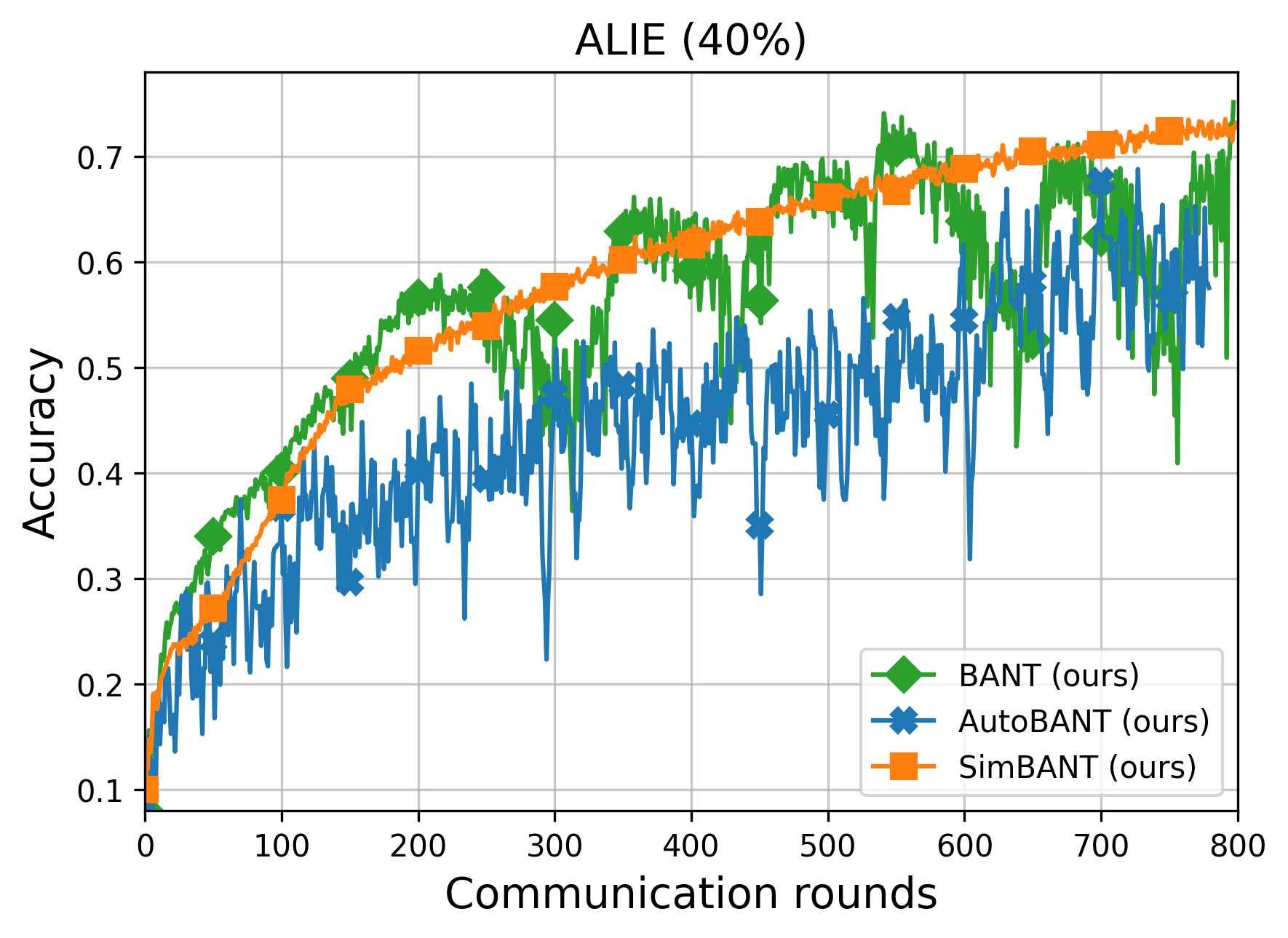}
\captionof{figure}{Test Accuracy for \textsc{ResNet18} on \textsc{CIFAR-10} with 100 clients.}
\label{fig:cifar_100client}
\end{minipage}

Comparing with the corresponding plot in Figure \ref{fig:cifaraccuracy}, we observe visible changes for the \texttt{Bant} and \texttt{AutoBant} methods. However, the effects described below are due to the complication of an already extreme setup. All methods demonstrate fundamental robustness, which was the goal of this experiment. More noisy results for \texttt{AutoBant} are related to solving an additional subproblem on a higher-dimensional simplex. As we mentioned in the main part of the paper, the method produces small non-zero trust scores for Byzantine clients, which is further highlighted when the number of clients increases. The \texttt{Bant} method directly compares the Byzantines to the global state of the model, so we observe periodic dips during federated training. All methods are trained over 800 rounds and exhibit slower convergence.

\paragraph{Dirichlet Scenario.} To examine the robustness of the proposed methods under a heterogeneous setup, we perform Byzantine attacks on \textsc{CIFAR-10} with Dirichlet distribution of client data \citep{yurochkin2019bayesian, wang2020federated, fedmix}. We partition each global dataset among \(n\) clients according to Dirichlet distribution with concentration parameter \(\alpha\). As \(\alpha\) decreases, the distributions on different clients become more skewed, which empirically increases both the inter‐client gradient variance and the bias of each client’s expected gradient from the true global gradient.  Concretely, under a Dirichlet(\(\alpha\)) partition, one observes that $\delta_1$ and $\delta_2$
grow monotonically as \(\alpha \to 0\) (stronger non‑IID) and shrink as \(\alpha \to \infty\) (nearly IID).



\begin{figure}[h]
    \centering
    \includegraphics[width=0.8\textwidth]{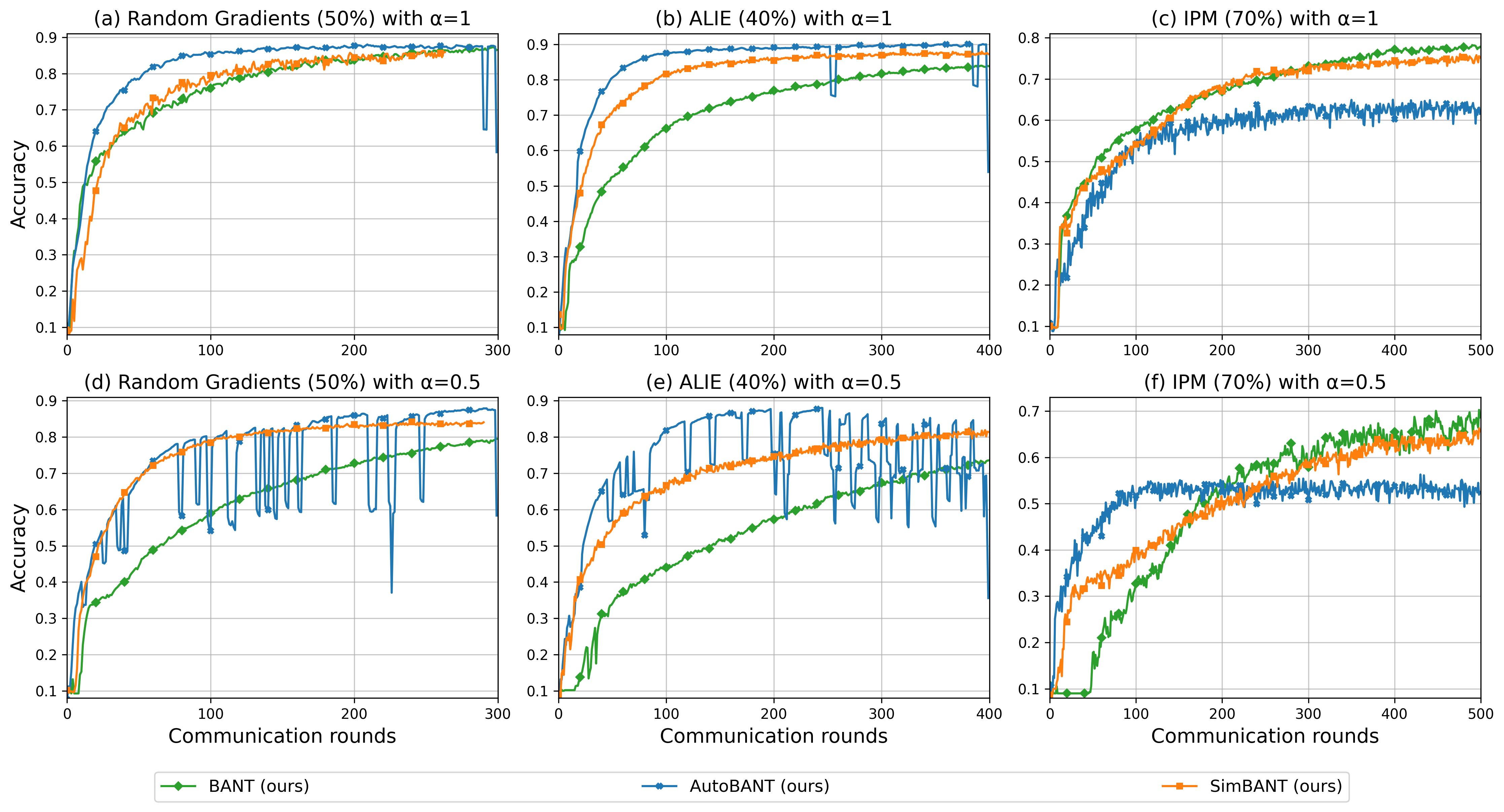}
    \caption{Test accuracy for \textsc{ResNet18} on \textsc{CIFAR-10} with Dirichlet heterogeneity.}
    \label{fig:cifardirichlet_acc_app}
\end{figure}

We note that $\mathbf{p}_k \sim \text{Dir}(\alpha \cdot \mathbf{1}_n)$ represents distribution of samples with label $k = \overline{1, K}$ over $n$ clients, i.e. $\mathbf{p}_k \in S_n(1)$ where $S_n(1)$ is the standard $n$ unit simplex.
We address two heterogeneity regimes: medium with $\alpha=1$ and strong with $\alpha=0.5$. Since the goal of the experiment is to test the robustness of the proposed methods, we consider the 3 strongest attacks in our setup: Random Gradient (50\%), ALIE (40\%), and IPM (70\%).
Figure \ref{fig:cifardirichlet_acc_app}, \ref{fig:cifardirichlet_loss_app} illustrates the performance of the \texttt{Bant}-based methods.

In the presence of Dirichlet‐induced heterogeneity and high‐fraction Byzantine attacks, the proposed automated defenses exhibit fast convergence and generalization compared to the original Bant protocol. As shown in Figures \ref{fig:cifardirichlet_acc_app} (a - c) and \ref{fig:cifardirichlet_loss_app} (a - c), under mild skew ($\alpha=1$) and Random Gradient, ALIE, or IPM attacks, AutoBant drives test accuracy above 80\% and reduces test loss below 0.7 in roughly 100 communication rounds, approximately half the rounds required by Bant, while SimBant achieve intermediate performance. 

As illustrated in Figures \ref{fig:cifardirichlet_acc_app} (d - f) and \ref{fig:cifardirichlet_loss_app} (d - f), when the heterogeneity is intensified ($\alpha=0.5$), Bant’s accuracy curves develop large oscillations and its loss decay slows substantially, whereas both AutoBant and SimBant maintain smooth, rapid descent to peak accuracies of 85–88\% and asymptotic losses in the 0.5–0.9 range.

\begin{figure}[H]
    \centering
    \includegraphics[width=0.8\textwidth]{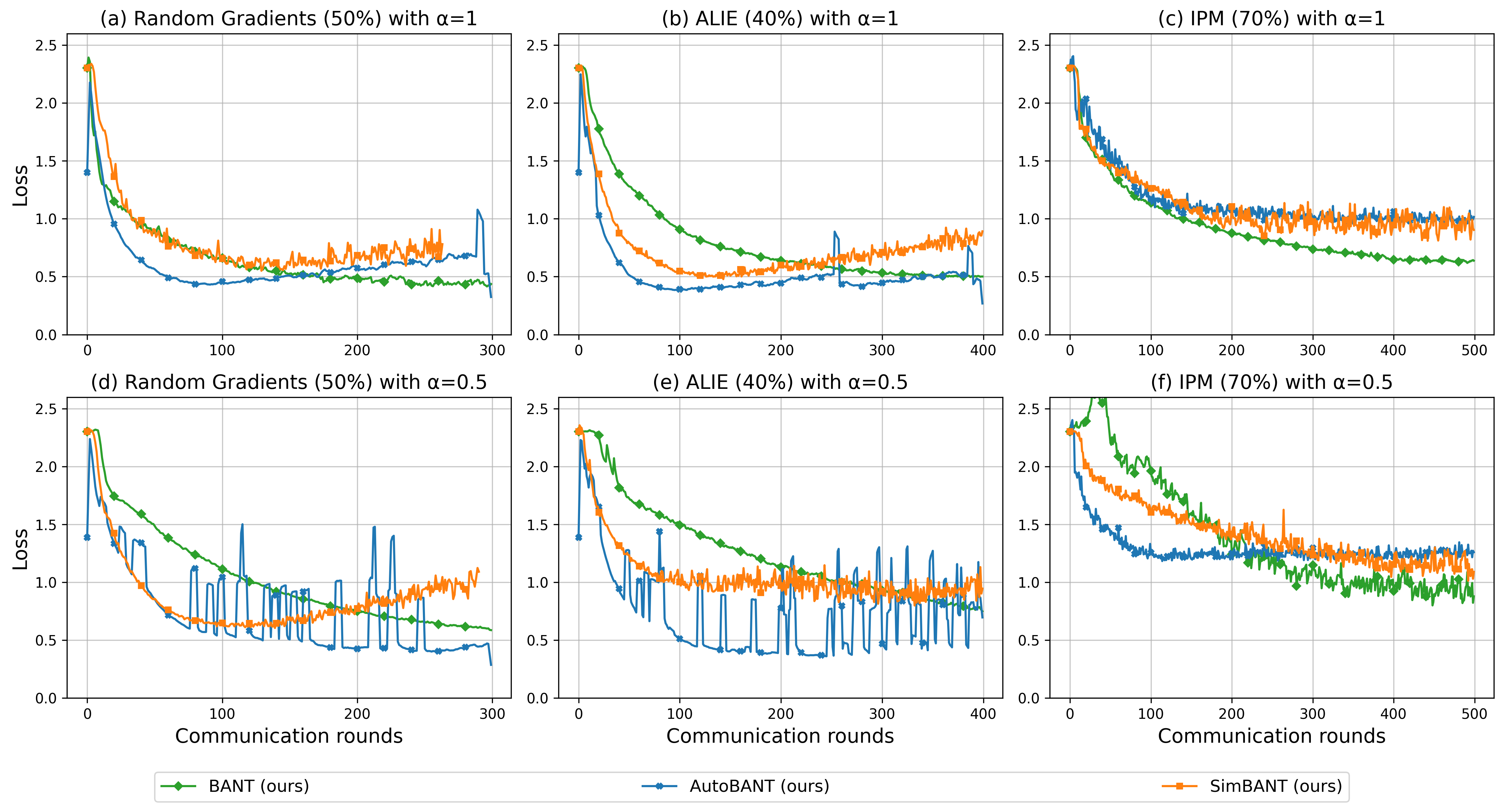}
    \caption{Test loss for \textsc{ResNet18} on \textsc{CIFAR-10} with Dirichlet heterogeneity.}
    \label{fig:cifardirichlet_loss_app}
\end{figure}



\subsection{ECG Experiments} \label{sec:appendix_experiments_ecg}

\paragraph{On \textsc{Zeno}'s sensitivity to hyperperameters.}
Now, let us move on to the choice of the trim hyperparameter $b$ in the \textsc{Zeno} method. This parameter cuts off the number of clients that will not be used in the aggregation of the global model in the training round. The ranking of clients is based on $Score_{\gamma, \rho}$, which exploits the trial function approach. In the paper, it is assumed that $b \geqslant q$, where $q$ is the number of Byzantines participating in the training. Thus, the method provides protection against an arbitrary number of malicious participants and can be stable in critical attack setups. However, in real-world scenarios, we do not have a priori information about the number of Byzantines. As a result, \textsc{Zeno} does not show comparable quality metrics in the ECG setup for $b=2$ compared to the \texttt{Bant}-like methods, which is reflected in Table \ref{tab:ecggmean}.
We address the issue of choosing this hyperparameter in the ECG setup in Table \ref{tab:ecggmean_zeno}. As can be seen from the results, a suitably selected trim coefficient value of $b=3$, which cuts off $60\%$ of Byzantine clients, shows comparable results with the \texttt{Bant}-like methods in all considered attack scenarios. Cutting 80\% of Byzantines leads to a slightly lower quality, which also highlights the hyperparameter sensitivity.


\begin{table}[t]
\centering
\caption{\textsc{ResNet1d18} on ECG (AFIB). G-mean and f1-score for Zeno under various attacks. The value in parentheses indicates the assumed number of Byzantines.}
\label{tab:ecggmean_zeno}
\resizebox{\textwidth}{!}{
\begin{tabular}{|p{2cm} | c c |  c c | c c | c c | c c | c c | c c | c c}
\toprule 
Algorithm & 
\multicolumn{2}{c|}{\begin{tabular}{c|c} \multicolumn{2}{c}{Without Attack} \\ \hline G-mean & f1-score \end{tabular}} &
\multicolumn{2}{c|}{\begin{tabular}{c|c} \multicolumn{2}{c}{Label Flipping (60\%)} \\ \hline G-mean & f1-score \end{tabular}} &
\multicolumn{2}{c|}{\begin{tabular}{c|c} \multicolumn{2}{c}{Random Gradients (60\%)} \\ \hline G-mean & f1-score \end{tabular}} &
\multicolumn{2}{c|}{\begin{tabular}{c|c} \multicolumn{2}{c}{IPM (80 \%)} \\ \hline G-mean & f1-score \end{tabular}} &
\multicolumn{2}{c|}{\begin{tabular}{c|c} \multicolumn{2}{c}{ALIE (40 \%)} \\ \hline G-mean & f1-score \end{tabular}} \\
\midrule
\textsc{Zeno (80\%)} & 0.860$\pm$0.016 & 0.707$\pm$0.015 & 0.735$\pm$0.023 & 0.619$\pm$0.010 & 0.824$\pm$0.018 & 0.696$\pm$0.014 & 0.930$\pm$0.011 & 0.715$\pm$0.023 & 0.930$\pm$0.017 & 0.705$\pm$0.013 \\
\textsc{Zeno (60\%)} & \textbf{0.953}$\pm$0.018 & \textbf{0.806}$\pm$0.017 & \textbf{0.950}$\pm$0.020 & \textbf{0.753}$\pm$0.019 & \textbf{0.954}$\pm$0.020 & \textbf{0.770}$\pm$0.017 & \textbf{0.945}$\pm$0.017 & \textbf{0.730}$\pm$0.020 & \textbf{0.946}$\pm$0.015 & \textbf{0.717}$\pm$0.016 \\
\hline
\textsc{Zeno} & 0.921$\pm$0.012 & 0.787$\pm$0.014 & 0.014$\pm$0.017 & 0.110$\pm$0.015 & 0.163$\pm$0.010 & 0.089$\pm$0.014 & 0.102$\pm$0.012 & 0.066$\pm$0.018 & 0.010$\pm$0.009 & 0.091$\pm$0.011 \\
\bottomrule
\end{tabular}
}

\end{table}

\paragraph{Metrics for all abnormalities.} Here, we present all the obtained metric results for the four heart abnormalities we investigated: Atrial FIBrillation (AFIB), First-degree AV block (1AVB), Premature Ventricular Complex (PVC), and Complete Left Bundle Branch Block (CLBBB). In Table \ref{tab:ecg2attacks}, we show results for AFIB pathology under 2 additional attacks scenarios: Sign Flipping $(60\%)$ and IPM $(60\%)$. In addition, we compare all methods for the attacks described in \ref{sec:experiments} section in Tables \ref{tab:ecgafib} - \ref{tab:ecgclbbb}. To obtain confidence intervals in Tables \ref{tab:ecggmean}, \ref{tab:ecggmean_zeno}, \ref{tab:ecg2attacks}, each run was repeated $5$ times. We do not report the variance of metrics for the 1AVB, PVC and CLBBB pathologies in Tables \ref{tab:ecgavb}-\ref{tab:ecgclbbb} due to overabundance and computational factor.

\begin{table}[H]
\centering
\caption{\textsc{ResNet1d18} on ECG (AFIB). G-mean and f1-score for Byzantine-tolerance techniques under 2 attacks.}
\label{tab:ecg2attacks}
\resizebox{0.5\textheight}{!}{
\begin{tabular}{|p{2cm} | c c |  c c | c c | c c | c c | c c | c c | c c}
\toprule 
Algorithm & 
\multicolumn{2}{c|}{\begin{tabular}{c|c} \multicolumn{2}{c}{Sign Flipping (60 \%)} \\ \hline G-mean & f1-score \end{tabular}} &
\multicolumn{2}{c|}{\begin{tabular}{c|c} \multicolumn{2}{c}{IPM (60\%)} \\ \hline G-mean & f1-score \end{tabular}}
 \\
\midrule
\textsc{Adam}     & 0.304$\pm$0.015 & 0.116$\pm$0.018 & 0.952$\pm$0.014 & 0.738$\pm$0.011  \\
\textsc{FLTrust}  & 0.586$\pm$0.018 & 0.179$\pm$0.015 & 0.011$\pm$0.014 & 0.123$\pm$0.013  \\
\textsc{Recess}   & 0.359$\pm$0.018 & 0.115$\pm$0.014& 0.933$\pm$0.017 & 0.611$\pm$0.018  \\
\textsc{Zeno}     & 0.017$\pm$0.016 & 0.130$\pm$0.018& 0.010$\pm$0.018 & 0.140$\pm$0.020  \\
\textsc{CC}       & 0.479$\pm$0.020 & 0.124$\pm$0.017 & 0.945$\pm$0.020 & 0.710$\pm$0.016  \\
\textsc{CC+fbm}   & 0.155$\pm$0.016 & 0.124$\pm$0.017 & 0.948$\pm$0.018 & 0.695$\pm$0.020  \\
\footnotesize\textsc{CC+bucketing}   & 0.137$\pm$0.017& 0.119$\pm$0.010 & 0.944$\pm$0.016 & 0.689$\pm$0.013\\
\textsc{Safeguard} & 0.084$\pm$0.016 & 0.014$\pm$0.018 & 0.109$\pm$0.010 & 0.123$\pm$0.016 \\
\textsc{VR Marina} & 0.096$\pm$0.017& 0.078$\pm$0.019 & 0.098$\pm$0.011& 0.110$\pm$0.014\\
\midrule
\texttt{Bant}     & 0.943$\pm$0.019 & \textbf{0.792$\pm$0.019}& 0.949$\pm$0.020 & 0.704$\pm$0.017 \\
\texttt{AutoBant} & 0.737$\pm$0.020 & 0.243$\pm$0.019 & 0.948$\pm$0.018 & 0.695$\pm$0.015 \\
\texttt{SimBant}  & \textbf{0.951$\pm$0.020} & 0.760$\pm$0.020 & \textbf{0.965$\pm$0.017} & \textbf{0.753$\pm$0.020}\\
\bottomrule
\end{tabular}
}

\end{table}

\newpage
\begin{table}[H]
\begin{minipage}{0.4\textwidth}
    \centering
    \captionof{table}{\textsc{ResNet1d18} on ECG (AFIB).}
\label{tab:ecgafib} 
  \begin{threeparttable}
  \resizebox{\textwidth}{!}{
    \begin{tabular}{l|l|c|c|c|c}
    \toprule
    \multicolumn{1}{c|}{} & Algorithm & Sensitivity & Specificity & G-mean & f1-score
    \\
    \midrule
    \multirow{6}{*}{\rotatebox[origin=c]{90}{\parbox[c]{3.5cm}{\large Without Attack}}} 
    & \textsc{Adam} & \textit{0.940} & 0.974 & \textit{0.956}& 0.811 \\
    & \textsc{FLTrust} & 0.932 & 0.972 & 0.952 & 0.800 \\
    & \textsc{Recess} & 0.929 & 0.969 & 0.949 & 0.783 \\
    & \textsc{Zeno} & 0.890 & 0.952 & 0.921 & 0.787 \\
    & \textsc{CC} & 0.932 & 0.966 & 0.949 & 0.772 \\
    & \textsc{CC+fbm} & 0.930 & 0.978 & 0.954 & 0.808 \\
    & \textsc{CC+bucketing} & 0.910 & 0.985 & 0.947 & 0.790 \\
    & \textsc{Safeguard} & \textit{0.940}& \textit{0.976} & \textbf{0.957} & \textit{0.821} \\
    & \textsc{VR Marina} & 0.150 & 0.001 & 0.010 & 0.120 \\
    & \texttt{Bant} & 0.929 & \textbf{0.978 }& 0.953 & \textbf{0.830} \\
    & \texttt{AutoBant} & \textit{0.940 }& 0.967 & 0.953 & 0.781 \\
    & \texttt{SimBant} & \textbf{0.943} & 0.969 & \textit{0.956} & 0.790 \\
    \midrule
    \multirow{6}{*}{\rotatebox[origin=c]{90}{\parbox[c]{3.75cm}{\large Label Flip (60\%)}}}
    & \textsc{Adam} & 0.089 & 0.774 & 0.262 & 0.041 \\
    & \textsc{FLTrust} & 0.943 & 0.961 & \textit{0.952} & 0.753 \\
    & \textsc{Recess} & 0.146 & 0.920 & 0.366 & 0.128 \\
    & \textsc{Zeno} & 0.160 & 0.001 & 0.014 & 0.110 \\
    & \textsc{CC} & 0.840 & 0.097 & 0.285 & 0.114 \\
    & \textsc{CC+fbm} & 0.880 & 0.801 & 0.840 & 0.716 \\
    & \textsc{CC+bucketing} & 0.870 & 0.789 & 0.829 & 0.708 \\
    & \textsc{Safeguard} & 0.018 & 0.063 & 0.107 & 0.123 \\
    & \textsc{VR Marina} & 0.210 & 0.003 & 0.027 & 0.123 \\
    & \texttt{Bant} & \textit{0.947} & \textit{0.966} & \textbf{0.956} & \textbf{0.777} \\
    & \texttt{AutoBant} & \textbf{0.964} & 0.647 & 0.790 & 0.276 \\
    & \texttt{SimBant} & 0.932 & \textbf{0.967} & 0.949 & \textit{0.774} \\ 
    \midrule
    \multirow{6}{*}{\rotatebox[origin=c]{90}{\parbox[c]{3.7cm}{\large Sign Flip (60\%)}}}
    & \textsc{Adam} & 0.096 & 0.961 & 0.304 & 0.116 \\
    & \textsc{FLTrust} & 0.466 & 0.738 & 0.586 & 0.179 \\
    & \textsc{Recess} & 0.142 & 0.906 & 0.359 & 0.115 \\
    & \textsc{Zeno} & 0.230 & 0.001 & 0.017 & 0.130\\
    & \textsc{CC} & 0.328 & 0.701 & 0.479 & 0.124 \\
    & \textsc{CC+fbm} & 0.270 & 0.089 & 0.155 & 0.124 \\
    & \textsc{CC+bucketing} & 0.250 & 0.075 & 0.137 & 0.119 \\
    & \textsc{Safeguard} & 0.007 & \textbf{0.998} & 0.084 & 0.014 \\
    & \textsc{VR Marina} & 0.150 & 0.061 & 0.096 & 0.078 \\
    & \texttt{Bant} & \textit{0.915} & \textit{0.972} & \textit{0.943} & \textbf{0.792} \\
    & \texttt{AutoBant} & 0.854 & 0.636 & 0.737 & 0.243 \\
    & \texttt{SimBant} & \textbf{0.940} & 0.963 & \textbf{0.951} & \textit{0.760}
    \\
    \midrule
    \multirow{6}{*}{\rotatebox[origin=c]{90}{\parbox[c]{4.1cm}{Random Gradients (60\%)}}}
    & \textsc{Adam} & 0.136 & 0.893 & 0.348 & 0.126 \\
    & \textsc{FLTrust} & 0.623 & 0.611 & 0.617 & 0.174 \\
    & \textsc{Recess} & 0.562 & 0.626 & 0.593 & 0.163 \\
    & \textsc{Zeno} & 0.300 & 0.088 & 0.163 & 0.089 \\
    & \textsc{CC} & 0.459 & 0.733 & 0.580 & 0.155 \\
    & \textsc{CC+fbm} & 0.500 & 0.632 & 0.562 & 0.252 \\
    & \textsc{CC+bucketing} & 0.420 & 0.774 & 0.570 & 0.164  \\
    & \textsc{Safeguard} & 0.929 & 0.071 & 0.258 & 0.124 \\
    & \textsc{VR Marina} & 0.320 & 0.096 & 0.176 & 0.103 \\
    & \texttt{Bant} & 0.922 & \textbf{0.975} & \textbf{0.948} & \textbf{0.809} \\
    & \texttt{AutoBant} & \textit{0.932} & \textit{0.961} & \textit{0.946} & \textit{0.748} \\
    & \texttt{SimBant} & \textbf{0.951} & 0.940 & 0.945 & 0.712 
    \\
    \midrule
    \multirow{6}{*}{\rotatebox[origin=c]{90}{\parbox[c]{3.35cm}{\large IPM (60 \%)}}}
    & \textsc{Adam} & 0.947 & \textit{0.957} & \textit{0.952} & \textit{0.738} \\
    & \textsc{FLTrust} & 0.210 & 0.001 & 0.011 & 0.123 \\
    & \textsc{Recess} & 0.947 & 0.919 & 0.933 & 0.611 \\
    & \textsc{Zeno} & 0.200 & 0.001 & 0.010 & 0.140 \\
    & \textsc{CC} & 0.940 & 0.950 & 0.945 & 0.710 \\
    & \textsc{CC+fbm} & \textit{0.950} & 0.946 & 0.948 & 0.695 \\
    & \textsc{CC+bucketing} & 0.940 & 0.948 & 0.944 & 0.689 \\
    & \textsc{Safeguard} & 0.190 & 0.062 & 0.109 & 0.123 \\
    & \textsc{VR Marina} & 0.170 & 0.056 & 0.098 & 0.110 \\
    & \texttt{Bant} & \textit{0.950} & 0.947 & 0.949 & 0.704 \\
    & \texttt{AutoBant} & \textit{0.950} & 0.945 & 0.948 & 0.695 \\
    & \texttt{SimBant} & \textbf{0.974} & \textbf{0.955} & \textbf{0.965} & \textbf{0.753}
    \\
    \midrule
    \multirow{6}{*}{\rotatebox[origin=c]{90}{\parbox[c]{3.35cm}{\large IPM (80 \%)}}}
    & \textsc{Adam} & 0.043 & 0.904 & 0.197 & 0.036 \\
    & \textsc{FLTrust} & 0.220 & 0.016 & 0.061 & 0.124 \\
    & \textsc{Recess} & 0.456 & 0.533 & 0.493 & 0.112 \\
    & \textsc{Zeno} & 0.131 & 0.080 & 0.102 & 0.066 \\
    & \textsc{CC} & 0.007 & \textbf{0.997} & 0.084 & 0.014 \\
    & \textsc{CC+fbm} & 0.210 & 0.003 & 0.027 & 0.123 \\
    & \textsc{CC+bucketing} & 0.220 & 0.005 & 0.035 & 0.118 \\
    & \textsc{Safeguard} & 0.200 & 0.060 & 0.110 & 0.082 \\
    & \textsc{VR Marina} & 0.240 & 0.067 & 0.127 & 0.079 \\
    & \texttt{Bant} & \textbf{0.954} & 0.939 & \textit{0.946} & 0.676 \\
    & \texttt{AutoBant} & 0.940 & 0.945 & 0.942 & \textit{0.690} \\
    & \texttt{SimBant} & \textit{0.950} & \textit{0.960} & \textbf{0.955} & \textbf{0.783} 
    \\
    \midrule
    \multirow{6}{*}{\rotatebox[origin=c]{90}{\parbox[c]{3.2cm}{\large ALIE (40\%)}}}
    & \textsc{Adam} & 0.265 & 0.058 & 0.125 & 0.123 \\
    & \textsc{FLTrust} & 0.220 & 0.001 & 0.017 & 0.123 \\
    & \textsc{Recess} & 0.249 & \textit{0.811} & 0.450 & 0.127 \\
    & \textsc{Zeno} & 0.180 & 0.001 & 0.010 & 0.091 \\
    & \textsc{CC} & 0.370 & 0.758 & 0.530 & 0.154 \\
    & \textsc{CC+fbm} & 0.870 & 0.882 & 0.876 & 0.594 \\
    & \textsc{CC+bucketing} & 0.890 & 0.850 & 0.870 & 0.587 \\
    & \textsc{Safeguard} & 0.150 & 0.000 & 0.010 & 0.123 \\
    & \textsc{VR Marina} & 0.200 & 0.001 & 0.012 & 0.108 \\
    & \texttt{Bant} & \textit{0.929} & \textbf{0.966} & \textbf{0.947} & \textbf{0.770} \\
    & \texttt{AutoBant} & 0.861 & 0.924 & 0.892 & 0.585 \\
    & \texttt{SimBant} & \textit{0.943} & \textit{0.949} & \textbf{0.946} & \textit{0.705}
    \\
    \bottomrule
    \end{tabular}   
    }
    \end{threeparttable}
\vspace{2mm}
\end{minipage}
\hfill
\begin{minipage}{0.4\textwidth}
    \centering
    \captionof{table}{\textsc{ResNet1d18} on ECG (1AVB).}
\label{tab:ecgavb} 
  \begin{threeparttable}
  \resizebox{\textwidth}{!}{
    \begin{tabular}{l|l|c|c|c|c}
    \toprule
    \multicolumn{1}{c|}{} & Algorithm & Sensitivity & Specificity & G-mean & f1-score
    \\

\midrule
     
    \multirow{6}{*}{\rotatebox[origin=c]{90}{\parbox[c]{3.5cm}{\large Without Attack}}} 
    & \textsc{Adam} & \textit{0.896} & 0.871 & \textit{0.884} & 0.335 \\
    & \textsc{FLTrust} & 0.857 & 0.883 & 0.870 & 0.344 \\
    & \textsc{Recess} & \textbf{0.909} & 0.870 & \textbf{0.889} & 0.337 \\
    & \textsc{Zeno} & 0.890 & 0.883 & \textit{0.886} & 0.353 \\
    & \textsc{CC} & 0.864 & 0.896 & 0.880 & 0.371 \\
    & \textsc{CC+fbm} & 0.888 & 0.868 & 0.877 & 0.353 \\
    & \textsc{CC+bucketing} & 0.869 & 0.887 & 0.877 & 0.351 \\
    & \textsc{Safeguard} & 0.877 & 0.883 & 0.880 & 0.350 \\
    & \textsc{VR Marina} & 0.825 & \textbf{0.922} & 0.872 & \textbf{0.420}\\
    & \texttt{Bant} & 0.831 & \textit{0.916} & 0.873 & \textit{0.408} \\
    & \texttt{AutoBant} & 0.825 & \textbf{0.922} & 0.872 & \textbf{0.420} \\
    & \textsc{FineTuned} & 0.851 & 0.894 & 0.872 & 0.362 \\
    \midrule
    \multirow{6}{*}{\rotatebox[origin=c]{90}{\parbox[c]{3.75cm}{\large Label Flip (60\%)}}}
    & \textsc{Adam} & 0.210 & 0.002 & 0.022 & 0.069 \\
    & \textsc{FLTrust} & 0.870 & \textit{0.903} & \textbf{0.886} & \textit{0.388} \\
    & \textsc{Recess} & 0.210 & 0.001 & 0.017 & 0.113 \\
    & \textsc{Zeno} & 0.400 & 0.039 & 0.125 & 0.309 \\
    & \textsc{CC} & 0.290 & 0.120 & 0.187 & 0.112 \\
    & \textsc{CC+fbm} & 0.725 & 0.866 & 0.792 & 0.265 \\
    & \textsc{CC+bucketing} & 0.740 & 0.818 & 0.778 & 0.305 \\
    & \textsc{Safeguard} & 0.100 & 0.001 & 0.010 & 0.069 \\
    & \textsc{VR Marina} & 0.220 & 0.002 & 0.023 & 0.125 \\
    & \texttt{Bant} & 0.812 & \textbf{0.934} & 0.871 & \textbf{0.455} \\
    & \texttt{AutoBant} & \textit{0.877} & 0.895 & \textbf{0.886} & 0.373 \\
    & \textsc{FineTuned} & \textbf{0.890} & 0.857 & \textit{0.873} & 0.311 \\
    \midrule
    \multirow{6}{*}{\rotatebox[origin=c]{90}{\parbox[c]{3.7cm}{\large Sign Flip (60\%)}}}
    & \textsc{Adam} & 0.597 & 0.685 & 0.640 & 0.119 \\
    & \textsc{FLTrust} & 0.007 & \textbf{0.980} & 0.080 & 0.008 \\
    & \textsc{Recess} & 0.058 & 0.874 & 0.226 & 0.026 \\
    & \textsc{Zeno} & 0.190 & 0.079 & 0.123 & 0.015 \\
    & \textsc{CC} & \textbf{1.000} & 0.002 & 0.041 & 0.070 \\
    & \textsc{CC+fbm} & 0.260 & 0.080 & 0.145 & 0.122 \\
    & \textsc{CC+bucketing} & 0.250 & 0.076 & 0.138 & 0.117 \\
    & \textsc{Safeguard} & \textit{0.981} & 0.030 & 0.170 & 0.070 \\
    & \textsc{VR Marina} & 0.130 & 0.058 & 0.087 & 0.065 \\
    & \texttt{Bant} & 0.669 & \textit{0.951} & \textit{0.797} & \textbf{0.447} \\
    & \texttt{AutoBant} & 0.916 & 0.845 & \textbf{0.879} & \textit{0.301} \\
    & \textsc{FineTuned} & 0.617 & 0.616 & 0.616 & 0.103 \\ 
    \midrule
    \multirow{6}{*}{\rotatebox[origin=c]{90}{\parbox[c]{4.1cm}{Random Gradients (60\%)}}}
    & \textsc{Adam} & 0.220 & 0.005 & 0.036 & 0.074 \\
    & \textsc{FLTrust} & \textbf{1.000} & 0.001 & 0.027 & 0.069 \\
    & \textsc{Recess} & 0.468 & 0.757 & 0.595 & 0.117 \\
    & \textsc{Zeno} & 0.220 & 0.061 & 0.116 & 0.098 \\
    & \textsc{CC} & \textit{0.909} & 0.227 & 0.455 & 0.080 \\
    & \textsc{CC+fbm} & 0.610 & 0.390 & 0.487 & \textit{0.362} \\
    & \textsc{CC+bucketing} & 0.650 & 0.416 & 0.520 & 0.340 \\
    & \textsc{Safeguard} & \textbf{1.000} & 0.020 & 0.140 & 0.071 \\
    & \textsc{VR Marina} & 0.230 & 0.057 & 0.115 & 0.103 \\
    & \texttt{Bant} & 0.805 & \textbf{0.919} & 0.860 & \textbf{0.405 }\\
    & \texttt{AutoBant} & 0.896 & 0.864 & \textbf{0.880} & 0.324 \\
    & \textsc{FineTuned} & 0.857 & \textit{0.890} & \textit{0.873} & 0.357 \\
    \midrule
    \multirow{6}{*}{\rotatebox[origin=c]{90}{\parbox[c]{3.35cm}{\large IPM (60 \%)}}}
    & \textsc{Adam} & \textbf{0.933} & 0.622 & 0.762 & 0.154 \\
    & \textsc{FLTrust} & 0.013 & \textbf{0.964} & 0.112 & 0.013 \\
    & \textsc{Recess} & \textit{0.922 }& 0.852 & \textbf{0.886} & 0.313 \\
    & \textsc{Zeno} & 0.260 & 0.169 & 0.210 & 0.098 \\
    & \textsc{CC} & 0.571 & 0.649 & 0.609 & 0.104 \\
    & \textsc{CC+fbm} & 0.700 & 0.480 & 0.580 & 0.130 \\
    & \textsc{CC+bucketing} & 0.610 & 0.374 & 0.478 & 0.142 \\
    & \textsc{Safeguard} & 0.050 & 0.001 & 0.008 & 0.078 \\
    & \textsc{VR Marina} & 0.200 & 0.036 & 0.085 & 0.115 \\
    & \texttt{Bant} & 0.721 & \textit{0.945} & 0.825 & \textbf{0.449} \\
    & \texttt{AutoBant} & 0.890 & 0.865 & \textit{0.877} & 0.323 \\
    & \textsc{FineTuned} & 0.857 & 0.893 & 0.875 & \textit{0.363 }\\
    \midrule
    \multirow{6}{*}{\rotatebox[origin=c]{90}{\parbox[c]{3.35cm}{\large IPM (80 \%)}}}
    & \textsc{Adam} & 0.312 & 0.580 & 0.425 & 0.050 \\
    & \textsc{FLTrust} & 0.201 & 0.751 & 0.389 & 0.051 \\
    & \textsc{Recess} & 0.558 & 0.508 & 0.533 & 0.076 \\
    & \textsc{Zeno} & 0.240 & 0.065 & 0.125 & 0.110 \\
    & \textsc{CC} & 0.857 & 0.310 & 0.515 & 0.084 \\
    & \textsc{CC+fbm} & 0.220 & 0.220 & 0.038 & 0.113 \\
    & \textsc{CC+bucketing} & 0.230 & 0.008 & 0.045 & 0.107 \\
    & \textsc{Safeguard} & 0.481 & 0.499 & 0.490 & 0.064 \\
    & \textsc{VR Marina} & 0.260 & 0.063 & 0.128 & 0.083 \\
    & \texttt{Bant} & \textit{0.857} & \textbf{0.884} & 0.870 & \textit{0.345} \\
    & \texttt{AutoBant} & 0.851 & 0.900 & \textit{0.875} & \textbf{0.375 }\\
    & \textsc{FineTuned} & \textbf{0.903} & \textit{0.876} & \textbf{0.889} & 0.344 \\
    \midrule
    \multirow{6}{*}{\rotatebox[origin=c]{90}{\parbox[c]{3.2cm}{\large ALIE (40\%)}}}
    & \textsc{Adam} & \textbf{1.000} & 0.001 & 0.031 & 0.069 \\
    & \textsc{FLTrust} & 0.220 & 0.032 & 0.085 & 0.150 \\
    & \textsc{Recess} & 0.220 & 0.055 & 0.110 & 0.065 \\
    & \textsc{Zeno} & 0.220 & 0.025 & 0.075 & 0.124 \\
    & \textsc{CC} & 0.520 & 0.443 & 0.480 & 0.115 \\
    & \textsc{CC+fbm} & 0.690 & 0.626 & 0.657 & \textit{0.350} \\
    & \textsc{CC+bucketing} & 0.700 & 0.585 & 0.640 & \textbf{0.380} \\
    & \textsc{Safeguard} & \textbf{1.000} & 0.000 & 0.000 & 0.069 \\
    & \textsc{VR Marina} & 0.200 & 0.001 & 0.010 & 0.103 \\
    & \texttt{Bant} & 0.935 & 0.850 &\textbf{ 0.892} & 0.314 \\
    & \texttt{AutoBant} & 0.818 & \textbf{0.888} & 0.852 & 0.339 \\
    & \texttt{SimBant} & \textit{0.968} & 0.818 & \textit{0.890} & 0.282 \\
    \bottomrule
    \end{tabular}   
    }
    \end{threeparttable}
\end{minipage}
\end{table}

\begin{table}[H]
\begin{minipage}{0.4\columnwidth}
    \centering
    \captionof{table}{\textsc{ResNet1d18} on ECG (PVC).}
\label{tab:ecgpvc} 
  \begin{threeparttable}
  \resizebox{\columnwidth}{!}{
    \begin{tabular}{l|l|c|c|c|c}
    \toprule
    \multicolumn{1}{c|}{} & Algorithm & Sensitivity & Specificity & G-mean & f1-score
    \\

\midrule
     
    \multirow{6}{*}{\rotatebox[origin=c]{90}{\parbox[c]{3.5cm}{\large Without Attack}}} 
    & \textsc{Adam} & \textbf{0.977} & 0.974 & \textit{0.975} & 0.790 \\
    & \textsc{FLTrust} & \textit{0.972} & 0.971 & 0.972 & 0.772 \\
    & \textsc{Recess} & \textit{0.972} & 0.961 & 0.967 & 0.720 \\
    & \textsc{Zeno} & \textbf{0.977} & 0.975 & \textbf{0.976} & \textit{0.801} \\
    & \textsc{CC} & \textit{0.972} & 0.959 & 0.965 & 0.707 \\
    & \textsc{CC+fbm} & 0.960 & 0.980 & 0.970 & 0.755 \\
    & \textsc{CC+bucketing} & 0.950 & \textbf{0.986} & 0.968 & 0.740 \\
    & \textsc{Safeguard} & \textbf{0.977} & 0.965 & 0.971 & 0.743 \\
    & \textsc{VR Marina} & 0.240 & 0.060 & 0.120 & 0.098 \\
    & \texttt{Bant} & 0.931 & \textit{0.981} & 0.955 & \textbf{0.810} \\
    & \texttt{AutoBant} & \textit{0.972} & 0.970 & 0.971 & 0.765 \\
    & \textsc{FineTuned} & 0.963 & 0.971 & 0.967 & 0.770 \\
    \midrule
    \multirow{6}{*}{\rotatebox[origin=c]{90}{\parbox[c]{3.75cm}{\large Label Flip (60\%)}}}
    & \textsc{Adam} & 0.639 & 0.710 & 0.673 & 0.180 \\
    & \textsc{FLTrust} & 0.220 & 0.059 & 0.114 & 0.096 \\
    & \textsc{Recess} & 0.931 & 0.937 & 0.934 & 0.596 \\
    & \textsc{Zeno} & 0.310 & 0.037 & 0.108 & 0.210 \\
    & \textsc{CC} & 0.005 & \textbf{0.999} & 0.068 & 0.009 \\
    & \textsc{CC+fbm} & 0.930 & 0.920 & 0.925 & 0.760 \\
    & \textsc{CC+bucketing} & 0.910 & 0.910 & 0.910 & 0.735 \\
    & \textsc{Safeguard} & 0.981 & 0.971 & \textbf{0.976} & \textit{0.782} \\
    & \textsc{VR Marina} & 0.200 & 0.036 & 0.085 & 0.115 \\
    & \texttt{Bant} & 0.870 & \textit{0.979} & 0.923 & 0.767 \\
    & \texttt{AutoBant} & \textbf{0.972} & 0.976 & \textit{0.974} &\textbf{ 0.805} \\
    & \textsc{FineTuned} & \textit{0.944} & 0.957 & 0.951 & 0.686 \\
    \midrule
    \multirow{6}{*}{\rotatebox[origin=c]{90}{\parbox[c]{3.7cm}{\large Sign Flip (60\%)}}}
    & \textsc{Adam} & 0.639 & 0.710 & 0.673 & 0.180 \\
    & \textsc{FLTrust} & 0.220 & 0.032 & 0.085 & 0.118 \\
    & \textsc{Recess} & 0.931 & 0.937 & 0.934 & 0.596 \\
    & \textsc{Zeno} & 0.190 & 0.069 & 0.115 & 0.010 \\
    & \textsc{CC} & 0.005 & \textbf{0.999} & 0.068 & 0.009 \\
    & \textsc{CC+fbm} & 0.250 & 0.072 & 0.135 & 0.117 \\
    & \textsc{CC+bucketing} & 0.230 & 0.055 & 0.113 & 0.096 \\
    & \textsc{Safeguard} & \textbf{0.981} & 0.971 & \textbf{0.976} & \textit{0.782} \\
    & \textsc{VR Marina} & 0.120 & 0.035 & 0.065 & 0.010 \\
    & \texttt{Bant} & 0.870 & \textit{0.979} & 0.923 & 0.767 \\
    & \texttt{AutoBant} & \textit{0.972} & 0.976 & \textit{0.974} & \textbf{0.805} \\
    & \textsc{FineTuned} & 0.944 & 0.957 & 0.951 & 0.686 \\
    \midrule
    \multirow{6}{*}{\rotatebox[origin=c]{90}{\parbox[c]{4.1cm}{Random Gradients (60\%)}}}
    & \textsc{Adam} & 0.220 & 0.005 & 0.035 & 0.120 \\
    & \textsc{FLTrust} & 0.220 & 0.063 & 0.118 & 0.110 \\
    & \textsc{Recess} & 0.255 & 0.845 & 0.464 & 0.122 \\
    & \textsc{Zeno} & 0.180 & 0.053 & 0.098 & 0.010 \\
    & \textsc{CC} & 0.250 & 0.048 & 0.110 & 0.130 \\
    & \textsc{CC+fbm} & 0.270 & 0.083 & 0.150 & 0.117 \\
    & \textsc{CC+bucketing} & 0.220 & 0.043 & 0.098 & 0.123 \\
    & \textsc{Safeguard} & 0.009 & \textbf{0.997} & 0.096 & 0.017 \\
    & \textsc{VR Marina} & 0.120 & 0.037 & 0.067 & 0.010 \\
    & \texttt{Bant} & \textit{0.944} & 0.969 & \textit{0.957} & \textit{0.747} \\
    & \texttt{AutoBant} & \textbf{0.963} & \textit{0.978} & \textbf{0.970} & \textbf{0.809} \\
    & \textsc{FineTuned} & \textbf{0.963} & 0.945 & 0.954 & 0.644 \\
    \midrule
    \multirow{6}{*}{\rotatebox[origin=c]{90}{\parbox[c]{3.35cm}{\large IPM (60 \%)}}}
    & \textsc{Adam} & 0.796 & \textit{0.976} & 0.882 & 0.708 \\
    & \textsc{FLTrust} & 0.240 & 0.046 & 0.106 & 0.114 \\
    & \textsc{Recess} & 0.944 & 0.938 & 0.941 & 0.608 \\
    & \textsc{Zeno} & 0.220 & 0.025 & 0.075 & 0.124 \\
    & \textsc{CC} & 0.944 & 0.933 & 0.939 & 0.590 \\
    & \textsc{CC+fbm} & 0.930 & 0.960 & 0.945 & 0.625 \\
    & \textsc{CC+bucketing} & 0.930 & 0.950 & 0.940 & 0.610 \\
    & \textsc{Safeguard} & 0.782 & 0.217 & 0.412 & 0.095 \\
    & \textsc{VR Marina} & 0.190 & 0.037 & 0.084 & 0.115 \\
    & \texttt{Bant} & 0.931 & \textbf{0.977} & 0.954 & \textbf{0.790} \\
    & \texttt{AutoBant} & \textit{0.949} & 0.971 & \textbf{0.960 }& \textit{0.762} \\
    & \textsc{FineTuned} & \textbf{0.954} & 0.965 & \textit{0.959} & 0.729 \\
    \midrule
    \multirow{6}{*}{\rotatebox[origin=c]{90}{\parbox[c]{3.35cm}{\large IPM (80 \%)}}}
    & \textsc{Adam} & \textit{0.958} & 0.005 & 0.069 & 0.093 \\
    & \textsc{FLTrust} & 0.005 & \textbf{0.991} & 0.068 & 0.008 \\
    & \textsc{Recess} & 0.426 & 0.502 & 0.462 & 0.079 \\
    & \textsc{Zeno} & 0.300 & 0.225 & 0.260 & 0.108\\
    & \textsc{CC} & 0.463 & 0.265 & 0.350 & 0.061 \\
    & \textsc{CC+fbm} & 0.220 & 0.010 & 0.047 & 0.114 \\
    & \textsc{CC+bucketing} & 0.200 & 0.006 & 0.035 & 0.105 \\
    & \textsc{Safeguard} & 0.065 & 0.831 & 0.232 & 0.031 \\
    & \textsc{VR Marina} & 0.210 & 0.063 & 0.115 & 0.086 \\
    & \texttt{Bant} & 0.931 & \textit{0.980} & \textit{0.955} & \textit{0.807} \\
    & \texttt{AutoBant} & \textbf{0.970} & 0.965 & \textbf{0.968} & \textbf{0.820} \\
    & \textsc{FineTuned} & \textit{0.963} & 0.937 & \textit{0.950} & 0.611 \\
    \midrule
    \multirow{6}{*}{\rotatebox[origin=c]{90}{\parbox[c]{3.2cm}{\large ALIE (40\%)}}}
    & \textsc{Adam} & 0.200 & 0.021 & 0.065 & 0.120 \\
    & \textsc{FLTrust} & 0.398 & 0.581 & 0.481 & 0.086 \\
    & \textsc{Recess} & 0.200 & 0.028 & 0.075 & 0.135 \\
    & \textsc{Zeno} & 0.190 & 0.063 & 0.110 & 0.078  \\
    & \textsc{CC} & 0.913 & 0.116 & 0.325 & 0.084 \\
    & \textsc{CC+fbm} & 0.880 & 0.821 & 0.850 & 0.580 \\
    & \textsc{CC+bucketing} & 0.890 & 0.870 & 0.880 & 0.610 \\
    & \textsc{Safeguard} & 0.218 & 0.881 & 0.438 & 0.126 \\
    & \textsc{VR Marina} & 0.240 & 0.064 & 0.124 & 0.105 \\
    & \texttt{Bant} & \textbf{0.954} & \textit{0.955} & \textit{0.954} & \textit{0.680} \\
    & \texttt{AutoBant} & \textbf{0.954} & \textbf{0.981} & \textbf{0.967} & \textbf{0.826} \\
    & \textsc{FineTuned} & \textit{0.917} & 0.954 & 0.935 & 0.658 \\
    \bottomrule
    \end{tabular}   
    }
    \end{threeparttable}
\end{minipage}
\hfill
\begin{minipage}{0.4\textwidth}
    \centering
    \captionof{table}{\textsc{ResNet1d18} on ECG (CLBBB).}
\label{tab:ecgclbbb} 
  \begin{threeparttable}
  \resizebox{\textwidth}{!}{
    \begin{tabular}{l|l|c|c|c|c}
    \toprule
    \multicolumn{1}{c|}{} & Algorithm & Sensitivity & Specificity & G-mean & f1-score
    \\

\midrule
     
    \multirow{6}{*}{\rotatebox[origin=c]{90}{\parbox[c]{3.5cm}{\large Without Attack}}} 
    & \textsc{Adam} & 0.979 & 0.957 & 0.968 & 0.508 \\
    & \textsc{FLTrust} & \textbf{0.990 }& 0.947 & 0.968 & 0.462 \\
    & \textsc{Recess} & 0.969 & \textbf{0.963} & 0.966 & \textbf{0.538} \\
    & \textsc{Zeno} & \textbf{0.990} & 0.957 & \textit{0.973} & 0.509 \\
    & \textsc{CC} & 0.979 & 0.945 & 0.962 & 0.445 \\
    & \textsc{CC+fbm} & 0.970 & 0.970 & 0.970 & 0.480 \\
    & \textsc{CC+bucketing} & 0.950 & 0.950 & 0.950 & 0.465 \\
    & \textsc{Safeguard} & \textbf{0.990} & \textit{0.961} & \textbf{0.975} & \textit{0.537} \\
    & \textsc{VR Marina} & 0.220 & 0.021 & 0.068 & 0.117 \\
    & \texttt{Bant} & \textbf{0.990} & 0.947 & 0.968 & 0.460 \\
    & \texttt{AutoBant} & \textit{0.989} & 0.936 & 0.962 & 0.415 \\
    & \textsc{FineTuned} & \textbf{0.990} & 0.953 & 0.971 & 0.491 \\
    \midrule
    \multirow{6}{*}{\rotatebox[origin=c]{90}{\parbox[c]{3.75cm}{\large Label Flip (60\%)}}}
    & \textsc{Adam} & 0.220 & 0.065 & 0.120 & 0.115 \\
    & \textsc{FLTrust} & 0.198 & 0.630 & 0.353 & 0.023 \\
    & \textsc{Recess} & 0.021 & 0.873 & 0.135 & 0.006 \\
    & \textsc{Zeno} & 0.190 & 0.084 & 0.127 & 0.005 \\
    & \textsc{CC} & 0.764 & 0.854 & 0.808 & 0.199 \\
    & \textsc{CC+fbm} & 0.860 & 0.800 & 0.830 & 0.210 \\
    & \textsc{CC+bucketing} & 0.790 & 0.830 & 0.810 & 0.214 \\
    & \textsc{Safeguard} & \textit{0.989} & 0.958 & \textbf{0.974} & \textbf{0.517} \\
    & \textsc{VR Marina} & 0.260 & 0.042 & 0.105 & 0.125 \\
    & \texttt{Bant} & \textbf{0.990} & 0.923 & 0.956 & 0.370 \\
    & \texttt{AutoBant} & \textbf{0.990} & \textbf{0.955} & \textit{0.972} & \textit{0.502} \\
    & \textsc{FineTuned} & \textbf{0.990} & \textit{0.952} & 0.971 & 0.487 \\
    \midrule
    \multirow{6}{*}{\rotatebox[origin=c]{90}{\parbox[c]{3.7cm}{\large Sign Flip (60\%)}}}
    & \textsc{Adam} & \textbf{1.000} & 0.001 & 0.027 & 0.044 \\
    & \textsc{FLTrust} & 0.865 & 0.747 & 0.804 & 0.134 \\
    & \textsc{Recess} & 0.220 & 0.102 & 0.150 & 0.044 \\
    & \textsc{Zeno} & 0.250 & 0.193 & 0.220 & 0.054  \\
    & \textsc{CC} & 0.260 & 0.177 & 0.215 & 0.117 \\
    & \textsc{CC+fbm} & 0.320 & 0.204 & 0.256 & 0.117 \\
    & \textsc{CC+bucketing} & 0.300 & 0.052 & 0.125 & 0.240 \\
    & \textsc{Safeguard} & 0.698 & 0.617 & 0.656 & 0.076 \\
    & \textsc{VR Marina} & 0.210 & 0.029 & 0.078 & 0.112 \\
    & \texttt{Bant} & \textit{0.979} & \textit{0.944} & \textit{0.962} & \textit{0.444} \\
    & \texttt{AutoBant} & 0.969 & 0.934 & 0.951 & 0.401 \\
    & \textsc{FineTuned} & 0.969 & \textbf{0.958} & \textbf{0.964} & \textbf{0.512} \\
    \midrule
    \multirow{6}{*}{\rotatebox[origin=c]{90}{\parbox[c]{4.1cm}{Random Gradients (60\%)}}}
    & \textsc{Adam} &\textbf{1.000} & 0.007 & 0.085 & 0.044 \\
    & \textsc{FLTrust} & \textbf{1.000} & 0.003 & 0.054 & 0.044 \\
    & \textsc{Recess} & 0.427 & 0.963 & 0.641 & 0.282 \\
    & \textsc{Zeno} & 0.310 & 0.054 & 0.130 & 0.210 \\
    & \textsc{CC} & 0.130 & 0.120 & 0.125 & 0.044 \\
    & \textsc{CC+fbm} & 0.310 & 0.201 & 0.250 & 0.130 \\
    & \textsc{CC+bucketing} & 0.190 & 0.170 & 0.180 & 0.054 \\
    & \textsc{Safeguard} & 0.052 & \textbf{0.987} & 0.227 & 0.064 \\
    & \textsc{VR Marina} & 0.210 & 0.030 & 0.080 & 0.112 \\
    & \texttt{Bant} & \textit{0.990} & 0.948 & \textit{0.969} & \textit{0.465} \\
    & \texttt{AutoBant} & \textit{0.990} & 0.931 & 0.960 & 0.396 \\
    & \textsc{FineTuned} & \textit{0.990}& \textit{0.971} & \textbf{0.980} & \textbf{0.607} \\
    \midrule
    \multirow{6}{*}{\rotatebox[origin=c]{90}{\parbox[c]{3.35cm}{\large IPM (60 \%)}}}
    & \textsc{Adam} & \textbf{0.979} & 0.957 & \textbf{0.968} & \textit{0.511} \\
    & \textsc{FLTrust} & \textbf{0.979 }& 0.296 & 0.538 & 0.060 \\
    & \textsc{Recess} & \textit{0.958} & 0.945 & 0.951 & 0.438 \\
    & \textsc{Zeno} & 0.330 & 0.027 & 0.095 & 0.215 \\
    & \textsc{CC} & \textit{0.958} & 0.950 & 0.954 & 0.465 \\
    & \textsc{CC+fbm} & 0.950 & 0.940 & 0.945 & 0.440 \\
    & \textsc{CC+bucketing} & 0.940 & 0.936 & 0.938 & 0.460 \\
    & \textsc{Safeguard} & \textbf{0.979} & 0.539 & 0.727 & 0.089 \\
    & \textsc{VR Marina} & 0.230 & 0.041 & 0.098 & 0.115 \\
    & \texttt{Bant} & 0.813 & \textbf{0.982} & 0.893 & \textbf{0.629} \\
    & \texttt{AutoBant} & \textbf{0.979} & 0.941 & 0.960 & 0.429 \\
    & \textsc{FineTuned} & 0.938 & 0.930 & 0.934 & 0.376 \\
    \midrule
    \multirow{6}{*}{\rotatebox[origin=c]{90}{\parbox[c]{3.35cm}{\large IPM (80 \%)}}}
    & \textsc{Adam} & 0.073 & 0.506 & 0.192 & 0.006 \\
    & \textsc{FLTrust} & 0.330 & 0.017 & 0.075 & 0.210 \\
    & \textsc{Recess} & 0.042 & \textbf{0.952} & 0.199 & 0.026 \\
    & \textsc{Zeno} & 0.230 & 0.141 & 0.180 & 0.048\\
    & \textsc{CC} & 0.073 & 0.884 & 0.254 & 0.024 \\
    & \textsc{CC+fbm} & 0.190 & 0.075 & 0.120 & 0.065 \\
    & \textsc{CC+bucketing} & 0.220 & 0.200 & 0.210 & 0.020 \\
    & \textsc{Safeguard} & \textbf{1.000} & 0.002 & 0.046 & 0.044 \\
    & \textsc{VR Marina} & 0.220 & 0.005 & 0.035 & 0.118 \\
    & \texttt{Bant} & \textit{0.990} & \textit{0.951} & \textbf{0.970} & \textbf{0.481} \\
    & \texttt{AutoBant} & 0.979 & 0.947 & 0.963 & \textit{0.459} \\
    & \textsc{FineTuned} & \textit{0.990} & 0.943 & \textit{0.966} & 0.441 \\
    \midrule
    \multirow{6}{*}{\rotatebox[origin=c]{90}{\parbox[c]{3.2cm}{\large ALIE (40\%)}}}
    & \textsc{Adam} & 0.250 & 0.176 & 0.210 & 0.123 \\
    & \textsc{FLTrust} & 0.220 & 0.089 & 0.140 & 0.065 \\
    & \textsc{Recess} & \textbf{1.000} & 0.005 & 0.071 & 0.044 \\
    & \textsc{Zeno} & 0.220 & 0.032 & 0.085 & 0.114 \\
    & \textsc{CC} & 0.460 & 0.599 & 0.525 & 0.160 \\
    & \textsc{CC+fbm} & 0.490 & 0.470 & 0.480 & 0.210 \\
    & \textsc{CC+bucketing} & 0.540 & 0.463 & 0.500 & 0.120 \\
    & \textsc{Safeguard} & 0.220 & 0.060 & 0.115 & 0.087 \\
    & \textsc{VR Marina} & 0.200 & 0.078 & 0.125 & 0.072 \\
    & \texttt{Bant} & \textit{0.979}& \textbf{0.968} & \textbf{0.973} & \textbf{0.578} \\
    & \texttt{AutoBant} & \textbf{1.000} & 0.814 & 0.902 & 0.198 \\
    & \textsc{FineTuned} & 0.958 & \textit{0.957} & \textit{0.958} & \textit{0.501} \\
    \bottomrule
    \end{tabular}   
    }
    \end{threeparttable}
\end{minipage}
\end{table}

\newpage
\subsection{Learning-to-rank Experiments} \label{sec:appendix_experiments_ltr}
In the main body, we introduced our approach and presented a subset of results for the Learning-to-Rank (LTR) task under Byzantine settings. Here, we expand on the experimental setup, covering all baseline models and proposed methods across the full spectrum of adversarial scenarios considered.

\paragraph{Problem Formulation.} The LTR task is defined over a set of queries $\mathcal{Q}$, where each query $q \in \mathcal{Q}$ is associated with a set of documents $D_q$. For each document $d_i \in D_q$, a feature vector $x_i$ and a relevance label $y_i \in \{0, \dots, r-1\}$ are provided. The objective is to learn a scoring function $f(x; \theta)$ such that, for any query $q$, the induced ordering of scores $s_i = f(x_i; \theta)$ approximates the ideal relevance ordering.

We use the Normalized Discounted Cumulative Gain at cutoff $k$ (NDCG@k) to assess ranking quality. First, we define the Discounted Cumulative Gain (DCG@k) for a ranking $\pi$ (a permutation of documents based on predicted scores) as:
$$
\mathrm{DCG}@k = \sum_{i=1}^{k} \frac{2^{y_{\pi(i)}} - 1}{\log_2(i + 1)},
$$
where $y_{\pi(i)}$ is the relevance label of the document ranked at position $i$. This formulation assigns higher weight to highly relevant documents that appear earlier in the ranking, with a logarithmic discount applied to lower positions.

\begin{figure}[h]
    \centering
    \includegraphics[width=\textwidth]{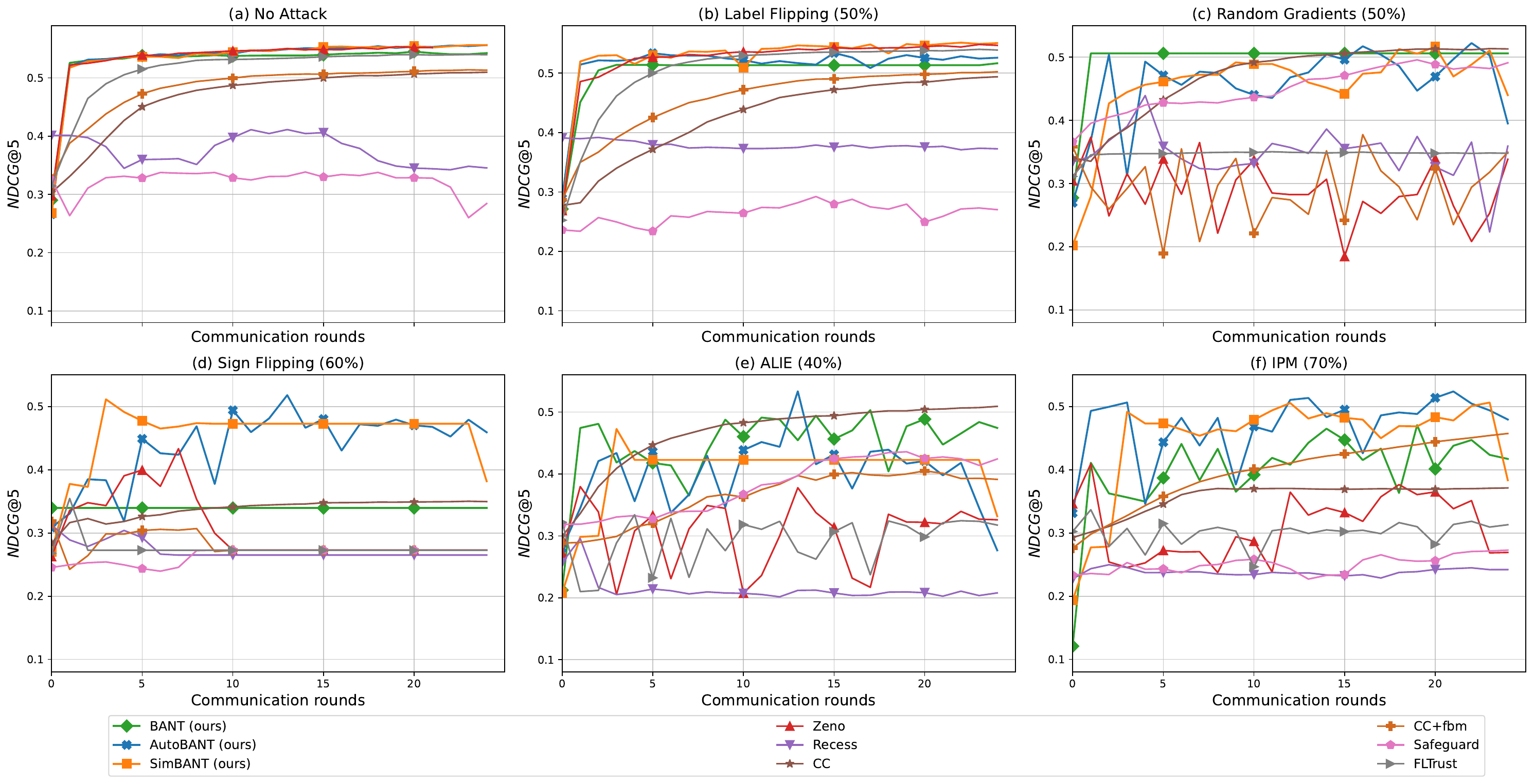}
    \caption{Test NDCG@5 for \textsc{Transformer} on the Learning-to-Rank task.}
    \label{fig:ltr_appendix}
\end{figure}

The Ideal DCG (IDCG@k) is the maximum possible DCG@k obtained by sorting documents in descending order of their true relevance labels.

The final evaluation metric, Normalized DCG (NDCG@k), is computed as:

\[
\mathrm{NDCG}@k = \frac{\mathrm{DCG}@k}{\mathrm{IDCG}@k}.
\]

This normalization bounds the metric between 0 and 1, where 1 indicates a perfect ranking.

\paragraph{Experimental Scope.} We evaluate models on the WEB30K dataset \citep{DBLP:journals/corr/QinL13}, a standard benchmark for LTR consisting of 30,000 queries with graded relevance labels. While the main section highlights our method's applicability to LTR, here we provide complete validation across:

\begin{itemize}
    \item Baseline defenses, including \textsc{Zeno}, \textsc{Recess}, \textsc{Centered Clip}, \textsc{Safeguard} and \textsc{FlTrust}.
    \item Our proposed methods: \texttt{Bant}, \texttt{AutoBant}, and \texttt{SimBant}.
    \item All adversarial scenarios introduced in the main paper: Label Flipping, Sign Flipping, Random Gradients, IPM, and ALIE.
\end{itemize}

For consistency with \textsc{Zeno}'s assumptions, we set the threshold for Byzantine tolerance to $|\mathcal{B}| = 0.5n$, where $n$ is the number of clients as it is unknown a priori. While \textsc{Zeno} might perform well when the fraction of adversarial clients is close to $50\%$, it suffers significant degradation (or divergence) when this assumption is violated. This sensitivity underscores the need for defenses that do not rely on tight prior knowledge of Byzantine ratios.

\allowdisplaybreaks
\section{Notation}\label{Notation}
The following sections will be dedicated to the theoretical proof of all aspects discussed in the main body of the paper. 

In order to facilitate the understanding of the proofs presented in the appendix, as well as to simplify the interaction with all the formulas throughout this paper, we provide a comprehensive list of notation used in this study in the form of the following table.
\begin{table}[H]
\caption{Notation Reference.} \label{table:notation}
\centering
\resizebox{\textwidth}{!}{
\begin{tabular}{|p{0.8cm}|p{3.2cm}|p{0.5cm}|p{3.5cm}|p{1.1cm}|p{3cm}|p{0.9cm}|p{3cm}|}
\hline
$N$ & Number of samples in the trial function & $\gamma$ & Learning rate (stepsize) in optimization & $\hat{f}$ & Trial loss function on the server & $\mathbb{I}_{[a > 0]}$ & Indicator function taking value 1 if $a > 0$, otherwise 0 \\
\hline
$d$ & Dimensionality of the parameter space $\mathbb{R}^d$ & $L$ & Smoothness constant \eqref{as1}& $f$ & Objective function $f(x)$ in distributed learning & $\hat{P}^t$ & Adaptive preconditioner matrix at iteration $t$ \\
\hline
$n = n(t)$ & Number of workers in the distributed system & $\mu$ & Strong convexity constant \eqref{as2stronglyconvex} & $f_i$ & Local objective function on $i$-th worker &  $\langle a, b \rangle_{\hat{P}}$ & Weighted inner product with $\hat{P}$ \\
\hline
$\mathcal{G}(t)$ & Set of honest workers & $\beta$ & Momentum parameter for weights $\omega_i^t$ & $f_1$ & Local objective function on the server& $\alpha$ & Lower bound on the preconditioner \eqref{asscaled}\\
\hline
$G(t) = |\mathcal{G}(t)|$ & Number of honest workers & $\delta$ & Approximation error in $\arg\min$ finding \eqref{alg1} & $g_i$ & Stochastic gradient of worker $i$& $\Gamma$ & Upper bound on the preconditioner \eqref{asscaled} \\
\hline
$\mathcal{B}(t)$ & Set of Byzantine workers & $\delta_1$ & $(\delta_1, \delta_2)-$heterogeneity parameter \eqref{as4} &$g_i^t$ & Gradient from worker $i$ at iteration $t$ & $\|\cdot\|_{\infty}$ & $\max\limits_{1 \leqslant i \leqslant d} |\cdot|_i$\\
\hline
$B(t) = |\mathcal{B}(t)|$ & Number of Byzantine workers & $\delta_2$ & $(\delta_1, \delta_2)-$heterogeneity parameter \eqref{as4} &  $x^*$ & Optimal solution of the objective function  & $\Delta_d^1$ & $d$-dimensional simplex constraint on weights \\
\hline
$t$ & Current iteration number & $\sigma^2$ & Variance of stochastic gradients \eqref{as3} &$\overline{x}^t$ & $\frac{1}{G}\sum\limits_{i \in \mathcal{G}}x_i^t$& $\omega_i^t$ & Weight assigned to worker $i$ at iteration $t$ \\
\hline
$T$ & Total number of iterations in training & $l$ & Local round length &$\mathcal{KL}(p \| q)$ & Kullback-Leibler divergence between distributions $p$ and $q$ & $[\cdot]_0$ & Non-negative projection: $\max\{0, \cdot\}$ 
\\
\hline
$\mathcal{X}$ & Domain$:~x\in \mathcal{X}$ &$\mathcal{F}$&function class $\mathcal{F}: \quad \xi \mapsto \nabla f(x; \xi), x \in \mathcal{X}$& $V^t$ & $\frac{1}{G}\sum\limits_{i \in \mathcal{G}}\|x_i^t-\overline{x}^t\|^2$  & $\varepsilon$ & Radius of balls in the covering net in Lemma \ref{lemmaShalevShwartz} \\
\hline
$\mathcal{W} = \mathcal{W}(t)$ & Number of workers & $G $ &$\underset{t\leqslant T}{\min}~ G(t)$ & $S$ & Bound on $\mathcal{X}$ in Lemma \ref{lemmaShalevShwartz} & $\|\cdot\|$ & If not specified other, $\|\cdot\|_2 = \sqrt{\langle \cdot, \cdot \rangle}$\\
\hline
\end{tabular}
}

\end{table}

\newpage
\section{General Inequalities and Lemmas}\label{B}

First, mention important inequalities that are used in further proofs. Consider a function \( f \) satisfying \text{Assumption~\ref{as1}}, \( g \) satisfying \text{Assumptions \ref{as2stronglyconvex}} and \( \varphi \) complying with \text{Assumption~\ref{as2convex}}. Then for any \( i \) in the real numbers and for all vectors \( x, y, x_i \) in \( \mathbb{R}^n \) with a positive scalar \( p \), the following inequalities hold.
\begin{align}
\notag\\ \hline \notag\\
\label{bi:young} \tag{Young} \left|\left\langle x, y \right\rangle\right| \quad & \leqslant \quad \frac{\|x\|^2}{2p} + \frac{p \|y\|^2}{2} \\
\notag\\ \hline \notag\\
\label{bi:norm} \tag{Norm}
\begin{split}
 -\left\langle x, y \right\rangle \quad &= \quad -\frac{\|x\|^2}{2} - \frac{\|y\|^2}{2} + \frac{\|x-y\|^2}{2} \\
 \|x + y\|^2 \quad &= \quad \|x\|^2 + \|y\|^2 + 2\left\langle x, y \right\rangle
\end{split}\\
\notag\\ \hline \notag\\
\label{bi:mustrconvex} \tag{$\mu$-Conv} \quad g(y) \quad & \geqslant \quad g(x) - \left\langle \nabla g(y), x - y \right\rangle - \frac{1}{2\mu} \| \nabla g(x) - \nabla g(y) \|^2 \\
\notag\\ \hline \notag\\
\label{bi:convex} \tag{Conv}
\begin{split}
\quad \varphi(y) \quad & \geqslant \quad \varphi(x) + \left\langle \nabla \varphi(x), y - x \right\rangle \\
0\quad& \geqslant\quad \left\langle \nabla \varphi(x) - \nabla \varphi(y), y - x\right\rangle 
\end{split}\\
\notag\\ \hline \notag\\
\label{bi:lsmoothness} \tag{Lip} 
\begin{split}
    \left\|\nabla f(x) - \nabla f(y)\right\|^2 \quad & \leqslant \quad L^2 \|x-y\|^2\\
    \quad f(x) \quad & \leqslant \quad f(y) + \left\langle \nabla f(y), x-y \right\rangle + \frac{L}{2} \|x-y\|^2\\
    f(x) \quad & \leqslant \quad f(y) - \left\langle \nabla f(x), y-x \right\rangle - \frac{1}{2L} \|\nabla f(x)-\nabla f(y)\|^2
\end{split}\\
\notag\\ \hline \notag\\
\label{bi:CauchySchwarz} \tag{CS}
\begin{split}
\left\|\sum_{i=1}^{n} x_i\right\|^2 \quad& \leqslant  \quad n \sum_{i=1}^{n} \|x_i\|^2  \\
\left\|x + y \right\|^2 \quad& \leqslant  \quad (1 + p)\|x\|^2 + \left(1 + \frac{1}{p}\right)\|y\|^2
\end{split}
\\
\notag\\ \hline \notag\\
\label{bi:Jensen} \tag{Jen}  \varphi \left(\frac{\sum_{i=1}^{n} w_i x_i}{\sum_{i=1}^{n} w_i}\right) \quad & \leqslant \quad \frac{\sum_{i=1}^{n} w_i \varphi(x_i)}{\sum_{i=1}^{n} w_i} \\
\notag\\ \hline \notag
\end{align}
This section delineates a series of lemmas that form the cornerstone of our subsequent proofs. These lemmas encapsulate critical properties and bounds that are instrumental in establishing the theorems elaborated later in this paper.

\newpage
The following lemma addresses a critical issue concerning the evaluation of the trial function and its deviation from $\nabla f_1$. As highlighted in the main part of our work, existing literature frequently overlooks the additional term in convergence that arises when employing a trial function. In this study, we rectify this oversight, thereby providing a more comprehensive understanding of the convergence behavior associated with trial functions.
\begin{lemma} \label{lemmaShalevShwartz}
Suppose Assumption \ref{as1} holds. Then for all $x \in \mathcal{X} \subset \mathbb{R}^d$ with probability of at least $1-\widetilde{\delta}$
over a sample of size $N$, the following estimate, linking the trial function with the objective function on the server, is valid:
\begin{equation*}
\begin{split}
\|\nabla f_1(x) - \nabla \hat{f}(x)\|_{2}^2 ~ \leqslant ~ \zeta(N) = \widetilde{\mathcal{O}}\left(\frac{1}{N}\right),
\end{split}
\end{equation*}

\begin{proof}
Given the norm inequality $\|\cdot\|_2 \leqslant \sqrt{d}\cdot \|\cdot\|_\infty$, we can recast the scalar product in the following manner:
\begin{eqnarray}
\label{lemmaShalevShwartz:1}
\|\nabla f_1(x) - \nabla \hat{f}(x)\|_{2}^2 &\leqslant& d \cdot \|\nabla f_1(x^t) - \nabla \hat{f}(x)\|_{\infty}^2.
\end{eqnarray}
To establish the uniform convergence of $\left\|\nabla f_1(x) - \nabla \hat{f}(x)\right\|_\infty^2$, we employ Theorem 5 from \citep{shalev2009stochastic}. This theorem provides a bound on the \(\ell_{\infty}\)-covering number of the function class \(\mathcal{F} = \{\xi \mapsto \nabla f_1(x; \xi) \mid x \in \mathcal{X}\}\). Given that \(\mathcal{X}\) resides within an \(\ell_2\)-sphere, let us define it bound by $S$, the covering number for \(\mathcal{X}\) using the Euclidean metric \(d_2(x_i, x_j) = \|x_i - x_j\|_2\) is constrained as follows for \(d > 3\):
$$
N(\varepsilon, \mathcal{X}, d_2) = \mathcal{O}\left(d^2 \left(\frac{S}{\varepsilon}\right)^d\right).
$$
In evaluating the covering numbers for \(\mathcal{F}\) under the \(\ell_\infty\) metric, where \(\left\|\nabla f_1(x_i; \cdot) - \nabla f_1(x_j; \cdot)\right\|_\infty = \sup_\xi |\nabla f_1(x_i; \xi) - \nabla f_1(x_j; \xi)|\), the \(L\)-smoothness property facilitates the following assertion:
$$
\forall x_i, x_j \in \mathcal{X} \hookrightarrow \left\|\nabla f_1(x_i; \cdot) - \nabla f_1(x_j; \cdot)\right\|_\infty \leqslant \left\|\nabla f_1(x_i; \cdot) - \nabla f_1(x_j; \cdot)\right\|_2 \leqslant L \|x_i - x_j\|.
$$
This indicates that an \(\varepsilon\)-net for \(\mathcal{X}\) in \(d_2\) space concurrently serves as an \(L\varepsilon\)-net for \(\mathcal{F}\) in \(d_{\infty}\) space:
$$
N(\varepsilon, \mathcal{F}, d_{\infty}) \leqslant N(\varepsilon/L, \mathcal{X}, d_2) = \mathcal{O}\left(d^2 \left(\frac{LS}{\varepsilon}\right)^d\right).
$$
Following this analysis, we derive an estimation consistent with the findings in \citep{shalev2009stochastic}:
$$
\|\nabla f_1(x) - \nabla \hat{f}(x)\|_{\infty}^2 = \widetilde{\mathcal{O}}\left(\frac{1}{N}\right).
$$
Defining the notation $$\zeta(N) \overset{\text{def}}{=} \widetilde{\mathcal{O}}\left(\frac{1}{N}\right),$$ and substituting this into \eqref{lemmaShalevShwartz:1} concludes the proof of the lemma.

\end{proof}
\end{lemma}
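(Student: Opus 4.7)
The plan is to establish uniform convergence of the empirical gradient $\nabla \hat{f}(x) = \frac{1}{N}\sum_{i=1}^N \nabla f_1(x,\xi_i)$ to its population version $\nabla f_1(x)$ as an empirical-process bound over the domain $\mathcal{X}$. First I would reduce the Euclidean norm to the $\ell_\infty$ norm via $\|v\|_2^2 \leqslant d\,\|v\|_\infty^2$; this converts the task into bounding a coordinate-wise supremum, which is substantially easier since each coordinate of $\nabla \hat f(x)-\nabla f_1(x)$ is an average of i.i.d.\ zero-mean random variables and can be controlled by a scalar concentration inequality.

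Second, I would control the complexity of the gradient class $\mathcal{F} = \{\xi \mapsto \nabla f_1(x;\xi) : x \in \mathcal{X}\}$ by a covering argument. Since $\mathcal{X}$ is contained in a Euclidean ball of radius $S$, a standard volumetric estimate yields $N(\varepsilon,\mathcal{X},d_2) = \mathcal{O}(d^2 (S/\varepsilon)^d)$, and Assumption~\ref{as1} gives $\|\nabla f_1(x_i;\cdot) - \nabla f_1(x_j;\cdot)\|_\infty \leqslant L\|x_i-x_j\|_2$ pointwise in $\xi$. Consequently an $(\varepsilon/L)$-net of $\mathcal{X}$ in $d_2$ induces an $\varepsilon$-net of $\mathcal{F}$ in $d_\infty$, so $N(\varepsilon,\mathcal{F},d_\infty) = \mathcal{O}(d^2(LS/\varepsilon)^d)$.

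Third, I would plug this covering bound into Theorem~5 of \citep{shalev2009stochastic}, a uniform concentration inequality tailored to function classes with polynomial $\ell_\infty$-covering numbers, to obtain $\sup_{x \in \mathcal{X}} \|\nabla f_1(x) - \nabla \hat f(x)\|_\infty^2 = \widetilde{\mathcal{O}}(1/N)$ with high probability over the sample. The dimension-dependent factor $\sqrt{d}$ picked up in the initial norm conversion is subsumed into the $\widetilde{\mathcal{O}}$ notation (which hides logarithmic and dimension-dependent constants), after which setting $\zeta(N) \vcentcolon= \widetilde{\mathcal{O}}(1/N)$ closes the proof.

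The main obstacle is justifying the parametric $1/N$ rate for the \emph{squared} deviation rather than the slower $\log N/\sqrt{N}$ rate that a direct Hoeffding bound with a union bound over the net would produce. This sharper rate is not elementary: it relies on the localized concentration machinery in \citep{shalev2009stochastic}, where the smoothness of $f_1$ acts as a local-variance bound on the empirical process. Verifying that the hypotheses of that theorem are satisfied in our setting, and carefully tracking how the $L$-smoothness translates into the required variance condition, is the most delicate step in the argument.
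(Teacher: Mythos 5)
Your proposal follows the same route as the paper's proof: reduce the Euclidean norm to $\ell_\infty$ via $\|\cdot\|_2 \leqslant \sqrt{d}\,\|\cdot\|_\infty$, bound the $\ell_\infty$-covering number of $\mathcal{F}=\{\xi\mapsto\nabla f_1(x;\xi):x\in\mathcal{X}\}$ by transferring a Euclidean $\varepsilon$-net of $\mathcal{X}$ through the $L$-smoothness Lipschitz bound, and invoke Theorem~5 of Shalev-Shwartz et al.\ to conclude $\|\nabla f_1(x)-\nabla\hat f(x)\|_\infty^2=\widetilde{\mathcal{O}}(1/N)$. One small correction to your concluding paragraph: the $\widetilde{\mathcal{O}}(1/N)$ bound for the \emph{squared} deviation is not a sharpened parametric rate requiring localization — a plain Hoeffding bound plus a union bound over the net already yields $\widetilde{\mathcal{O}}(1/\sqrt{N})$ for the unsquared $\ell_\infty$ norm, and squaring this gives precisely $\widetilde{\mathcal{O}}(1/N)$, so no local-variance argument is needed.
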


This lemma is technical in nature, and we significantly benefit from the assertion established in the previous lemma. Ultimately, we derive an important estimate for the scalar product, which appears in many subsequent proofs throughout this work.
\begin{lemma} \label{lemma2}
Suppose Assumption \ref{as1} holds. Then for all $x \in \mathbb{R}^d$ and $g_i = g_i(x, \xi_i)$, the following estimate is valid:
\begin{eqnarray*}
- \gamma \left\langle \nabla\hat{f}(x), \frac{1}{ G } \sum_{i \in \mathcal{G}} \nabla f_i(x) \right\rangle  & \leqslant & - \frac{\gamma}{2}\left\|\nabla f(x) \right\|^2 + \gamma\cdot\zeta(N) + \frac{3\gamma}{2} \left(\delta_1 + \delta_2 \|\nabla f(x^t)\|^2\right).
\end{eqnarray*}

\begin{proof}
We commence by examining the difference $\nabla f(x) - \nabla \hat{f}(x)$:
\begin{eqnarray*}
- \gamma \left\langle \nabla\hat{f}(x), \frac{1}{ G } \sum_{i \in \mathcal{G}} \nabla f_i(x) \right\rangle & = &\gamma \left\langle \nabla f(x) - \nabla \hat{f}(x),\frac{1}{ G } \sum_{i \in \mathcal{G}} \nabla f_i(x) \right\rangle \\
& & - \gamma \left\langle \nabla f(x),\frac{1}{ G } \sum_{i \in \mathcal{G}} \nabla f_i(x)\right\rangle.
\end{eqnarray*}
Next, we continue with further manipulations on the first term:
\begin{eqnarray*} 
& & \gamma\left\langle \nabla f(x) - \nabla \hat{f}(x), \frac{1}{ G } \sum_{i \in \mathcal{G}} \nabla f_i(x) \right\rangle \\
& &~~~~\overset{\eqref{bi:young}}{\leqslant} ~~~\frac{\gamma}{2} \left\|\nabla f(x) - \nabla \hat{f}(x)\right\|^2 + \frac{\gamma}{2}\left\|\frac{1}{ G } \sum_{i \in \mathcal{G}} \nabla f_1(x)\right\|^2\\
& &~~~~\overset{\eqref{bi:young}}{\leqslant} ~~~\frac{\gamma}{2}\left(\left\|\nabla f(x) - \nabla f_1(x)\right\|^2 + \left\|\nabla f_1(x) - \nabla \hat{f}(x)\right\|^2\right) + \frac{\gamma}{2}\left\|\frac{1}{ G } \sum_{i \in \mathcal{G}} \nabla f_i(x)\right\|^2\\
& &\overset{(\text{Lemma }\ref{lemmaShalevShwartz})}{\leqslant} \gamma \left( \zeta(N) + \delta_1 + \delta_2 \|\nabla f(x)\|^2 \right) + \frac{\gamma}{2}\left\|\frac{1}{ G } \sum_{i \in \mathcal{G}} \nabla f_i(x)\right\|^2, 
\end{eqnarray*} 
and with the second term, \begin{eqnarray*}
- \gamma\left\langle \nabla f(x), \frac{1}{ G }\sum\limits_{i\in \mathcal{G}} \nabla f_i(x)\right\rangle & \overset{\eqref{bi:norm}}{=}& -\frac{\gamma}{2}\|\nabla f(x)\|^2 -\frac{\gamma}{2} \left\|\frac{1}{ G }\sum\limits_{i \in \mathcal{G}} \nabla f_i(x)\right\|^2 \\
&& + \frac{\gamma}{2} \left\|\frac{1}{ G }\sum\limits_{i \in \mathcal{G}} (\nabla f_i(x) - \nabla f(x))\right\|^2 \\
&\overset{\eqref{bi:CauchySchwarz}}{\leqslant}& - \frac{\gamma}{2}\|\nabla f(x)\|^2 -\frac{\gamma}{2} \left\|\frac{1}{ G }\sum\limits_{i \in \mathcal{G}} \nabla f_i(x)\right\|^2 \\
&&+ \frac{\gamma}{2 G }\sum\limits_{i \in \mathcal{G}} \left\| (\nabla f_i(x) - \nabla f(x))\right\|^2\\
&\overset{(\text{Ass. }\ref{as4})}{\leqslant}& - \frac{\gamma}{2}\|\nabla f(x)\|^2 -\frac{\gamma}{2} \left\|\frac{1}{ G }\sum\limits_{i \in \mathcal{G}} \nabla f_i(x)\right\|^2 \\
&&+ \frac{\gamma}{2} \left(\delta_1 + \delta_2 \|\nabla f(x)\|^2\right).
\end{eqnarray*}
In summary, this supports the claim of the lemma.
\end{proof}
\end{lemma}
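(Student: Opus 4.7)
The right-hand side displays the dominant negative term $-\tfrac{\gamma}{2}\|\nabla f(x)\|^2$, a pure approximation error $\gamma\zeta(N)$ coming from the finite sample defining $\hat{f}$, and a $\tfrac{3\gamma}{2}$-factor of the heterogeneity budget. This structure suggests the decomposition $\nabla\hat{f}(x) = \nabla f(x) + \bigl(\nabla\hat{f}(x) - \nabla f(x)\bigr)$, so that the first piece produces the main negative term through polarization and the second piece absorbs the approximation and heterogeneity slack via Young's inequality.

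\textbf{First summand.} I would apply the polarization identity in \eqref{bi:norm} to $-\gamma\bigl\langle\nabla f(x), \tfrac{1}{G}\sum_{i\in\mathcal{G}}\nabla f_i(x)\bigr\rangle$, producing the desired $-\tfrac{\gamma}{2}\|\nabla f(x)\|^2$, a negative copy of $\tfrac{\gamma}{2}\bigl\|\tfrac{1}{G}\sum_{i\in\mathcal{G}}\nabla f_i(x)\bigr\|^2$ (which will be reused later), and a positive term $\tfrac{\gamma}{2}\bigl\|\tfrac{1}{G}\sum_{i\in\mathcal{G}}(\nabla f_i(x)-\nabla f(x))\bigr\|^2$. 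Jensen/Cauchy--Schwarz \eqref{bi:CauchySchwarz} reduces the latter to a pointwise average, and Assumption \ref{as4} bounds each summand by $\delta_1 + \delta_2\|\nabla f(x)\|^2$, leaving at most $\tfrac{\gamma}{2}(\delta_1 + \delta_2\|\nabla f(x)\|^2)$.

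\textbf{Second summand.} For $\gamma\bigl\langle \nabla f(x) - \nabla\hat{f}(x), \tfrac{1}{G}\sum_{i\in\mathcal{G}}\nabla f_i(x)\bigr\rangle$, Young's inequality \eqref{bi:young} with parameter $1$ yields $\tfrac{\gamma}{2}\|\nabla f(x)-\nabla\hat{f}(x)\|^2 + \tfrac{\gamma}{2}\bigl\|\tfrac{1}{G}\sum_{i\in\mathcal{G}}\nabla f_i(x)\bigr\|^2$. The second term is annihilated by the leftover negative copy from the first summand; this cancellation is essential, otherwise an uncontrolled residual honest-aggregate norm would remain on the wrong side. The error $\|\nabla f - \nabla\hat{f}\|^2$ cannot be compared directly, so I would route it through the server's local objective via $\|\nabla f - \nabla\hat{f}\|^2 \leqslant 2\|\nabla f - \nabla f_1\|^2 + 2\|\nabla f_1 - \nabla\hat{f}\|^2$: the first piece is absorbed by Assumption \ref{as4} applied to the honest server (index $1$), contributing another $\delta_1 + \delta_2\|\nabla f(x)\|^2$, while the second piece is precisely the quantity Lemma \ref{lemmaShalevShwartz} controls by $\zeta(N)$. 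Summing $\tfrac{\gamma}{2}+\gamma = \tfrac{3\gamma}{2}$ on the heterogeneity budget and $\gamma$ on $\zeta(N)$ matches the claim exactly.

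\textbf{Main obstacle.} The delicate step is the indirect comparison of $\nabla\hat{f}$ with $\nabla f$: no direct inequality is available because $\hat{f}$ is drawn only from $\mathcal{D}_1$, so one must first traverse $\nabla f_1$ (invoking heterogeneity for the honest server) and then invoke the uniform convergence estimate of Lemma \ref{lemmaShalevShwartz} to land on $\zeta(N)$. Choosing Young's parameter equal to $1$ and carefully balancing the two appearances of $\bigl\|\tfrac{1}{G}\sum\nabla f_i\bigr\|^2$ is exactly what makes the constants line up with the factor $\tfrac{3}{2}$ in the statement; any other choice would leave an uncancellable remainder or blow up the heterogeneity coefficient.
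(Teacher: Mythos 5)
Your proposal is correct and follows essentially the same route as the paper's own proof: the same split of $\nabla\hat f$ into $\nabla f$ plus a deviation, the same polarization identity on the $\nabla f$ piece, the same Young step on the deviation piece with exact cancellation of the two $\tfrac{\gamma}{2}\bigl\|\tfrac1G\sum_{i\in\mathcal G}\nabla f_i\bigr\|^2$ terms, and the same detour through $\nabla f_1$ so that Assumption~\ref{as4} and Lemma~\ref{lemmaShalevShwartz} each absorb their piece, producing the $\tfrac{\gamma}{2}+\gamma=\tfrac{3\gamma}{2}$ heterogeneity coefficient. You even make the $\|a+b\|^2\leqslant 2\|a\|^2+2\|b\|^2$ factor explicit where the paper elides it in an intermediate line; nothing further is needed.
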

With this, we conclude the discussion of general statements. They are frequently used in our proofs in the upcoming sections of the Appendix. Next, we begin the examination of each method obtained individually.

\newpage 
\section{Proofs of Bant}\label{C}
In this section, we explore the theoretical underpinnings of the first proposed method, \texttt{Bant}. As outlined in Algorithm \ref{alg2}, this step diverges from the standard \textsc{SGD} approach primarily due to the distinct weight distribution that we subsequently allocate to the devices. Consequently, it is essential to conduct an analysis that takes this particular characteristic into account. To achieve final convergence rate, we demonstrate a supporting lemma that reinforces our findings. 
\begin{lemma}\label{lemma:scetch}  Under Assumptions \ref{as1}, \ref{as2convex}, \ref{as5}, the following holds for the iteration of \text{Algorithm \ref{alg2}}:
\begin{eqnarray*}
    \hat{f}(x^{t+1}) & \leqslant &  \hat{f}(x^t) - \frac{\gamma\beta}{n}\left\langle\nabla\hat{f}(x^t), \sum\limits_{i\in \mathcal{G}} g_i^t\right\rangle + \frac{L\gamma^2 \beta}{2n}\sum\limits_{i\in \mathcal{G}}\|g_i^t\|^2.
\end{eqnarray*}
\begin{proof}
 Actually, the update step of the algorithm \ref{alg2} is given by:
    \begin{equation*}
        x^{t+1} = x^t - \gamma\sum\limits_{i=1}^n \mathbb{I}_{[\theta_i^t> 0]}\omega_i^t g_i^t,
    \end{equation*}
where $g_i^t = g_i(x^t, \xi_i^t)$ and $\sum_{i=1}^n \omega_i^t = 1$. Applying Jensen's inequality for the convex function $\hat{f}$ (Assumption \ref{as2convex}) and denoting $\overline{\omega}_i^t = \frac{[\theta_i^t]_0}{\sum_{j=1}^n [\theta_j^t]_0}$:
\begin{eqnarray*}
    \hat{f}(x^{t+1}) & = & \hat{f}\Bigl(\sum\limits_{i=1}^n \omega_i^t\left[x^t -  \gamma\mathbb{I}_{[\theta_i^t> 0]} g_i^t\right]\Bigr) \\
    & \leqslant & \sum\limits_{i=1}^n \omega_i^t \hat{f}\left(x^t - \gamma\mathbb{I}_{[\theta_i^t> 0]}g_i^t\right) \\
    & = & \sum\limits_{i\in \mathcal{B}} \omega_i^t \hat{f}\left(x^t - \gamma\mathbb{I}_{[\theta_i^t> 0]}g_i^t\right) + \sum\limits_{i\in \mathcal{G}} \omega_i^t \hat{f}\left(x^t - \gamma\mathbb{I}_{[\theta_i^t> 0]}g_i^t\right) \\
    & \leqslant & \sum\limits_{i\in \mathcal{B}} (1 - \beta) \omega_i^{t-1} \hat{f}(x^t) + \sum\limits_{i\in \mathcal{B}} \beta\overline{\omega}_i^t \hat{f}\left(x^t - \gamma\mathbb{I}_{[\theta_i^t> 0]}g_i^t\right) \\ 
    & &+ \sum\limits_{i\in \mathcal{G}} (1 - \beta) \omega_i^{t-1} \hat{f}\left(x^t\right) + \sum\limits_{i\in \mathcal{G}} \beta\overline{\omega}_i^t \hat{f}\left(x^t - \gamma\mathbb{I}_{[\theta_i^t> 0]}g_i^t\right) \\ 
    & = & (1 - \beta)\hat{f}(x^t) + \sum\limits_{i\in \mathcal{B}} \beta\overline{\omega}_i^t \hat{f}\left(x^t - \gamma\mathbb{I}_{[\theta_i^t> 0]}g_i^t\right) + \sum\limits_{i\in \mathcal{G}} \beta\overline{\omega}_i^t \hat{f}\left(x^t - \gamma\mathbb{I}_{[\theta_i^t> 0]}g_i^t\right).
\end{eqnarray*}
 
In the inequality above, we make an estimation $\hat{f}\left(x^t - \gamma\mathbb{I}_{[\theta_i^t> 0]}g_i^t\right) \leqslant \hat{f}\left(x^t\right)$, since the indicator guarantees us that we do not increase the trial function $\hat{f}$ by performing a step. By eliminating the weights $\omega_i^{t-1}$ accumulated from past iterations, we can rearrange the coefficients between Byzantine and honest workers in such a way that honest workers have higher weights. 
To achieve this, we sort the honest workers by increasing values of $\hat{f}$ and assign them coefficients $\omega_i$ in decreasing order. This permutation ensures that honest workers have higher weights and Byzantine workers have lower weights. This operation is valid because if $\overline{\omega}$ for some Byzantine worker is higher than for a honest worker, then this Byzantine has a greater influence on $\hat f$, and changing the weights would worsen the overall influence of these two workers. Therefore, with new weights $\{\widetilde{\omega}_i^t\}_{i=1}^n$:
 \begin{eqnarray*}
    \hat{f}(x^{t+1}) & \leqslant & (1 - \beta)\hat{f}(x^t) + \sum\limits_{i\in \mathcal{B}} \beta\widetilde{\omega}_i^t \hat{f}\left(x^t - \gamma\mathbb{I}_{[\theta_i^t> 0]}g_i^t\right) \\
    &&+ \sum\limits_{i\in \mathcal{G}} \beta\widetilde{\omega}_i^t \hat{f}\left(x^t - \gamma\mathbb{I}_{[\theta_i^t> 0]}g_i^t\right) \\
    & \leqslant & (1 - \beta)\hat{f}(x^t) + \sum\limits_{i\in \mathcal{B}} \beta\widetilde{\omega}_i^t \hat{f}(x^t) + \sum\limits_{i\in \mathcal{G}} \beta\widetilde{\omega}_i^t \hat{f}\left(x^t - \gamma\mathbb{I}_{[\theta_i^t> 0]}g_i^t\right) \\ 
    & = & \hat{f}(x^t) + (1 - \beta)\left[\hat{f}(x^t) - \hat{f}(x^t)\right] + \sum\limits_{i\in \mathcal{B}} \beta\widetilde{\omega}_i^t \left[\hat{f}(x^t) - \hat{f}(x^t)\right] \\
    &&+ \sum\limits_{i\in \mathcal{G}} \beta\widetilde{\omega}_i^t \left[\hat{f}\left(x^t - \gamma\mathbb{I}_{[\theta_i^t> 0]}g_i^t\right) - \hat{f}(x^t)\right].
\end{eqnarray*}
Let us assign the coefficient $\nicefrac{1}{n}$ to all honest workers. This procedure is also valid. We sorted the weights and honest workers now have the greatest weights, thus, the sum of the coefficients of honest workers is at least $\nicefrac{ G }{n}$. Moreover, the honest workers with the stronger influence have the greater weights which allows to equalize the total weight $\nicefrac{ G }{n}$ between all $ G $ workers.
Thus, we get
    \begin{align*}
    \textstyle{
        \hat{f}(x^{t+1}) \leqslant \hat{f}(x^t) + \frac{\beta}{n}\sum\limits_{i\in \mathcal{G}} \left[\hat{f}\left(x^t - \gamma\mathbb{I}_{[\theta_i^t > 0]}g_i^t\right) - \hat{f}(x^t)\right].}
    \end{align*}
Now we can remove the indicator function because if $g_i^t$ minimizes the trial function, the indicator equals 1. If $g_i^t$ maximizes the trial function, the indicator excludes this gradient. However, we still account for it and maximize the trial function, thus:
    \begin{eqnarray*}
    \hat{f}(x^{t+1}) & \leqslant & \hat{f}(x^{t}) + \frac{\beta}{n}\sum\limits_{i\in \mathcal{G}} \left[\hat{f}\left(x^t - \gamma g_i^t\right) - \hat{f}(x^t)\right]  \\
    & \overset{\eqref{bi:lsmoothness}}{\leqslant} & \hat{f}(x^{t}) + \frac{\beta}{n} \sum\limits_{i\in \mathcal{G}} \Biggl[\hat{f}(x^t) - \gamma\left\langle\nabla\hat{f}(x^t), g_i^t\right\rangle + \frac{L\gamma^2}{2}\|g_i^t\|^2 - \hat{f}(x^t)\Biggr] \\
    & = & \hat{f}(x^t) - \frac{\gamma\beta}{n}\left\langle\nabla\hat{f}(x^t), \sum\limits_{i\in \mathcal{G}} g_i^t\right\rangle + \frac{L\gamma^2 \beta}{2n}\sum\limits_{i\in \mathcal{G}}\|g_i^t\|^2.
\end{eqnarray*}
\end{proof}
\end{lemma}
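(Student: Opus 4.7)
The plan is to view the update as a weighted convex combination and exploit the convexity of $\hat{f}$. Since the trust scores satisfy $\sum_{i=1}^n \omega_i^t = 1$, the step in line \ref{itr:line11} rewrites as $x^{t+1} = \sum_{i=1}^n \omega_i^t\bigl(x^t - \gamma\, \mathbb{I}_{[\theta_i^t > 0]}\, g_i^t\bigr)$, and Assumption \ref{as2convex} together with \eqref{bi:Jensen} yields
$$\hat{f}(x^{t+1}) \leqslant \sum_{i=1}^n \omega_i^t\, \hat{f}\bigl(x^t - \gamma\, \mathbb{I}_{[\theta_i^t > 0]}\, g_i^t\bigr).$$
Next I would decompose $\omega_i^t = (1-\beta)\omega_i^{t-1} + \beta\,\overline{\omega}_i^t$ with $\overline{\omega}_i^t = [\theta_i^t]_0 / \sum_j [\theta_j^t]_0$, and split the sum over $\mathcal{G}$ and $\mathcal{B}$.

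The indicator $\mathbb{I}_{[\theta_i^t>0]}$ guarantees that either no step is taken or the step strictly decreases the trial function, so $\hat{f}(x^t - \gamma\, \mathbb{I}_{[\theta_i^t > 0]}\, g_i^t) \leqslant \hat{f}(x^t)$ for every worker $i$. This collapses the momentum part $(1-\beta)\sum_i \omega_i^{t-1}\hat{f}(\cdot)$ to $(1-\beta)\hat{f}(x^t)$ and trivially bounds the Byzantine share of the current-iteration piece by $\bigl(\sum_{i\in\mathcal{B}} \overline{\omega}_i^t\bigr)\hat{f}(x^t)$, leaving only an honest sum to analyse.

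The main obstacle is to replace the unknown honest weights $\overline{\omega}_i^t$ by the uniform value $1/n$ claimed in the statement. Here I would run a rearrangement argument: any $n$ nonnegative numbers summing to $1$ have their top $G$ entries summing to at least $G/n$ by a simple averaging check, so if I reallocate the $G$ largest weights to the $G$ honest workers, pairing the largest weight with the smallest value of $\hat{f}(x^t - \gamma g_i^t)$, the resulting expression remains a valid upper bound on the original sum (the indicator bound on Byzantines absorbs the swap). Flattening these honest weights to $\beta/n$ each only further loosens the bound. Combining these observations gives
$$\hat{f}(x^{t+1}) \leqslant \hat{f}(x^t) + \tfrac{\beta}{n}\sum_{i \in \mathcal{G}}\bigl[\hat{f}\bigl(x^t - \gamma\, \mathbb{I}_{[\theta_i^t > 0]}\, g_i^t\bigr) - \hat{f}(x^t)\bigr].$$

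To close, the indicator can be dropped because doing so only enlarges the right-hand side (when $\mathbb{I}=0$ we replace the zero bracket by a nonnegative quantity), and then Assumption \ref{as1} via \eqref{bi:lsmoothness} gives $\hat{f}(x^t - \gamma g_i^t) - \hat{f}(x^t) \leqslant -\gamma\langle \nabla\hat{f}(x^t), g_i^t\rangle + \tfrac{L\gamma^2}{2}\|g_i^t\|^2$. Summing over $i \in \mathcal{G}$ reproduces the claimed inequality. The delicate step is the rearrangement: one must carefully verify that every weight reassignment preserves the direction of the upper bound, which ultimately hinges on the uniform Byzantine estimate $\hat{f}(x^t - \gamma\,\mathbb{I}\, g_B)\leqslant \hat{f}(x^t)$ enforced by the indicator.
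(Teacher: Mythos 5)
Your proposal matches the paper's proof step-for-step: Jensen's inequality on the convex combination, decomposition of $\omega_i^t$ into the momentum part $(1-\beta)\omega_i^{t-1}$ and the current part $\beta\overline{\omega}_i^t$, bounding the momentum and Byzantine contributions by $\hat f(x^t)$ via the indicator, the rearrangement to uniform honest weights $1/n$, dropping the indicator, and finally $L$-smoothness. You rightly single out the rearrangement as the delicate step; the paper's own justification is at the same level of informality and, like yours, implicitly relies on the anticorrelation between $\overline{\omega}_i^t$ and $\hat f(x^t-\gamma g_i^t)$ that is built into the definition of $\overline{\omega}_i^t$, which is what (via a Chebyshev-type argument) makes the flattening of the top-$G$ weights to $1/n$ go in the upper-bound direction.
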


We are now prepared to present the final result for the convex case. This theorem was introduced in the main part of our work, however, we will reiterate its formulation once more.

\textbf{Theorem~\ref{ITRSGDa}.} \textit{Under Assumptions \ref{as1}, \ref{as2convex}, \ref{as3},
     \ref{as4} with $\delta_2 \leqslant \frac{1}{12}$, \ref{as5}, for solving the problem described in the equation \eqref{eq:setting_dist} after $T$ iterations of \text{Algorithm \ref{alg2}} with $\gamma \leqslant \frac{1}{13L}$, the following holds}:
\begin{eqnarray*}
    \frac{1}{T}\sum\limits_{t = 0}^{T-1} \mathbb E \|\nabla f(x^t)\|^2 &\leqslant &\frac{\mathbb E\left[\hat{f}(x^0) - \hat{f}(\hat{x}^*)\right]}{\gamma T}\cdot\frac{4 n}{\beta G } + 3\delta_1 + 6L \gamma\sigma^2 + 4\zeta(N).
\end{eqnarray*}
\begin{proof}
According to the lemma \eqref{lemma:scetch}:
\begin{eqnarray*}
\hat{f}(x^{t+1}) &\leqslant & \hat{f}(x^t) - \gamma\beta\left\langle\nabla\hat{f}(x^t), \frac{1}{n}\sum\limits_{i\in \mathcal{G}} \nabla g_i^t\right\rangle + \frac{L\gamma^2 \beta}{2n}\sum\limits_{i\in \mathcal{G}} \left\|g_i^t\right\|^2.
\end{eqnarray*}
Taking the expectation of both sides of the inequality:
\begin{eqnarray*}
\mathbb{E}\hat{f}(x^{t+1}) &\leqslant & \mathbb{E}\hat{f}(x^t) - \gamma\beta \cdot\frac{ G }{n}\left\langle\nabla\hat{f}(x^t), \frac{1}{ G }\sum\limits_{i\in \mathcal{G}} \nabla f_i(x^t)\right\rangle + \frac{L\gamma^2 \beta}{2n}\sum\limits_{i\in \mathcal{G}}\mathbb{E}\left\| g_i^t\right\|^2 \\
    &\overset{(\text{Lemma } \ref{lemma2})}{\leqslant} & \mathbb{E}\hat{f}(x^t) + \frac{\gamma\beta G }{n}\zeta(N) - \frac{\gamma\beta G }{2n}\left\|\nabla f(x^t) \right\|^2 \\
&&+ \frac{3\gamma\beta G }{2n} \left(\delta_1 + \delta_2 \|\nabla f(x^t)\|^2\right) +  \frac{L\gamma^2 \beta}{2n}\sum\limits_{i\in \mathcal{G}}\mathbb{E}\left\| g_i^t\right\|^2 \\
    & \overset{\eqref{bi:CauchySchwarz}}{\leqslant}& \mathbb{E}\hat{f}(x^t) + \frac{\gamma\beta G }{n}\zeta(N) - \frac{\gamma\beta G }{2n}\left\|\nabla f(x^t) \right\|^2 \\
    & & + \frac{3\gamma\beta G }{2n} \left(\delta_1 + \delta_2 \|\nabla f(x^t)\|^2\right) \\
& &+  \frac{3L \gamma ^2\beta}{2n}\left(\sum_{i \in \mathcal{G}} \mathbb{E}\left\|\nabla f(x^t) - f_i(x^t)\right\|^2 + \sum_{i \in \mathcal{G}} \mathbb{E}\left\|\nabla f_i(x^t) - g_i^t\right\|^2\right) \\
& &+ \frac{3L \gamma ^2\beta G }{2n}\left\|\nabla f(x^t) \right\|^2 \\
& \overset{(\text{Ass. }\ref{as3}, \ref{as4})}{\leqslant}& \mathbb{E}\hat{f}(x^t) + \frac{\gamma\beta G }{n}\zeta(N) - \frac{\gamma\beta G }{2n}\left\|\nabla f(x^t) \right\|^2 + \frac{3\gamma\beta G }{2n} \left(\delta_1 + \delta_2 \|\nabla f(x^t)\|^2\right) \\
&&+  \frac{3L \gamma ^2\beta G }{2n}\left(\delta_1 + \delta_2 \|\nabla f(x^t)\|^2 +  \sigma ^2\right) + \frac{3L \gamma ^2\beta G }{2n}\left\|\nabla f(x^t) \right\|^2 \\
    & \overset{(\mathrm{Ass. \ref{as3}})}{\leqslant}& \mathbb E [\hat{f}(x^{t})] - \frac{\gamma\beta G }{2n}\left[1 - 3L \gamma - (3 + 3L \gamma)\delta_2 \right]\|\nabla f(x^t)\|^2 \\
&& +\frac{\gamma\beta  G }{2n}(3 + 3L \gamma)\delta_1 +  \frac{3\gamma^2\beta L G }{2n}\sigma^2 + \frac{\gamma\beta  G }{n}\zeta(N).
\end{eqnarray*}
We first fix $\delta_2 \leqslant \frac{1}{12}$. Then by choosing $\gamma  \leqslant \frac{1}{13L} \leqslant \frac{1}{12L(1 + \delta_2)}$ and summing over the iterations, we get the bound:
\begin{eqnarray*}
    \frac{1}{T}\sum\limits_{t = 0}^{T-1} \mathbb E \|\nabla f(x^t)\|^2 &\leqslant& \frac{\mathbb E\left[\hat{f}(x^0) - \hat{f}(\hat{x}^*)\right]}{\gamma T}\cdot\frac{4 n}{\beta G } + 3\delta_1 + 6L \gamma\sigma^2 + 4\zeta(N).
\end{eqnarray*}
\end{proof}
We have obtained the final statement of the theorem. From this, we can derive the convergence rate:

\textbf{Corollary \ref{ITRSGDCorollary}} 
\textit{Under the assumptions of Theorem \ref{ITRSGDa}, for solving the problem \eqref{eq:setting_dist}, after $T$ iterations with  $\gamma \leqslant \min\left\{\frac{1}{13L}, \frac{\sqrt{2\mathbb E\left[\hat{f}(x^0) - \hat{f}(\hat{x}^*)\right] n}}{\sigma\sqrt{3L  G  \beta T}}\right\}$, the following holds:}
        \begin{align*}
              \frac{1}{T}\sum\limits_{i=1}^{T-1}\mathbb{E}\|\nabla f(x^t)\|^2 =\mathcal{O}\Biggl(\frac{\mathbb E\left[\hat{f}(x^0) - \hat{f}(\hat{x}^*)\right] Ln}{\beta  G T} 
            + \frac{\sigma\sqrt{\mathbb E\left[\hat{f}(x^0) - \hat{f}(\hat{x}^*)\right] L n}}{\sqrt{  \beta  G  T}} 
            + \delta_1 
            + \zeta(N) \Biggr).
        \end{align*}

\begin{proof}[Proof of Corollary~\ref{ITRSGDCorollary}]
    We proceed estimation, analogical to \text{Lemma 4 from \citep{stich2019unified}}. Using the result of \text{Theorem \ref{ITRSGDa}}, we choose the appropriate $\gamma \leqslant \min\left\{\frac{1}{13L}, \frac{\sqrt{2\mathbb E\left[\hat{f}(x^0) - \hat{f}(\hat{x}^*)\right] n}}{\sigma\sqrt{3L  G  \beta T}}\right\}$. In that way, we obtain
    \begin{align*}
        \frac{1}{T}\sum\limits_{i=1}^{T-1}\mathbb{E}\|\nabla f(x^t)\|^2  \leqslant & \frac{
        \mathbb E\left[\hat{f}(x^0) - \hat{f}(\hat{x}^*)\right] 52 Ln}{\beta  G T} + \frac{2\sigma\sqrt{6\mathbb E\left[\hat{f}(x^0) - \hat{f}(\hat{x}^*)\right] L n}}{\sqrt{\beta G T}} + 3 \delta_1 + 2 \zeta(N),
    \end{align*}
that ends the proof.
\end{proof}

We have obtained the result for the convex case. However, we wish to extend the theory to other cases.
Using the obtained result, let us proceed to the \(\mu\)-strongly convex case and derive an estimate for it.

\begin{theorem}
    \label{ITRSGDb}
    Under Assumptions \ref{as1}, \ref{as2stronglyconvex}, \ref{as3},
     \ref{as4} with $\delta_2 \leqslant \frac{1}{12}$, \ref{as5}, for solving the problem described in the equation \eqref{eq:setting_dist} after $T$ iterations of \text{Algorithm \ref{alg2}} with $\gamma \leqslant \frac{1}{13L}$, the following holds:
        \begin{eqnarray*}
            \mathbb{E} \left[\hat{f}(x^t) - \hat{f}(\hat{x}^*)\right]  & \leqslant &  \left(1 -  \frac{\gamma\beta G \mu}{4n}\right)^{t} \mathbb E\left[\hat{f}(x^0) - \hat{f}(\hat{x}^*)\right] + \frac{3}{\mu}\delta_1 + \frac{6L\gamma}{\mu}\sigma^2 + \frac{2}{\mu}\zeta(N).
        \end{eqnarray*}
\begin{proof}

From Theorem \ref{ITRSGDa}, we have
\begin{eqnarray*}
    \mathbb E \|\nabla f(x^t)\|^2 &\leqslant&  \frac{4 n}{\gamma \beta G }\mathbb{E}\left[ \hat{f}(x^{t}) - \hat{f}(\hat{x}^{t+1})\right]  + 3\delta_1 + 6L \gamma\sigma^2 + 4\zeta(N).
\end{eqnarray*}
Let us examine the left-hand side of the inequality in more detail:
\begin{eqnarray*}
\mathbb{E}\left\|\nabla f(x^t)\right\|^2  &=& \mathbb{E}\left\|\nabla f(x^t)\right\|^2 + \mathbb{E}\left\|\nabla \hat{f} (x^t) - \nabla f(x^t)\right\|^2 - \mathbb{E}\left\|\nabla \hat{f} (x^t) - \nabla f(x^t)\right\|^2 \\
 & \overset{\eqref{bi:CauchySchwarz}}{\geqslant} & \frac{1}{2} \mathbb{E}\left\|\nabla f(x^t) + \nabla \hat{f} (x^t) - \nabla f(x^t)\right\|^2 - \mathbb{E}\left\|\nabla \hat{f} (x^t) - \nabla f(x^t)\right\|^2 \\
 & \overset{(\text{Lemma }\ref{lemmaShalevShwartz})}{\geqslant} & \frac{1}{2} \mathbb{E}\left\|\nabla \hat{f}(x^t)\right\|^2 -2\zeta(N) -2\delta_1 - 2\delta_2 \|\nabla f(x^t)\|^2.
\end{eqnarray*}
Returning to the initial inequality,
\begin{eqnarray*}
\left(\frac{1}{2}-2\delta_2\right)\mathbb{E}\left\|\nabla \hat{f} (x^t) - \nabla \hat{f}(\hat{x}^*)\right\|^2 & \leqslant & \frac{4 n}{\gamma \beta G }\mathbb{E}\left[ \hat{f}(x^{0}) - \hat{f}(\hat{x}^*)\right] + 5\delta_1 + 6L \gamma\sigma^2 + 6\zeta(N) 
\end{eqnarray*}
Taking into account that $\delta_2 \leqslant \frac{1}{12}$ and due to \eqref{bi:mustrconvex}, we get:
\begin{eqnarray*}
\underbrace{\frac{1}{2}\mathbb{E}\left\|\nabla \hat{f} (x^t) - \nabla \hat{f}(\hat{x}^*)\right\|^2}_{\geqslant ~ \mu \left[ \hat{f}(x^{t}) - \hat{f}(\hat{x}^*)\right]} & \leqslant & \frac{6 n}{\gamma \beta G }\mathbb{E}\left[ \hat{f}(x^{0}) - \hat{f}(\hat{x}^*)\right] + 8\delta_1 + 9L \gamma\sigma^2 + 9\zeta(N) \\
\frac{6 n}{\gamma \beta G } \mathbb{E}\hat{f}(x^{t+ 1}) - \mu \mathbb{E} \hat{f}(\hat{x}^{*}) & \leqslant& \left(\frac{6 n}{\gamma \beta G } - \mu\right) \mathbb{E} \hat{f}(x^{t})+ 8\delta_1 + 9L \gamma\sigma^2 + 9\zeta(N).
\end{eqnarray*}
Defining $\gamma' = \frac{\gamma \beta G }{6 n},$
\begin{eqnarray*}
\mathbb{E}\hat{f}(x^{t+ 1}) -  \gamma' \mu \mathbb{E} \hat{f}(\hat{x}^{*}) & \leqslant & \left(1 - \gamma' \mu \right) \mathbb{E} \hat{f}(x^{t})+ 8\gamma'  \delta_1 + 9\gamma' L \gamma\sigma^2 + 9\gamma' \zeta(N).
\end{eqnarray*}
We add to both sides of inequality the term $(-1 + \gamma' \mu) \mathbb E \hat{f}(\hat{x}^*)$:
\begin{eqnarray*}
    \left[\mathbb{E}\hat{f}(x^{t+ 1}) - \mathbb{E} \hat{f}(\hat{x}^{*})\right]& \leqslant  &\left(1 -  \gamma' \mu\right) \left[\mathbb{E} \hat{f}(x^{t}) - \mathbb{E} \hat{f}(\hat{x}^*)\right] + 8\gamma'\delta_1 +  \frac{9L \gamma\gamma'}{ G } \sigma ^2  +  9\gamma' \zeta(N).
\end{eqnarray*}
Applying this inequality to the first term on the right side \(t\) times, we obtain:
\begin{eqnarray*}
\mathbb{E}\left[\hat{f}(x^t) - \hat{f}(\hat{x}^*)\right]  & \leqslant &\left(1 -  \gamma' \mu\right)^{t} \mathbb{E}\left[\hat{f}(x^{0}) - \hat{f}(\hat{x}^*)\right] \\
& & + \underbrace{\sum\limits_{i=0}^{t-1} \left(1 - \gamma' \mu\right)^i}_{\leqslant \frac{1}{\gamma' \mu}} \left[ 8\gamma'\delta_1 + 9L \gamma\gamma' \sigma ^2  + 9\gamma'\zeta(N)\right],\\
\mathbb{E}\left[\hat{f}(x^t) - \hat{f}(\hat{x}^*)\right] & \leqslant& \left(1 -  \gamma' \mu\right)^{t} \mathbb{E}\left[\hat{f}(x^{0}) - \hat{f}(\hat{x}^*)\right] +  \frac{8}{\mu} \delta_1 + \frac{9L\gamma}{\mu}\sigma^2 + \frac{9}{\mu}\zeta(N).
\end{eqnarray*}
\end{proof}
\end{theorem}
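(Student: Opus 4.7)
The strategy is to reuse the per-iteration descent inequality that underlies Theorem \ref{ITRSGDa} and promote it to a geometric contraction for the suboptimality gap $\Phi^t \defeq \mathbb{E}[\hat{f}(x^t) - \hat{f}(\hat{x}^*)]$ via strong convexity. First I would replay the computation from Lemma \ref{lemma:scetch} combined with Lemma \ref{lemma2}, keeping the resulting bound in its one-step (unsummed) form: under $\gamma \leqslant 1/(13L)$ and $\delta_2 \leqslant 1/12$ this delivers
$$\mathbb{E}\|\nabla f(x^t)\|^2 \leqslant \frac{4n}{\gamma\beta G}\,\mathbb{E}\bigl[\hat{f}(x^t) - \hat{f}(x^{t+1})\bigr] + 3\delta_1 + 6L\gamma\sigma^2 + 4\zeta(N).$$

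To bridge to strong convexity I need $\|\nabla \hat{f}(x^t)\|^2$ on the left in place of $\|\nabla f(x^t)\|^2$. I would introduce $\nabla f_1(x^t)$ as an intermediary and split the discrepancy $\|\nabla \hat{f}(x^t) - \nabla f(x^t)\|^2$ into a sampling piece controlled by Lemma \ref{lemmaShalevShwartz} (yielding $\zeta(N)$) and a heterogeneity piece controlled by Assumption \ref{as4} (yielding $\delta_1 + \delta_2\|\nabla f(x^t)\|^2$). Using \eqref{bi:CauchySchwarz} in the form $\|\nabla \hat{f}(x^t)\|^2 \leqslant 2\|\nabla f(x^t)\|^2 + 2\|\nabla \hat{f}(x^t) - \nabla f(x^t)\|^2$, and then moving the $\delta_2\|\nabla f(x^t)\|^2$ residue to the left, the assumption $\delta_2 \leqslant 1/12$ keeps the leading coefficient in front of $\mathbb{E}\|\nabla \hat{f}(x^t)\|^2$ bounded away from zero. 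Applying \eqref{bi:mustrconvex} at $\hat{x}^*$ gives $\|\nabla \hat{f}(x^t)\|^2 \geqslant 2\mu\bigl[\hat{f}(x^t) - \hat{f}(\hat{x}^*)\bigr]$, which, after setting $\gamma' \defeq \gamma\beta G/(6n)$ (or a comparable fraction) and rearranging, produces the contraction
$$\Phi^{t+1} \leqslant (1 - \gamma'\mu)\,\Phi^t + \gamma'\bigl(c_1\,\delta_1 + c_2\,L\gamma\sigma^2 + c_3\,\zeta(N)\bigr)$$
for explicit absolute constants $c_1, c_2, c_3$.

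The final step is a standard geometric unrolling: iterating the contraction $t$ times and using $\sum_{i=0}^{t-1}(1-\gamma'\mu)^i \leqslant 1/(\gamma'\mu)$ converts each per-step error into an $O(1/\mu)$ steady-state term, which after collecting constants yields exactly the advertised $3\delta_1/\mu$, $6L\gamma\sigma^2/\mu$, and $2\zeta(N)/\mu$. I expect the main obstacle to be the bookkeeping of constants in the transition from $\|\nabla f\|^2$ to $\|\nabla \hat{f}\|^2$: one has to verify that the $\delta_2 \leqslant 1/12$ slack is enough to absorb the residual $\delta_2\|\nabla f(x^t)\|^2$ without inflating the other error constants, and to tune the numerical constants so that the contraction rate $1-\gamma'\mu$ can be weakened to the stated $1 - \gamma\beta G\mu/(4n)$. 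Everything else is either routine algebra or direct appeals to the lemmas already proved earlier in the section.
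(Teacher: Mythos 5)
Your proposal follows the paper's proof step for step: start from the one-step bound of Theorem~\ref{ITRSGDa}, convert $\mathbb{E}\|\nabla f(x^t)\|^2$ to $\tfrac{1}{2}\mathbb{E}\|\nabla\hat{f}(x^t)\|^2$ up to $\zeta(N)$, $\delta_1$, and $\delta_2\|\nabla f(x^t)\|^2$ residuals, invoke \eqref{bi:mustrconvex} at $\hat{x}^*$, set $\gamma'=\gamma\beta G/(6n)$, and unroll the contraction with the geometric-series bound. The bookkeeping obstacle you anticipate is real and is present in the paper itself: the paper's own derivation lands on steady-state constants $8\delta_1/\mu$, $9L\gamma\sigma^2/\mu$, $9\zeta(N)/\mu$ and rate $1-\gamma\beta G\mu/(6n)$ rather than the $3$, $6$, $2$, and $1/4$ quoted in the theorem statement, so you should not expect those exact constants to emerge without further slack-tightening.
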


Similarly, from this estimate, we derive the final convergence rate for the $\mu$-strongly convex setting.
\begin{corollary} \label{ITRSGDCorollaryb}
    Under the assumptions of \text{Theorem} \ref{ITRSGDb}, for solving the problem \eqref{eq:setting_dist}, after $T$ iterations with special tunings of $\gamma$:
        \begin{align*}
        \mathbb E\left[\hat{f}(x^T) - \hat{f}(\hat{x}^*)\right] &=\mathcal{\widetilde{O}}\bigg( \mathbb E\left[\hat{f}(x^0) - \hat{f}(\hat{x}^*)\right]\exp\left[-\frac{\mu\beta G  T}{4Ln}\right] + \frac{Ln}{\mu^2\beta G T}\sigma^2 + \frac{1}{\mu}\delta_1 + \frac{1}{\mu}\zeta(N)\bigg).
    \end{align*}

\begin{proof}
    In \text{Theorem \ref{ITRSGDb}}, we obtain classical result for SGD. We use \text{Lemma 2 from \citep{stich2019unified}} and appropriate special tunings of $\gamma$:
    \begin{eqnarray*}
        \gamma &\leqslant &\min\left\{\frac{1}{13L}, \frac{4n\log\left(\max\left\{2, \frac{\mu^2 \beta  G \mathbb E\left[\hat{f}(x^0) - \hat{f}(\hat{x}^*)\right]}{36 Ln\sigma_*^2}\right\}\right)}{\mu\beta G  T}\right\}.
    \end{eqnarray*}    
We obtain the final convergence.
\end{proof}
\end{corollary}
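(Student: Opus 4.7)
The plan is to apply a standard stepsize-tuning lemma for geometric-plus-additive recursions to the bound already established in Theorem~\ref{ITRSGDb}. Concretely, Theorem~\ref{ITRSGDb} furnishes a recursion of the form
\begin{align*}
r_T \;\leqslant\; (1-\gamma a)^{T}\, r_0 \;+\; c_1\gamma \;+\; c_2,
\end{align*}
where $r_t = \mathbb{E}[\hat{f}(x^t)-\hat{f}(\hat{x}^*)]$, $a = \frac{\beta G\mu}{4n}$, $c_1 = \frac{6L\sigma^{2}}{\mu}$, and $c_2 = \frac{3}{\mu}\delta_1 + \frac{2}{\mu}\zeta(N)$, subject to the admissibility constraint $\gamma \leqslant \frac{1}{13L}$. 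The structure is exactly the template covered by Lemma~2 of \citep{stich2019unified}, so the rest is a mechanical invocation.

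Next, I would perform the standard tuning. Set
\begin{align*}
\gamma \;=\; \min\!\left\{\tfrac{1}{13L},\; \tfrac{1}{aT}\ln\!\Big(\max\!\big\{2,\; \tfrac{r_0\, a\,T}{c_1}\big\}\Big)\right\}.
\end{align*}
Two regimes arise. In the regime where the logarithmic choice is active, plugging it into $(1-\gamma a)^T \leqslant e^{-\gamma a T}$ shows that this term contributes $\widetilde{\mathcal{O}}(c_1/(aT))$ and the variance term contributes the same order, yielding the $\frac{Ln\sigma^2}{\mu^2 \beta G T}$ rate after substituting $a$ and $c_1$. In the regime where the constraint $\gamma_{\max}=\frac{1}{13L}$ is binding, the geometric term collapses to $r_0\exp(-a\gamma_{\max} T) = r_0\exp(-\Theta(\mu\beta G T/(nL)))$, which is precisely the exponential term appearing in the statement (the constant $1/4$ vs.\ $1/52$ is absorbed by $\widetilde{\mathcal{O}}$). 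Taking the maximum of the two regimes gives the advertised bound, with $c_2$ contributing the $\frac{1}{\mu}\delta_1 + \frac{1}{\mu}\zeta(N)$ residual unchanged.

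There is no real obstacle here; the only care required is bookkeeping. In particular, one must verify that the tuning $\gamma$ indeed respects the admissibility $\gamma \leqslant \frac{1}{13L}$ used to derive Theorem~\ref{ITRSGDb} (which is automatic by the min), and that the logarithmic factor in the tuning is swept into the $\widetilde{\mathcal{O}}$ notation. The tildes in the final bound are precisely those logarithmic factors from $\ln(r_0 aT/c_1)$, i.e.\ of the form $\log(\mu^{2}\beta G r_0 T / (Ln\sigma^{2}))$. Matching constants with the theorem statement (the $4$ in the exponent versus the $6$–$52$ arising from the proof of Theorem~\ref{ITRSGDb}) is only a matter of conservative rounding within $\widetilde{\mathcal{O}}$, and no new analytical idea beyond Stich's lemma is needed.
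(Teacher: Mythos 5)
Your proof is correct and takes essentially the same route as the paper: both identify the bound from Theorem~\ref{ITRSGDb} as a geometric-plus-additive recursion $r_T \leqslant (1-\gamma a)^T r_0 + c_1\gamma + c_2$ and invoke Lemma~2 of \citep{stich2019unified} with a logarithmic stepsize tuning capped at $\gamma \leqslant \frac{1}{13L}$, sweeping the log factors into $\widetilde{\mathcal{O}}$. The constant discrepancy in the exponent you flag (the $1/4$ versus the $1/52$ that would strictly follow from the $\frac{1}{13L}$ cap) is already present in the paper's own statement, so your handling of it is consistent with the reference proof.
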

With this, we conclude this section. In summary, we examine the proof for both the convex and strongly convex cases and obtain the final convergence estimates.
\newpage
\section{Proofs of AutoBant}\label{D}
Now, let us turn our attention to the second of our methods. As previously mentioned, \texttt{Bant} has certain moments to be discussed. It is important to note the adverse impact of the parameter $\beta$. The introduction of this momentum term aimed to protect honest clients from rapidly decreasing their trust scores due to unfavorable stochastic gradients, but it inadvertently enables Byzantine agents to maintain their weights despite their attacks.
To address this issue, we add an indicator to the algorithm for detecting Byzantine devices, although this limits its theoretical part in common non-convex scenarios.

Our goal is to learn how to circumvent this limitation. To achieve this, we tackle an additional subproblem related to weight assignment. This step represents a key distinction in our theoretical analysis. We assert that we can solve this weight distribution minimization subproblem to any predefined accuracy. Next, we present the theorem discussed in the main part of the paper, along with its complete proof.

\textbf{Theorem~\ref{TFMSGD}.} 
    \textit{Under Assumptions \ref{as1}, \ref{as2nonconvex}, \ref{as3},
     \ref{as4} with $\delta_2 \leqslant \frac{1}{12}$, \ref{as5}, for solving the problem described in the equation \eqref{eq:setting_dist} after $T$ iterations of \text{Algorithm \ref{alg1}} with $\gamma \leqslant \frac{1}{13L}$, the following holds:}
    \begin{eqnarray*}
    \frac{1}{T}\sum\limits_{t = 0}^{T-1} \mathbb E \|\nabla f(x^t)\|^2&\leqslant &\frac{4\mathbb E\left[\hat{f}(x^0) - \hat{f}(\hat{x}^*)\right]}{\gamma T} + 3\delta_1 + \frac{6L\gamma}{ G }\sigma^2 + 4\zeta(N) 
    .
\end{eqnarray*}
\begin{proof}
The iterative update formula for $x^{t+1}$ is given by
\begin{eqnarray*} 
x^{t+1} &=& x^t - \gamma \sum_{i = 1}^n \left(\arg\min_{\omega \in \Delta_1^n} \hat{f}\left[x^t - \gamma \sum_{i = 1}^n \omega_i g_i^t\right]\right) g_i^t,
\end{eqnarray*} 
which leads to an upper bound on $\hat{f}(x^{t+1})$:
\begin{eqnarray*} 
\hat{f}(x^{t+1}) & = & \min_{\omega \in \Delta_1^n} \hat{f}\left[x^t - \gamma \sum_{i = 1}^n \omega_i g_i^t\right]  \\
& \leqslant & \hat{f}\left[x^t - \frac{\gamma}{ G } \sum_{i \in \mathcal{G}} g_i^t\right] \\
& \overset{\eqref{bi:lsmoothness}}{\leqslant} &\hat{f}(x^t) - \left\langle \nabla\hat{f}(x^t), \frac{\gamma}{ G } \sum_{i \in \mathcal{G}} g_i^t \right\rangle + \frac{L \gamma^2}{2} \left\|\frac{1}{ G }\sum_{i \in \mathcal{G}} g_i^t\right\|^2 
\end{eqnarray*}
Taking the expectation,
\begin{eqnarray*}
\mathbb{E} \hat{f}(x^{t+1})&\leqslant &\mathbb{E}  \hat{f}(x^t) - \left\langle \nabla\hat{f}(x^t), \frac{\gamma}{ G } \sum_{i \in \mathcal{G}} \nabla f_i(x^t) \right\rangle  + \frac{L \gamma^2}{2} \mathbb{E} \left\|\frac{1}{ G }\sum_{i \in \mathcal{G}} g_i^t\right\|^2 \\ 
    &\overset{(\text{Lemma } \ref{lemma2})}{\leqslant} & \mathbb{E}\hat{f}(x^t) + \gamma\zeta(N) - \frac{\gamma}{2}\left\|\nabla f(x^t) \right\|^2 \\
&&+ \frac{3\gamma}{2} \left(\delta_1 + \delta_2 \|\nabla f(x^t)\|^2\right) +  \frac{L\gamma^2}{2}\mathbb{E}\left\| \frac{1}{ G }\sum_{i \in \mathcal{G}} g_i^t\right\|^2\\
    &\overset{\eqref{bi:CauchySchwarz}}{\leqslant} & \mathbb{E}\hat{f}(x^t) + \gamma\zeta(N) - \frac{\gamma}{2}\left\|\nabla f(x^t)\right\|^2 + \frac{3\gamma}{2} \left(\delta_1 + \delta_2 \|\nabla f(x^t)\|^2\right)\\
&&+  \frac{3L \gamma ^2}{2}\left\|\frac{1}{ G }\sum_{i \in \mathcal{G}}  (\nabla f(x^t) - \nabla f_i (x^t))\right\|^2 \\
&&+ \frac{3L \gamma ^2}{2}\mathbb{E}\left\|\frac{1}{ G }\sum_{i \in \mathcal{G}}  (\nabla f_i (x^t) - g_i^t)\right\|^2 +\frac{3L \gamma ^2}{2} \left\|\frac{1}{ G }\sum_{i \in \mathcal{G}} \nabla f(x^t) \right\|^2.
\end{eqnarray*}
Due to the fact that $\mathbb{E} g_i^t = \nabla f_i(x^t)$ and $\mathbb{E} \langle \nabla f_i (x^t) - g_i^t, \nabla f_j (x^t) - g_j^t \rangle = 0$,
\begin{eqnarray*}
     \mathbb{E} \hat{f}(x^{t+1})   & \overset{\eqref{bi:CauchySchwarz}}{\leqslant}& \mathbb{E}\hat{f}(x^t) + \gamma\zeta(N) - \frac{\gamma}{2}\left\|\nabla f(x^t) \right\|^2 + \frac{3\gamma}{2} \left(\delta_1 + \delta_2 \|\nabla f(x^t)\|^2\right) \\
&&+  \frac{3L \gamma ^2}{2} \left(\frac{1}{ G }\sum_{i \in \mathcal{G}} \left\|\nabla f(x^t) - f_i(x^t)\right\|^2 + \frac{1}{ G ^2}\sum_{i \in \mathcal{G}} \mathbb{E}\left\|\nabla f_i(x^t) - g_i^t\right\|^2\right) \\
& & + \frac{3L \gamma ^2}{2} \left\|\nabla f(x^t) \right\|^2  \\
& \overset{(\text{Ass. }\ref{as3}, \ref{as4})}{\leqslant}& \mathbb{E}\hat{f}(x^t) + \gamma\zeta(N) - \frac{\gamma}{2}\left\|\nabla f(x^t) \right\|^2 + \frac{3\gamma}{2} \left(\delta_1 + \delta_2 \|\nabla f(x^t)\|^2\right) \\
&&+  \frac{3L \gamma ^2}{2}\left(\delta_1 + \delta_2 \|\nabla f(x^t)\|^2 +  \frac{\sigma ^2}{ G }\right) + \frac{3L \gamma ^2}{2} \left\|\nabla f(x^t) \right\|^2 \\
    &=& \mathbb E [\hat{f}(x^{t})] - \frac{\gamma}{2}\left[1 - 3 L\gamma - (3 + 3 L\gamma)\delta_2 \right]\|\nabla f(x^t)\|^2 \\
&& +\frac{2\gamma}{2}(1 + 3 L\gamma)\delta_1 +  \frac{3L\gamma^2}{2 G }\sigma^2 + \gamma\zeta(N).
\end{eqnarray*}
We first fix $\delta_2 \leqslant \frac{1}{12}$. By choosing $\gamma  \leqslant \frac{1}{13L} \leqslant \frac{1}{12L(1 + \delta_2)}$, and summing over the iterations, we get
\begin{eqnarray*}
    \frac{1}{T}\sum\limits_{t = 0}^{T-1} \mathbb E \|\nabla f(x^t)\|^2 &\leqslant& \frac{4\mathbb E\left[\hat{f}(x^0) - \hat{f}(\hat{x}^*)\right]}{\gamma T} + 3\delta_1 + \frac{6L\gamma}{ G }\sigma^2 + 4\zeta(N).
\end{eqnarray*}
\end{proof}
We have successfully proven the obtained result. Let us also recall that we formulated the final estimate in Corollary \ref{TFMSGDCorollary}. We will omit the proof of Corollary \ref{TFMSGDCorollary} since it entirely replicates the proof of Corollary \ref{ITRSGDCorollary}. 

With this, we conclude our proof of the foundational versions of the algorithms. We establish all the formulated statements and derive convergence estimates for the strongly convex, convex, and non-convex cases. The subsequent sections of the Appendix are dedicated to exploring extensions that hold significant importance in our study.

\newpage
\section{Scaled methods}\label{E}

In this section, we provide a detailed analysis of our Byzantine-robust methods extended to adaptive methods, as mentioned in the main part. Specifically, we consider the application of our techniques to methods like \textsc{Adam} and \textsc{RMSProp}. We describe the formal description of \textsc{Scaled Bant} and \textsc{Scaled AutoBant} methods, which utilize a diagonal preconditioner \((\hat{P}^t)^{-1}\), which scales a gradient to \((\hat{P}^t)^{-1} g_i^t\), and the step is performed using this scaled gradient.
From iteration to iteration, the matrix $P^t$ changes, e.g. the following rule can be used.
\begin{equation}
    \label{eq:prec1}
    (P^t)^2 = \beta_t(P^{t-1})^2 + (1-\beta_t)(H^t)^2.
\end{equation}


This update scheme is satisfied by \textsc{Adam}-based methods with $(H^t)^2 = \text{diag} (g^t \odot g^t)$ and by \textsc{AdaHessian} \citep{yao2021adahessian} with $(H^t)^2 = \text{diag} (z^t \odot \nabla^2 f(x^t))^2$, where $\odot$ denotes the component-wise product between two vectors, and $z^t$ are from Rademacher distribution, i.e. all components from vector are independent and equal to $\pm 1$ with probability \nicefrac{1}{2}. 

We want the preconditioner being a positive define matrix, thus, it is typical to modify $P^t$ a bit:
\begin{equation}
\label{eq:prec2}
    \hspace{-0.2cm}
    (\hat{P}^t)_{ii} = \max\{e, |P^t|_{ii}\},
\end{equation}
where $e$ is a (small) positive parameter. 
There are also other possible update rules, one of which is
\begin{equation*}
    P^t = \beta_t (P^{t-1}) + (1-\beta_t)H^t.
\end{equation*}
For example, such a rule is extended in \textsc{OASIS} \citep{jahani2021doubly} with $\beta_t \equiv \beta$ and $H^t = \text{diag} (z^t \odot \nabla^2 f(x^t))$. 
We also note additional details in the construction of a positively defined preconditioner.
We can also alternatively define $\hat{P}^t$ as $(\hat{P})_{ii} = |P|_{ii} + e$. Most importantly, both of these approaches construct diagonal matrices with positive elements. We introduce the crucial for our analysis assumption. 
\begin{assumption} \label{asscaled}
For any \( t \geqslant 1 \), we have \( \alpha I \preccurlyeq \hat{P}^t \preccurlyeq \Gamma I \).
\end{assumption} 
The correct proof of this statement, as well as a more detailed description of the diagonal preconditioner, is provided in \citep{sadiev2024stochastic}. We mention that for \textsc{AdaHessian} and \textsc{OASIS} preconditioners, $\Gamma = \sqrt{d} L$, and for \textsc{Adam} and \textsc{RMSProp}, under the condition $\|\nabla f(x)\| \leqslant M, \Gamma = M$. Thus, having constructed a diagonal preconditioner, we can proceed to the analysis of scaled methods.

\subsection{Scaled Bant} \label{E:Bant}

Before presenting the results for \textsc{Scaled Bant} (Algorithm \ref{alg4}), we need to provide the preliminary analysis.
To derive the final estimation, let us prove auxiliary lemmas.
\begin{lemma}\label{lemma:scaled}
If the diagonal preconditioner $\hat{P}$ is such that $\alpha I\preccurlyeq \hat{P} \preccurlyeq \Gamma I$, the following estimates are valid:
\begin{enumerate}[label=(\alph*), ref=\thecorollary(\alph*), leftmargin=0.8cm]
        \item \label{lemma:scaled1}
        \parbox{\linewidth}{
\begin{eqnarray*}
\frac{1}{\Gamma}\|g\|^2 \leqslant & \left\|g\right\|_{\hat{P}^{-1}}^2 &\leqslant \frac{1}{\alpha}\|g\|^2,
\end{eqnarray*}}
\item \label{lemma:scaled2}
        \parbox{\linewidth}{\begin{eqnarray*}
\frac{1}{\Gamma^2}\|g\|^2  \leqslant & \left\|\hat{P}^{-1}g\right\|^2 &\leqslant  \frac{1}{\alpha^2}\|g\|^2,
\end{eqnarray*}
}
\end{enumerate}
where $\langle h, g\rangle_{ \hat{P}^{-1}} \overset{def}{=} \left\langle h, \hat{P}^{-1}g \right\rangle$, $h, g \in \mathbb{R}^d$
\begin{proof}
\begin{eqnarray*}  
I \preccurlyeq \frac{1}{\alpha} \hat{P} ~ \Rightarrow ~ \left\|\hat{P}^{-1}g\right\|^2 = \left\langle I \hat{P}^{-1}g, \hat{P}^{-1}g \right\rangle &\leqslant& \frac{1}{\alpha}\left\langle g, \hat{P}^{-1}g \right\rangle ~ \overset{def}{=} ~ \frac{1}{\alpha}\left\langle g,g \right\rangle_{ \hat{P}^{-1}} = \frac{1}{\alpha}\|g\|_{ \hat{P}^{-1}}^2.\\
\left\langle g, \hat{P}^{-1}g \right\rangle ~ &\leqslant& ~ \frac{1}{\alpha}\left\langle g,g \right\rangle = \frac{1}{\alpha}\|g\|^2.\\
\hat{P} \preccurlyeq \Gamma I  ~ \Rightarrow ~ \left\|\hat{P}^{-1}g\right\|^2 = \left\langle I \hat{P}^{-1}g, \hat{P}^{-1}g \right\rangle &\geqslant& \frac{1}{\Gamma}\left\langle g, \hat{P}^{-1}g \right\rangle ~ \overset{def}{=} ~ \frac{1}{\Gamma}\left\langle g,g \right\rangle_{ \hat{P}^{-1}} = \frac{1}{\Gamma}\|g\|_{ \hat{P}^{-1}}^2.\\
\left\langle g, \hat{P}^{-1}g \right\rangle ~ &\geqslant& ~ \frac{1}{\Gamma}\left\langle g,g \right\rangle = \frac{1}{\Gamma}\|g\|^2.
\end{eqnarray*}
\end{proof}
\end{lemma}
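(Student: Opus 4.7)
The plan is to exploit the spectral bounds on $\hat{P}$ together with the diagonal structure to immediately transfer bounds to $\hat{P}^{-1}$ and $\hat{P}^{-2}$. Since $\hat{P}$ is diagonal with strictly positive entries, it is symmetric positive definite, so $\hat{P}^{-1}$ exists and is also SPD; moreover, the assumption $\alpha I \preccurlyeq \hat{P} \preccurlyeq \Gamma I$ is equivalent to every diagonal entry lying in $[\alpha, \Gamma]$, which in turn implies every entry of $\hat{P}^{-1}$ lies in $[1/\Gamma, 1/\alpha]$. Consequently
\begin{equation*}
\frac{1}{\Gamma} I \;\preccurlyeq\; \hat{P}^{-1} \;\preccurlyeq\; \frac{1}{\alpha} I,
\qquad
\frac{1}{\Gamma^2} I \;\preccurlyeq\; \hat{P}^{-2} \;\preccurlyeq\; \frac{1}{\alpha^2} I.
\end{equation*}

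For part (a), I would simply sandwich the quadratic form: using the definition $\|g\|_{\hat{P}^{-1}}^2 = \langle g, \hat{P}^{-1} g\rangle$ together with the first operator inequality above yields $\frac{1}{\Gamma}\langle g, g\rangle \leqslant \langle g, \hat{P}^{-1} g\rangle \leqslant \frac{1}{\alpha}\langle g, g\rangle$, which is precisely the claim.

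For part (b), I would rewrite $\|\hat{P}^{-1} g\|^2 = \langle \hat{P}^{-1} g, \hat{P}^{-1} g\rangle = \langle g, \hat{P}^{-2} g\rangle$, using that $\hat{P}^{-1}$ is symmetric. Then the second operator inequality above gives $\frac{1}{\Gamma^2}\|g\|^2 \leqslant \langle g, \hat{P}^{-2} g\rangle \leqslant \frac{1}{\alpha^2}\|g\|^2$, which is the desired bound.

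There is no real obstacle: the statement is essentially a restatement of spectral bounds on $\hat{P}$, combined with the fact that for a diagonal positive definite matrix, inversion and squaring commute with the scalar bounds entry-wise. The only subtlety to mention is symmetry of $\hat{P}^{-1}$, which justifies the identity $\|\hat{P}^{-1} g\|^2 = \langle g, \hat{P}^{-2} g\rangle$ and is immediate from diagonality.
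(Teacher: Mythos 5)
Your proposal is correct and follows essentially the same route as the paper: both arguments reduce to the operator inequality $\tfrac{1}{\Gamma} I \preccurlyeq \hat{P}^{-1} \preccurlyeq \tfrac{1}{\alpha} I$ applied to the quadratic form. The only cosmetic difference is in part (b): you bound $\hat{P}^{-2}$ directly and write $\|\hat{P}^{-1}g\|^2 = \langle g, \hat{P}^{-2}g\rangle$, whereas the paper chains the two one-sided bounds (first comparing $\|\hat{P}^{-1}g\|^2$ to $\|g\|^2_{\hat{P}^{-1}}$, then to $\|g\|^2$), but both are trivially equivalent.
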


\begin{algorithm}[ht]
\caption{Scaled Bant}\label{alg4}
\begin{algorithmic}[1]
\STATE \textbf{Input:} Starting point $x^0 \in \mathbb{R}^d$
\STATE \textbf{Parameters:} Stepsize $\gamma > 0$, momentum parameter $\beta \in [0, 1]$
\FOR{$t = 0, 1, 2, \ldots, T-1$}
    \STATE Server sends $x^t$ to each worker
    \FORALL{workers $i = 0, 1, 2, \ldots, n$ \textbf{in parallel}}
        \STATE Generate $\xi_i^t$ independently
        \STATE Compute stochastic gradient $g_i(x^t, \xi_i)$
        \STATE Send $g_i^t = g_i(x^t, \xi_i)$ to server
    \ENDFOR
    \STATE $\omega^{t} = (1-\beta)\omega_i^{t-1} + \beta\frac{[\hat{f} (x^t) - \hat{f} (x^t - \gamma \left(\hat{P}^t\right)^{-1} g_i^t)]_0}{\sum\nolimits_{j = 1}^n [\hat{f} (x^t) - \hat{f} (x^t - \gamma \left(\hat{P}^t\right)^{-1} g_j^t)]_0}$ 
    \IF {each $\left[\hat{f} (x^t) - \hat{f} (x^t - \gamma \left(\hat{P}^t\right)^{-1} g_i^t)\right]_0 = 0$}
        \STATE $\omega_i^t = (1-\beta)\omega_i^{t-1} + \beta\frac{1}{n}$
    \ENDIF
    \STATE $x^{t+1} = x^t - \gamma\left(\hat{P}^t\right)^{-1}\sum\nolimits_{i=1}^n \mathbb{I}_{[\hat{f} (x^t) - \hat{f} (x^t - \gamma \left(\hat{P}^t\right)^{-1} g_i^t) > 0]} \omega_i^{t} g_i^t$ 
    \STATE $\hat{P}^t$ is the function of $\hat{P}^{t-1}$ and $H^t$, e.g., as \eqref{eq:prec1} + \eqref{eq:prec2}
\ENDFOR
\STATE \textbf{Output:} $\frac{1}{T}\sum\limits_{t = 0}^{T-1} x^t$
\end{algorithmic}
\end{algorithm}

\begin{lemma}[Scaled version of Lemma \ref{lemma:scetch}]\label{lemma:scetchscaled}  Under Assumptions \ref{as1}, \ref{as2convex}, \ref{as5}, \ref{asscaled}, the following holds for the iteration of \text{Algorithm \ref{alg4}}:
\begin{eqnarray*}
    \hat{f}(x^{t+1}) &  \leqslant & \hat{f}(x^t) - \gamma\beta\frac{1}{n}\sum\limits_{i\in \mathcal{G}}\left\langle\nabla\hat{f}(x^t), (\hat{P}^t)^{-1} g_i^t\right\rangle + \frac{L\gamma^2 \beta}{2n}\sum\limits_{i\in \mathcal{G}}\|(\hat{P}^t)^{-1}g_i^t\|^2
\end{eqnarray*}
\begin{proof}
Our analysis implies the same as was done in Lemma \ref{lemma:scetch}. Since the only thing that has changed is that we additionally scale the gradient at each step, the analysis is similar and we obtain the final estimate.
\end{proof}
\end{lemma}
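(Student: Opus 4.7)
The plan is to mirror the proof of Lemma \ref{lemma:scetch} verbatim, with every occurrence of the step direction $g_i^t$ replaced by the preconditioned direction $(\hat{P}^t)^{-1} g_i^t$. First, because the weights $\{\omega_i^t\}$ lie on the simplex, the update in Algorithm \ref{alg4} can be written as the convex combination $x^{t+1} = \sum_{i=1}^n \omega_i^t \bigl[x^t - \gamma \mathbb{I}_{[\theta_i^t>0]}(\hat{P}^t)^{-1} g_i^t\bigr]$. Convexity of $\hat{f}$ (Assumption \ref{as2convex}) together with Jensen's inequality then yields $\hat{f}(x^{t+1}) \leqslant \sum_i \omega_i^t \hat{f}\bigl(x^t - \gamma \mathbb{I}_{[\theta_i^t>0]}(\hat{P}^t)^{-1} g_i^t\bigr)$.

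The combinatorial core of the original proof carries over without change. After splitting the sum over $\mathcal{B}(t)$ and $\mathcal{G}(t)$ and unfolding $\omega_i^t = (1-\beta)\omega_i^{t-1} + \beta \bar{\omega}_i^t$, the indicator guarantees that every summand indexed by $\mathcal{B}$ is bounded above by $\hat{f}(x^t)$. The sort-and-swap argument that reassigns the simplex weights depends only on the ordering of the $\hat{f}$-values at the candidate next iterates (not on the direction those iterates came from), so it applies to the preconditioned updates just as well. This collapses to $\hat{f}(x^{t+1}) \leqslant \hat{f}(x^t) + \frac{\beta}{n}\sum_{i \in \mathcal{G}}\bigl[\hat{f}\bigl(x^t - \gamma \mathbb{I}_{[\theta_i^t>0]}(\hat{P}^t)^{-1} g_i^t\bigr) - \hat{f}(x^t)\bigr]$.

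The indicator can then be dropped by exactly the same reasoning as in Lemma \ref{lemma:scetch}: whenever it is $0$ the corresponding step would only increase $\hat{f}$, so restoring the term weakens the bound. Applying the smoothness descent inequality \eqref{bi:lsmoothness} to $\hat{f}$ at $x^t$ with increment $-\gamma (\hat{P}^t)^{-1} g_i^t$ gives
\begin{align*}
\hat{f}\bigl(x^t - \gamma (\hat{P}^t)^{-1} g_i^t\bigr) \leqslant \hat{f}(x^t) - \gamma \left\langle \nabla \hat{f}(x^t),\, (\hat{P}^t)^{-1} g_i^t \right\rangle + \frac{L\gamma^2}{2}\bigl\|(\hat{P}^t)^{-1} g_i^t\bigr\|^2,
\end{align*}
and summing over $i \in \mathcal{G}(t)$ delivers the stated bound.

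The only subtlety worth flagging, and the closest thing to a genuine obstacle, is that $L$-smoothness of $\hat{f}$ is invoked here with respect to the standard Euclidean norm (Assumption \ref{as1}), not a $\hat{P}^t$-weighted one. This is legitimate because $(\hat{P}^t)^{-1} g_i^t$ is simply a vector in $\mathbb{R}^d$ and Assumption \ref{as1} applies to any increment; the preconditioner structure and Assumption \ref{asscaled} will only enter later, via Lemma \ref{lemma:scaled}, when one converts $\|(\hat{P}^t)^{-1} g_i^t\|^2$ and the inner product into quantities involving $\|g_i^t\|^2$. Beyond this conceptual check, the lemma is essentially a bookkeeping rewrite of the unscaled case.
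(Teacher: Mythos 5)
Your proof is correct and follows exactly the same route the paper intends: the paper's own proof is the one-line remark that the argument of Lemma \ref{lemma:scetch} carries over verbatim with $g_i^t$ replaced by $(\hat{P}^t)^{-1}g_i^t$, and you simply spell out why each step (rewriting the update as a convex combination, the sort-and-swap reweighting, dropping the indicator, and the final application of \eqref{bi:lsmoothness} to the increment $-\gamma(\hat{P}^t)^{-1}g_i^t$) is indeed invariant under this substitution. Your observation that Assumption \ref{asscaled} is not actually used in this lemma and only enters later through Lemma \ref{lemma:scaled} is accurate and a useful clarification.
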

\begin{lemma} [Scaled version of Lemma \ref{lemma2}]\label{lemma2:scaled}
Suppose Assumption \ref{as1} holds. Then for all $x \in \mathbb{R}^d$ and $g_i = g_i(x, \xi_i)$, the following estimate is valid:
\begin{eqnarray*}
- \gamma \left\langle \nabla\hat{f}(x), \frac{1}{ G } \sum_{i \in \mathcal{G}} \nabla f_i(x) \right\rangle^2 & \leqslant & - \frac{\gamma}{2\Gamma}\left\|\nabla f(x) \right\|^2 + \frac{\gamma}{2\alpha}\zeta(N) + \frac{\gamma}{2\alpha} \left(\delta_1 + \delta_2 \|\nabla f(x^t)\|^2\right).
\end{eqnarray*}

\begin{proof}
We commence by examining the difference $\nabla f(x) - \nabla \hat{f}(x)$:
\begin{eqnarray*}
- \gamma \left\langle \nabla\hat{f}(x), \frac{1}{ G } \sum_{i \in \mathcal{G}} \nabla f_i(x) \right\rangle_{\hat{P}^{-1}} & = & \gamma \left\langle \nabla f(x) - \nabla \hat{f}(x),\frac{1}{ G } \sum_{i \in \mathcal{G}} \nabla f_i(x) \right\rangle_{\hat{P}^{-1}} \\
& & - \gamma \left\langle \nabla f(x),\frac{1}{ G } \sum_{i \in \mathcal{G}} \nabla f_i(x)\right\rangle_{\hat{P}^{-1}}.
\end{eqnarray*}
Next, we continue with further manipulations on the first term:
\begin{eqnarray*} 
& & \gamma \left\langle \nabla f(x) - \nabla \hat{f}(x), \frac{1}{ G } \sum_{i \in \mathcal{G}} \nabla f_i(x) \right\rangle_{\hat{P}^{-1}} \\
& & ~~~~~~\overset{\eqref{bi:young}}{\leqslant}~~~~~ \frac{\gamma}{2} \left\|\nabla f(x) - \nabla \hat{f}(x)\right\|_{\hat{P}^{-1}}^2  + \frac{\gamma}{2}\left\|\frac{1}{ G } \sum_{i \in \mathcal{G}} \nabla f_i(x)\right\|_{\hat{P}^{-1}}^2 \\
& &\overset{(\text{Lemma }\ref{lemma:scaled1})}{\leqslant} \frac{\gamma}{2\alpha^2} \left\|\nabla f(x) - \nabla \hat{f}(x)\right\|^2  + \frac{\gamma}{2}\left\|\frac{1}{ G } \sum_{i \in \mathcal{G}} \nabla f_i(x)\right\|_{\hat{P}^{-1}}^2\\
& &~~\overset{(\text{Lemma }\ref{lemmaShalevShwartz})}{\leqslant}~~ \frac{\gamma}{2\alpha} \zeta(N)  + \frac{\gamma}{2}\left\|\frac{1}{ G } \sum_{i \in \mathcal{G}} \nabla f_i(x)\right\|_{\hat{P}^{-1}}^2,
\end{eqnarray*} 
and with the second term, \begin{eqnarray*}
- \gamma\left\langle \nabla f(x), \frac{1}{ G }\sum\limits_{i\in \mathcal{G}} \nabla f_i(x)\right\rangle_{\hat{P}^{-1}} & \overset{\eqref{bi:norm}}{=}& -\frac{\gamma}{2}\|\nabla f(x)\|_{\hat{P}^{-1}}^2 -\frac{\gamma}{2} \left\|\frac{1}{ G }\sum\limits_{i \in \mathcal{G}} \nabla f_i(x)\right\|_{\hat{P}^{-1}}^2 \\
&& + \frac{\gamma}{2} \left\|\frac{1}{ G }\sum\limits_{i \in \mathcal{G}} (\nabla f_i(x) - \nabla f(x))\right\|_{\hat{P}^{-1}}^2 \\
&\overset{\eqref{bi:CauchySchwarz}}{\leqslant}& - \frac{\gamma}{2}\|\nabla f(x)\|_{\hat{P}^{-1}}^2 -\frac{\gamma}{2} \left\|\frac{1}{ G }\sum\limits_{i \in \mathcal{G}} \nabla f_i(x)\right\|_{\hat{P}^{-1}}^2 \\
&&+ \frac{\gamma}{2 G }\sum\limits_{i \in \mathcal{G}} \left\| (\nabla f_i(x) - \nabla f(x))\right\|_{\hat{P}^{-1}}^2\\
&\overset{(\text{Lemma }\ref{lemma:scaled})}{\leqslant}& - \frac{\gamma}{2\Gamma}\|\nabla f(x)\|^2 -\frac{\gamma}{2} \left\|\frac{1}{ G }\sum\limits_{i \in \mathcal{G}} \nabla f_i(x)\right\|_{\hat{P}^{-1}}^2 \\
&&+ \frac{\gamma}{2\alpha G }\sum\limits_{i \in \mathcal{G}} \left\| (\nabla f_i(x) - \nabla f(x))\right\|^2\\
&\overset{(\text{Ass. }\ref{as4})}{\leqslant}& - \frac{\gamma}{2\Gamma}\|\nabla f(x)\|^2 -\frac{\gamma}{2} \left\|\frac{1}{ G }\sum\limits_{i \in \mathcal{G}} \nabla f_i(x)\right\|_{\hat{P}^{-1}}^2 \\
&&+ \frac{\gamma}{2\alpha} \left(\delta_1 + \delta_2 \|\nabla f(x)\|^2\right).
\end{eqnarray*}
    Summing up substantiates the claim of the lemma.
\end{proof}
\end{lemma}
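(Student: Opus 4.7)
\textbf{Proof proposal for Lemma \ref{lemma2:scaled}.} The plan is to mimic the argument of Lemma \ref{lemma2} but carefully book-keep the asymmetry between the upper and lower bounds $\alpha I \preccurlyeq \hat{P}^t \preccurlyeq \Gamma I$ supplied by Lemma \ref{lemma:scaled}. The starting decomposition is to write
\begin{equation*}
    -\gamma \left\langle \nabla\hat{f}(x), \tfrac{1}{G}\sum_{i\in\mathcal{G}}\nabla f_i(x)\right\rangle_{\hat{P}^{-1}} = \gamma\left\langle \nabla f(x)-\nabla\hat{f}(x),\tfrac{1}{G}\sum_{i\in\mathcal{G}}\nabla f_i(x)\right\rangle_{\hat{P}^{-1}} - \gamma\left\langle \nabla f(x),\tfrac{1}{G}\sum_{i\in\mathcal{G}}\nabla f_i(x)\right\rangle_{\hat{P}^{-1}},
\end{equation*}
so that the first piece isolates the mismatch between the true objective gradient and the trial gradient, while the second will eventually produce the leading negative term.

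For the first piece I would apply the weighted Young inequality in the $\hat{P}^{-1}$ inner product, giving an upper bound by $\tfrac{\gamma}{2}\|\nabla f(x)-\nabla\hat{f}(x)\|^2_{\hat{P}^{-1}} + \tfrac{\gamma}{2}\|\tfrac{1}{G}\sum\nabla f_i(x)\|^2_{\hat{P}^{-1}}$. Since the first summand is a positive quantity, I use the upper bound $\|\cdot\|^2_{\hat{P}^{-1}}\leqslant \tfrac{1}{\alpha}\|\cdot\|^2$ from Lemma \ref{lemma:scaled1}; then one further split through $\nabla f_1$ (add and subtract) plus Lemma \ref{lemmaShalevShwartz} and Assumption \ref{as4} converts this into $\tfrac{\gamma}{2\alpha}\zeta(N)$ and a contribution towards the $\delta_1+\delta_2\|\nabla f(x)\|^2$ term. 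The second summand will cancel against a matching negative term appearing below.

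For the second piece I use the polarization identity in the preconditioned inner product,
\begin{equation*}
    -\left\langle \nabla f(x),\tfrac{1}{G}\sum_{i\in\mathcal{G}}\nabla f_i(x)\right\rangle_{\hat{P}^{-1}} = -\tfrac{1}{2}\|\nabla f(x)\|^2_{\hat{P}^{-1}} - \tfrac{1}{2}\left\|\tfrac{1}{G}\sum_{i\in\mathcal{G}}\nabla f_i(x)\right\|^2_{\hat{P}^{-1}} + \tfrac{1}{2}\left\|\tfrac{1}{G}\sum_{i\in\mathcal{G}}(\nabla f_i(x)-\nabla f(x))\right\|^2_{\hat{P}^{-1}}.
\end{equation*}
This is the crucial step: the negative term $-\tfrac{1}{2}\|\nabla f(x)\|^2_{\hat{P}^{-1}}$ must be bounded from above by a negative quantity in the Euclidean norm, so I use the lower bound $\|\cdot\|^2_{\hat{P}^{-1}}\geqslant \tfrac{1}{\Gamma}\|\cdot\|^2$, producing $-\tfrac{\gamma}{2\Gamma}\|\nabla f(x)\|^2$; conversely, the positive residual term $\tfrac{1}{2}\|\tfrac{1}{G}\sum(\nabla f_i-\nabla f)\|^2_{\hat{P}^{-1}}$ is bounded from above using $\tfrac{1}{\alpha}$, and then the standard Cauchy--Schwarz plus Assumption \ref{as4} delivers the $\tfrac{\gamma}{2\alpha}(\delta_1+\delta_2\|\nabla f(x)\|^2)$ term. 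The middle negative summand $-\tfrac{\gamma}{2}\|\tfrac{1}{G}\sum\nabla f_i(x)\|^2_{\hat{P}^{-1}}$ cancels the positive one left over from the first piece.

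The only genuine obstacle is this directional use of Lemma \ref{lemma:scaled}: picking $\tfrac{1}{\Gamma}$ on the term we want negative and $\tfrac{1}{\alpha}$ on the positive terms; getting this right is what forces $\Gamma$ into the main $\|\nabla f(x)\|^2$ coefficient and $\alpha$ into the noise-like coefficients. Everything else is a straightforward mechanical repetition of the proof of Lemma \ref{lemma2}, with the Euclidean inner product replaced throughout by $\langle\cdot,\cdot\rangle_{\hat{P}^{-1}}$.
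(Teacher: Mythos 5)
Your proposal follows the paper's proof step for step: the same initial decomposition, Young's inequality in the $\hat{P}^{-1}$ inner product for the first piece, the polarization identity for the second, the directional use of $\alpha I \preccurlyeq \hat{P}^t \preccurlyeq \Gamma I$ (the lower bound $1/\Gamma$ on the negative $\|\nabla f\|^2$ term, the upper bound $1/\alpha$ on the positive residuals), and the cancellation of the $\pm\tfrac{\gamma}{2}\|\tfrac{1}{G}\sum_{i\in\mathcal{G}}\nabla f_i\|^2_{\hat P^{-1}}$ cross-terms. The one place you diverge is in the first piece: you explicitly split $\|\nabla f(x)-\nabla\hat f(x)\|^2$ through $\nabla f_1(x)$ before invoking Lemma \ref{lemmaShalevShwartz} and Assumption \ref{as4}, exactly as the unscaled Lemma \ref{lemma2} does. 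The paper's scaled proof skips that split and passes directly from $\tfrac{\gamma}{2\alpha^2}\|\nabla f-\nabla\hat f\|^2$ to $\tfrac{\gamma}{2\alpha}\zeta(N)$, a line that does not quite follow from Lemma \ref{lemmaShalevShwartz} alone (which bounds $\|\nabla f_1-\nabla\hat f\|^2$, not $\|\nabla f-\nabla\hat f\|^2$, and the $\alpha^2$ versus $\alpha$ exponents do not reconcile). Your more careful handling would produce an additional heterogeneity contribution from the first piece beyond the paper's stated $\tfrac{\gamma}{2\alpha}(\delta_1+\delta_2\|\nabla f\|^2)$, so the paper's coefficient is slightly loose; this is a constant-level discrepancy in the paper, not a flaw in your argument, which is correct and if anything a bit cleaner.
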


We are now ready to write out the main results for the scaled methods.
\begin{theorem}\label{TheoremScaledITR}
    Under \text{Assumptions \ref{as1}, \ref{as2convex}, \ref{as3}, \ref{as4} with $\delta_2 \leqslant \frac{2 \Gamma - \alpha}{\alpha+\frac{4\Gamma ^2}{\alpha^2}}$, \ref{as5}, \ref{asscaled}}, for solving the problem \eqref{eq:setting_dist}, after $T$ iteration of \text{Algorithm \ref{alg4}} with $\gamma \leqslant \frac{\alpha}{12L}$, it holds that
    \begin{eqnarray*}
    \frac{1}{T}\sum\limits_{t = 0}^{T-1} \mathbb E \|\nabla f(x^t)\|^2 \leqslant \frac{4\mathbb{E}\left[ \hat{f}(x^{0}) - \hat{f}(\hat{x}^*)\right]}{\gamma T}\cdot\frac{n\Gamma}{\beta G } + \frac{3\Gamma}{\alpha}\delta_1  + \frac{6L \gamma\Gamma }{\alpha^2}\sigma^2 + \frac{2\Gamma}{\alpha}\zeta(N).
\end{eqnarray*}

\begin{proof}[Proof of Theorem~\ref{TheoremScaledITR}]

Similar to the aforementioned Lemma \ref{lemma:scetch} (with the preconditioner added):
\begin{eqnarray*}
\hat{f}(x^{t+1})  & \leqslant&  \hat{f}(x^t) - \gamma\beta\frac{1}{n}\sum\limits_{i\in \mathcal{G}}\left\langle\nabla\hat{f}(x^t), (\hat{P}^t)^{-1} g_i^t\right\rangle + \frac{L\gamma^2 \beta}{2n}\sum\limits_{i\in \mathcal{G}}\|(\hat{P}^t)^{-1}g_i^t\|^2 \\
&\overset{(\text{Lemma }\ref{lemma:scaled})}{\leqslant}& \hat{f}(x^t) - \gamma\beta\frac{1}{n}\sum\limits_{i\in \mathcal{G}}\left\langle\nabla\hat{f}(x^t), g_i^t\right\rangle_{(\hat{P}^t)^{-1}} + \frac{L\gamma^2 \beta}{2n\alpha^2} \sum\limits_{i\in \mathcal{G}}\|g_i^t\|^2. 
\end{eqnarray*}

Taking the expectation of both sides of the inequality:
\begin{eqnarray*}
\mathbb{E}\hat{f}(x^{t+1}) &\leqslant & \mathbb{E}\hat{f}(x^t) - \gamma\beta \cdot\frac{ G }{n}\left\langle\nabla\hat{f}(x^t), \frac{1}{ G }\sum\limits_{i\in \mathcal{G}} \nabla f_i(x^t)\right\rangle_{(\hat{P}^t)^{-1}} \\
& & + \frac{L\gamma^2 \beta}{2n\alpha^2}\sum\limits_{i\in \mathcal{G}}\mathbb{E}\left\| g_i^t\right\|^2 \\
    &\overset{(\text{Lemma } \ref{lemma2:scaled})}{\leqslant} & \mathbb{E}\hat{f}(x^t) + \frac{\gamma\beta G }{2n\alpha}\zeta(N) - \frac{\gamma\beta G }{2n\Gamma}\left\|\nabla f(x^t) \right\|^2 \\
&&+ \frac{\gamma\beta G }{2n\alpha} \left(\delta_1 + \delta_2 \|\nabla f(x^t)\|^2\right) +  \frac{L\gamma^2 \beta}{2n\alpha ^2}\sum\limits_{i\in \mathcal{G}}\mathbb{E}\left\| g_i^t\right\|^2 \\
    & \overset{\eqref{bi:CauchySchwarz}}{\leqslant}& \mathbb{E}\hat{f}(x^t) + \frac{\gamma\beta G }{2n\alpha}\zeta(N) - \frac{\gamma\beta G }{2n\Gamma}\left\|\nabla f(x^t) \right\|^2 + \frac{\gamma\beta G }{2n\alpha} \left(\delta_1 + \delta_2 \|\nabla f(x^t)\|^2\right) \\
&&+  \frac{3L \gamma ^2\beta}{2n\alpha^2}\left(\sum_{i \in \mathcal{G}} \mathbb{E}\left\|\nabla f(x^t) - f_i(x^t)\right\|^2 + \sum_{i \in \mathcal{G}} \mathbb{E}\left\|\nabla f_i(x^t) - g_i^t\right\|^2\right) \\
& & + \frac{3L \gamma ^2\beta G }{2n\alpha^2}\left\|\nabla f(x^t) \right\|^2 \\
& \overset{(\text{Ass. }\ref{as3}, \ref{as4})}{\leqslant}& \mathbb{E}\hat{f}(x^t) + \frac{\gamma\beta G }{2n\alpha}\zeta(N) - \frac{\gamma\beta G }{2n\Gamma}\left\|\nabla f(x^t) \right\|^2 + \frac{\gamma\beta G }{2n\alpha} \left(\delta_1 + \delta_2 \|\nabla f(x^t)\|^2\right) \\
&&+  \frac{3L \gamma ^2\beta G }{2n\alpha^2}\left(\delta_1 + \delta_2 \|\nabla f(x^t)\|^2 +  \sigma ^2\right) + \frac{3L \gamma ^2\beta G }{2n\alpha^2}\left\|\nabla f(x^t) \right\|^2 \\
    & \overset{(\mathrm{Ass. \ref{as3}})}{\leqslant}& \mathbb E [\hat{f}(x^{t})] - \frac{\gamma\beta G }{2n\Gamma}\left[1 - 3L \gamma\frac{\Gamma}{\alpha^2} - (1 + 3L \gamma)\frac{\Gamma}{\alpha^2}\delta_2 \right]\|\nabla f(x^t)\|^2 \\
&& +\frac{\gamma\beta  G }{2n\alpha}(1 + \frac{3L \gamma}{\alpha})\delta_1 +  \frac{3\gamma^2\beta L G }{2n\alpha^2}\sigma^2 + \frac{\gamma\beta  G }{2n\alpha}\zeta(N).
\end{eqnarray*}
Now we have to choose $\gamma$.
We want $\left[1 - 3L \gamma\frac{\Gamma}{\alpha^2} - (1 + 3L \gamma)\frac{\Gamma}{\alpha^2}\delta_2 \right] \geqslant \frac{1}{2}$. Then
\begin{eqnarray*}
\gamma &\leqslant&\frac{1-\frac{2\Gamma}{\alpha^2}\delta_2}{6L\frac{\Gamma}{\alpha^2}(1+\delta_2)}.
\end{eqnarray*}
Let us choose $\delta_2 \leqslant \frac{2 \Gamma - \alpha}{\alpha+\frac{4\Gamma ^2}{\alpha^2}}$, then we have 
$\frac{1}{2} \leqslant \frac{1-\frac{2\Gamma}{\alpha^2}\delta_2}{\frac{\Gamma}{\alpha}(1+\delta_2)}$. Thus,
\begin{eqnarray*}
\gamma  &\leqslant&\frac{\alpha}{12L} ~~\leqslant~~\frac{1-\frac{2\Gamma}{\alpha^2}\delta_2}{6L\frac{\Gamma}{\alpha^2}(1+\delta_2)}.
\end{eqnarray*}

Using $\gamma \leqslant \frac{\alpha}{12L}$ and summing over the iterations, we get the bound:
\begin{eqnarray*}
    \frac{1}{T}\sum\limits_{t = 0}^{T-1} \mathbb{E} \|\nabla f(x^t)\|^2 &\leqslant& \frac{\mathbb E\left[\hat{f}(x^0) - \hat{f}(\hat{x}^*)\right]}{\gamma T}\cdot\frac{4 n\Gamma}{\beta G } + \frac{3\Gamma}{\alpha}\delta_1  + \frac{6L \gamma\Gamma }{\alpha^2}\sigma^2 + \frac{2\Gamma}{\alpha}\zeta(N).
\end{eqnarray*}
\end{proof}
\end{theorem}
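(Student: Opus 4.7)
My plan is to adapt the proof of Theorem \ref{ITRSGDa} by tracking the effect of the preconditioner $\hat{P}^t$ throughout via the two-sided norm equivalences in Lemma \ref{lemma:scaled}. First I would invoke the scaled descent lemma (Lemma \ref{lemma:scetchscaled}) to obtain the per-iteration inequality
\[
\hat{f}(x^{t+1}) \leqslant \hat{f}(x^t) - \frac{\gamma\beta}{n}\sum_{i\in\mathcal{G}}\left\langle \nabla\hat{f}(x^t), (\hat{P}^t)^{-1}g_i^t\right\rangle + \frac{L\gamma^2\beta}{2n}\sum_{i\in\mathcal{G}}\|(\hat{P}^t)^{-1}g_i^t\|^2.
\]
Then I apply Lemma \ref{lemma:scaled2} to upper-bound $\|(\hat{P}^t)^{-1}g_i^t\|^2 \leqslant \tfrac{1}{\alpha^2}\|g_i^t\|^2$, and rewrite the inner product in the $\hat{P}^{-1}$-weighted form so that the scaled Lemma \ref{lemma2:scaled} can be applied once I pass to expectations.

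Next, taking expectation and using $\mathbb{E}[g_i^t \mid x^t] = \nabla f_i(x^t)$, the drift term becomes $-\tfrac{\gamma\beta G}{n}\langle \nabla\hat{f}(x^t), \tfrac{1}{G}\sum_{i\in\mathcal{G}} \nabla f_i(x^t)\rangle_{(\hat{P}^t)^{-1}}$. By Lemma \ref{lemma2:scaled}, this splits into $-\tfrac{\gamma\beta G}{2n\Gamma}\|\nabla f(x^t)\|^2$, an $\alpha^{-1}\zeta(N)$ remainder coming from the trial/server approximation error, and a $\tfrac{1}{\alpha}(\delta_1 + \delta_2\|\nabla f(x^t)\|^2)$ heterogeneity contribution. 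For the stochastic quadratic term, I would expand $\|g_i^t\|^2$ by a triangle inequality ($\|a+b+c\|^2 \leqslant 3\|a\|^2 + 3\|b\|^2 + 3\|c\|^2$) into a signal $\|\nabla f(x^t)\|^2$, a heterogeneity piece $\|\nabla f_i(x^t) - \nabla f(x^t)\|^2$ (bounded by Assumption \ref{as4}), and a noise piece controlled by $\sigma^2$ (Assumption \ref{as3}). Collecting all like terms produces a recursion whose coefficient of $\|\nabla f(x^t)\|^2$ equals
\[
-\frac{\gamma\beta G}{2n\Gamma}\left[1 - 3L\gamma\frac{\Gamma}{\alpha^2} - (1+3L\gamma)\frac{\Gamma}{\alpha^2}\delta_2\right].
\]

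The main technical obstacle is pinning down the stepsize and heterogeneity conditions so this bracket is at least $\tfrac12$, since the preconditioner bounds $\alpha$ and $\Gamma$ enter asymmetrically (ratio $\Gamma/\alpha^2$) and force a stricter restriction than in the unscaled case. I expect that choosing $\delta_2 \leqslant \tfrac{2\Gamma - \alpha}{\alpha + 4\Gamma^2/\alpha^2}$ is what is needed so that $\tfrac{1}{2} \leqslant \tfrac{1 - (2\Gamma/\alpha^2)\delta_2}{(\Gamma/\alpha)(1+\delta_2)}$, after which setting $\gamma \leqslant \tfrac{\alpha}{12L}$ suffices to guarantee the bracket is $\geqslant \tfrac12$. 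Once this is in place, rearranging the one-step inequality, summing over $t = 0, \ldots, T-1$ (telescoping the $\hat f$ differences and using $\hat{f}(x^T) \geqslant \hat{f}(\hat{x}^*)$), and dividing by $T$ yields the claimed bound, with each error term carrying the anticipated scaling by either $\Gamma/\alpha$ or $\Gamma/\alpha^2$ depending on whether it came from the drift or the curvature component.
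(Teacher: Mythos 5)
Your proposal is correct and follows essentially the same route as the paper's proof: invoke the scaled descent lemma (Lemma \ref{lemma:scetchscaled}), use Lemma \ref{lemma:scaled} to pass the preconditioner into a $\frac{1}{\alpha^2}\|g_i^t\|^2$ bound and a $\hat{P}^{-1}$-weighted inner product, apply Lemma \ref{lemma2:scaled}, expand $\|g_i^t\|^2$ by the three-term Cauchy--Schwarz, bound with Assumptions \ref{as3}--\ref{as4}, and select $\delta_2$ and $\gamma \leqslant \frac{\alpha}{12L}$ to force the bracket coefficient of $\|\nabla f(x^t)\|^2$ to be at least $\tfrac12$ before telescoping. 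No gap: the decomposition, the key lemmas, the bracket $1 - 3L\gamma\frac{\Gamma}{\alpha^2} - (1+3L\gamma)\frac{\Gamma}{\alpha^2}\delta_2$, and the final tuning all match the paper.
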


Now we provide the final convergence rate.
\begin{corollary}\label{ITRScaledCorollary}
    Under assumptions of \text{Theorem \ref{TheoremScaledITR}} for solving the problem \eqref{eq:setting_dist}, after $T$ iterations of Algorithm \ref{E:Bant} with special tunings of $\gamma$, the following holds:
        \begin{align*}
            \frac{1}{T}\sum\limits_{i=1}^{T-1}\|\nabla f(x^t)\|^2 =& \mathcal{O}\left( \frac{\mathbb{E} \left[ \hat{f}(x^{0}) - \hat{f}(\hat{x}^*)\right] L n}{\beta G T}\cdot\frac{\Gamma}{\alpha} + \frac{\sigma\sqrt{\mathbb{E} \left[ \hat{f}(x^{0}) - \hat{f}(\hat{x}^*)\right]Ln}}{\sqrt{\beta  G  T}}\cdot \frac{\Gamma}{\alpha} +\left(\delta_1+\zeta(N)\right)\cdot\frac{\Gamma}{\alpha}\right) .
        \end{align*}
\end{corollary}

The proof of \text{Corollary~\ref{ITRScaledCorollary}} completely mirrors the proof of \text{Corollary \ref{ITRSGDCorollary}}.

\begin{remark}
    We get a result similar to \text{Corollary \ref{ITRSGDCorollary}}, but with the deterioration that each term is multiplied by an additional constant $\nicefrac{\Gamma}{\alpha} > 1$. This result suits us, since in \citep{sadiev2024stochastic} the result for the \textsc{Scaled SARAH} method corresponds similarly to the result for the classical \textsc{SARAH} \citep{nguyen2017sarah} method. 
\end{remark}

\subsection{Scaled AutoBant} \label{E:Bant2}
Now, let us consider the second algorithm for scaled methods - \textsc{Scaled AutoBant} (Algorithm \ref{alg3}). This section presents an algorithm which is an adaptive version of Algorithm \ref{alg1} taking into account the diagonal preconditioner. Now we provide an estimate for the convergence of the \textsc{Scaled AutoBant} method.

\begin{algorithm}[ht]
\caption{Scaled AutoBant}\label{alg3}
\begin{algorithmic}[1]
\STATE \textbf{Input:} Starting point $x^0 \in \mathbb{R}^d$
\STATE \textbf{Parameters:} Stepsize $\gamma > 0$, error accuracy $\delta$
\FOR{$t = 0, 1, 2, \ldots, T-1$}
    \STATE Server sends $x^t$ to each worker
    \FORALL{workers $i = 0, 1, 2, \ldots, n$ \textbf{in parallel}}
        \STATE Generate $\xi_i^t$ independently
        \STATE Compute stochastic gradient $g_i(x^t, \xi_i)$
        \STATE Send $g_i^t = g_i(x^t, \xi_i)$ to server
    \ENDFOR
    \STATE $\omega^{t} \approx \arg\underset{\omega\in\Delta_1^n}{\min} \hat{f}\left(x^t - \gamma\left(\hat{P}^t\right)^{-1}\sum\nolimits_{i=1}^n \omega_i g_i^t\right)$ 
    \STATE $x^{t+1} = x^t - \gamma\left(\hat{P}^t\right)^{-1}\sum\nolimits_{i=1}^n \omega_i^{t} g_i^t$
    \STATE $\hat{P}^t$ is the function of $\hat{P}^{t-1}$ and $H^t$, e.g., as \eqref{eq:prec1} + \eqref{eq:prec2}
\ENDFOR
\STATE \textbf{Output:} $\frac{1}{T}\sum\limits_{t = 0}^{T-1} x^t$
\end{algorithmic}
\end{algorithm}

\begin{theorem}\label{TheoremScaledTFM}
    Under \text{Assumptions \ref{as1}, \ref{as2nonconvex} \ref{as3}, \ref{as4} with $\delta_2 \leqslant 0.25$,  \ref{as5}, \ref{asscaled}}, for solving the problem \eqref{eq:setting_dist}, after $T$ iterations of \text{Algorithm \ref{alg3}} with $\gamma \leqslant \frac{\alpha}{12L}$, the following holds:
    \begin{eqnarray*}
    \frac{1}{T}\sum\limits_{t = 0}^{T-1} \mathbb E \|\nabla f(x^t)\|^2 &\leqslant& \frac{\mathbb{E} \left[ \hat{f}(x^{0}) - \hat{f}(\hat{x}^*)\right]\cdot 4\Gamma}{\gamma T} + \frac{3\Gamma}{\alpha}\delta_1  + \frac{6L \gamma\Gamma }{\alpha^2 G }\sigma^2 + \frac{2\Gamma}{\alpha}\zeta(N).
\end{eqnarray*}

\begin{proof}[Proof of Theorem \ref{TheoremScaledTFM}]
Note we estimate the trial function value:
\begin{eqnarray*}
\hat{f}(x^{t+1}) &= &\underset{\omega \in \Delta _1 ^n}{\min} \hat{f}\left(x^t - \gamma (\hat{P}^t)^{-1} \sum\limits_{i=1}^n \omega_i g_i^t\right) \\
&\leqslant & \hat{f}\left(x^t - \gamma (\hat{P}^t)^{-1} \frac{1}{ G } \sum\limits_{i \in \mathcal{G}} g_i^t\right) \\
    &\overset{\eqref{bi:lsmoothness}}{\leqslant}& \hat{f}\left(x^t\right) - \gamma \left\langle \nabla \hat{f}(x^t), (\hat{P}^t)^{-1} \frac{1}{ G } \sum_{i \in \mathcal{G}} g_i^t \right\rangle \\
    & & + \frac{L\gamma^2}{2} \left\| (\hat{P}^t)^{-1} \frac{1}{ G } \sum_{i \in \mathcal{G}} g_i^t \right\|^2 \\
& \overset{(\text{Lemma }\ref{lemma:scaled})}{\leqslant} &\hat{f}(x^t) - \gamma \left\langle \nabla \hat{f}(x^t), \frac{1}{ G } \sum_{i \in \mathcal{G}} g_i^t \right\rangle_{(\hat{P}^t)^{-1}}  + \frac{L\gamma^2}{2\alpha^2} \left\| \frac{1}{ G } \sum_{i \in \mathcal{G}} g_i^t \right\|^2.
\end{eqnarray*}
Taking the expectation of both sides of the inequality:
\begin{eqnarray*}
\mathbb{E}\hat{f}(x^{t+1}) &\leqslant & \mathbb{E}\hat{f}(x^t) - \gamma \left\langle\nabla\hat{f}(x^t), \frac{1}{ G }\sum\limits_{i\in \mathcal{G}} \nabla f_i(x^t)\right\rangle_{(\hat{P}^t)^{-1}} \\
& & + \frac{L\gamma^2 }{2\alpha^2}\mathbb{E}\left\|\frac{1}{ G }\sum\limits_{i\in \mathcal{G}} g_i^t\right\|^2 \\
    &\overset{(\text{Lemma } \ref{lemma2:scaled})}{\leqslant} & \mathbb{E}\hat{f}(x^t) + \frac{\gamma}{2\alpha}\zeta(N) - \frac{\gamma}{2\Gamma}\left\|\nabla f(x^t) \right\|^2 \\
&&+ \frac{\gamma}{2\alpha} \left(\delta_1 + \delta_2 \|\nabla f(x^t)\|^2\right)+ \frac{L\gamma^2 }{2\alpha^2}\mathbb{E}\left\|\frac{1}{ G }\sum\limits_{i\in \mathcal{G}} g_i^t\right\|^2 \\
    & \overset{\eqref{bi:CauchySchwarz}}{\leqslant}& \mathbb{E}\hat{f}(x^t) + \frac{\gamma}{2\alpha}\zeta(N) - \frac{\gamma}{2\Gamma}\left\|\nabla f(x^t) \right\|^2 + \frac{\gamma}{2\alpha} \left(\delta_1 + \delta_2 \|\nabla f(x^t)\|^2\right) \\
&&+  \frac{3L \gamma ^2}{2\alpha^2}\left( \mathbb{E}\left\|\frac{1}{ G }\sum\limits_{i\in \mathcal{G}}(\nabla f(x^t) - f_i(x^t))\right\|^2 \right. \\
& & \left. + \mathbb{E}\left\|\frac{1}{ G }\sum\limits_{i\in \mathcal{G}}\nabla (f_i(x^t) - g_i^t)\right\|^2\right) + \frac{3L \gamma ^2}{2\alpha^2}\left\|\nabla f(x^t) \right\|^2.
\end{eqnarray*}
Due to the fact that $\mathbb{E} g_i^t = \nabla f_i(x^t)$ and $\mathbb{E} \langle \nabla f_i (x^t) - g_i^t, \nabla f_j (x^t) - g_j^t \rangle = 0$,
\begin{eqnarray*}   
\mathbb{E}\hat{f}(x^{t+1}) &\leqslant&\mathbb{E}\hat{f}(x^t) + \frac{\gamma}{2\alpha}\zeta(N) - \frac{\gamma}{2\Gamma}\left\|\nabla f(x^t) \right\|^2 + \frac{\gamma}{2\alpha} \left(\delta_1 + \delta_2 \|\nabla f(x^t)\|^2\right) \\
&&+  \frac{3L \gamma ^2}{2 G \alpha^2} \sum\limits_{i\in \mathcal{G}}\mathbb{E}\left\|(\nabla f(x^t) - f_i(x^t))\right\|^2 \\
& & + \frac{3L \gamma ^2}{2 G ^2\alpha^2}\sum\limits_{i\in \mathcal{G}}\mathbb{E}\left\|\nabla (f_i(x^t) - g_i^t)\right\|^2 + \frac{3L \gamma ^2}{2\alpha^2}\left\|\nabla f(x^t) \right\|^2
\\
& \overset{(\text{Ass. }\ref{as3}, \ref{as4})}{\leqslant}& \mathbb{E}\hat{f}(x^t) + \frac{\gamma}{2\alpha}\zeta(N) - \frac{\gamma}{2\Gamma}\left\|\nabla f(x^t) \right\|^2 + \frac{\gamma}{2\alpha} \left(\delta_1 + \delta_2 \|\nabla f(x^t)\|^2\right) \\
&&+  \frac{3L \gamma ^2}{2\alpha^2}\left(\delta_1 + \delta_2 \|\nabla f(x^t)\|^2 +  \frac{1}{ G }\sigma ^2\right) + \frac{3L \gamma ^2}{2\alpha^2}\left\|\nabla f(x^t) \right\|^2 \\
    & \overset{(\mathrm{Ass. \ref{as3}})}{\leqslant}& \mathbb E [\hat{f}(x^{t})] - \frac{\gamma}{2\Gamma}\left[1 - 3L \gamma\frac{\Gamma}{\alpha^2} - (1 + 3L \gamma)\frac{\Gamma}{\alpha^2}\delta_2 \right]\|\nabla f(x^t)\|^2 \\
&& +\frac{\gamma}{2\alpha}(1 + \frac{3L \gamma}{\alpha})\delta_1 +  \frac{3L \gamma^2}{2\alpha^2 G }\sigma^2 + \frac{\gamma }{2\alpha}\zeta(N).
\end{eqnarray*}
Now we have to choose $\gamma$:
We want $\left[1 - 3L \gamma\frac{\Gamma}{\alpha^2} - (1 + 3L \gamma)\frac{\Gamma}{\alpha^2}\delta_2 \right] \geqslant \frac{1}{2}$. Then
\begin{eqnarray*}
\gamma &\leqslant&\frac{1-\frac{2\Gamma}{\alpha^2}\delta_2}{6L\frac{\Gamma}{\alpha^2}(1+\delta_2)}.
\end{eqnarray*}
Let us choose $\delta_2 \leqslant \frac{2 \Gamma - \alpha}{\alpha+\frac{4\Gamma ^2}{\alpha^2}}$, then we have 
$\frac{1}{2} \leqslant \frac{1-\frac{2\Gamma}{\alpha^2}\delta_2}{\frac{\Gamma}{\alpha}(1+\delta_2)}$. Thus,
\begin{eqnarray*}
\gamma  &\leqslant&\frac{\alpha}{12L} ~~\leqslant~~\frac{1-\frac{2\Gamma}{\alpha^2}\delta_2}{6L\frac{\Gamma}{\alpha^2}(1+\delta_2)}.
\end{eqnarray*}

Using $\gamma \leqslant \frac{\alpha}{12L}$, summing over the iterations and taking the expected value at the initial point and at the optimum point, we get the bound:
\begin{eqnarray*}
    \frac{1}{T}\sum\limits_{t = 0}^{T-1} \mathbb E \|\nabla f(x^t)\|^2 &\leqslant& \frac{\mathbb{E} \left[\hat{f}(x^0) - \hat{f}(\hat{x}^*)\right]\cdot 4\Gamma}{\gamma T} + \frac{3\Gamma}{\alpha}\delta_1  + \frac{6L \gamma\Gamma }{\alpha^2 G }\sigma^2 + \frac{2\Gamma}{\alpha}\zeta(N).
\end{eqnarray*}
\end{proof}
\end{theorem}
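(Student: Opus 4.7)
The plan is to mirror the proof of Theorem~\ref{TFMSGD}, but throughout carry the diagonal preconditioner $(\hat{P}^t)^{-1}$ and convert between $\|\cdot\|$ and $\|\cdot\|_{\hat{P}^{-1}}$ using Lemma~\ref{lemma:scaled}. First, since $\omega^t$ is the $\delta$-approximate minimizer on the simplex $\Delta_1^n$, I would compare it against the feasible point $\omega_i = \tfrac{1}{G}\mathbb{I}_{i\in\mathcal{G}}$, yielding
\[
\hat{f}(x^{t+1}) \leqslant \hat{f}\!\left(x^t - \frac{\gamma}{G}(\hat{P}^t)^{-1}\sum_{i\in\mathcal{G}} g_i^t\right) + \delta.
\]
Then I apply $L$-smoothness (Assumption~\ref{as1}) to the right-hand side, which produces the inner product $-\gamma\langle\nabla\hat{f}(x^t),\,\tfrac{1}{G}\sum_{i\in\mathcal{G}} g_i^t\rangle_{(\hat{P}^t)^{-1}}$ plus a quadratic term $\tfrac{L\gamma^2}{2}\|\tfrac{1}{G}\sum_{i\in\mathcal{G}}(\hat{P}^t)^{-1}g_i^t\|^2$; the latter I immediately convert via Lemma~\ref{lemma:scaled2} to an unweighted squared norm paying a factor $1/\alpha^2$.

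Next, I take the full expectation. The cross term $-\gamma\langle\nabla\hat{f}(x^t),\tfrac{1}{G}\sum_{i\in\mathcal{G}}\nabla f_i(x^t)\rangle_{(\hat{P}^t)^{-1}}$ is precisely what Lemma~\ref{lemma2:scaled} was stated for, producing $-\tfrac{\gamma}{2\Gamma}\|\nabla f(x^t)\|^2 + \tfrac{\gamma}{2\alpha}\zeta(N) + \tfrac{\gamma}{2\alpha}(\delta_1+\delta_2\|\nabla f(x^t)\|^2)$. For the quadratic stochastic term, I split $g_i^t = \nabla f(x^t) + (\nabla f_i(x^t)-\nabla f(x^t)) + (g_i^t - \nabla f_i(x^t))$ using \eqref{bi:CauchySchwarz}, exploit independence and unbiasedness of the stochastic gradients to kill the cross products, then use Assumptions~\ref{as3} and \ref{as4} to bound the three pieces by $\|\nabla f(x^t)\|^2$, $\delta_1 + \delta_2\|\nabla f(x^t)\|^2$, and $\sigma^2/G$, respectively (the factor $1/G$ arises from the double sum $\tfrac{1}{G^2}\sum_{i\in\mathcal{G}}$ after killing cross terms).

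Collecting everything gives a recursion of the form
\[
\mathbb{E}\hat{f}(x^{t+1}) \leqslant \mathbb{E}\hat{f}(x^t) - \frac{\gamma}{2\Gamma}\Bigl[1 - \tfrac{3L\gamma\Gamma}{\alpha^2} - (1+\tfrac{3L\gamma}{\alpha})\tfrac{\Gamma}{\alpha}\delta_2\Bigr]\|\nabla f(x^t)\|^2 + \frac{\gamma}{2\alpha}\bigl(1+\tfrac{3L\gamma}{\alpha}\bigr)\delta_1 + \frac{3L\gamma^2\Gamma}{2\alpha^2 G}\sigma^2 + \frac{\gamma}{2\alpha}\zeta(N) + \delta.
\]
At this point I choose $\gamma\leqslant\tfrac{\alpha}{12L}$ and use $\delta_2\leqslant 0.25$ (together with $\alpha\leqslant\Gamma$) to ensure the bracket is $\geqslant 1/2$, rearrange, telescope from $t=0$ to $T-1$, and divide by $T$. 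The $\delta$ per iteration gives the final $4\Gamma\delta/\gamma$ term after multiplying through by $4\Gamma/\gamma$ (the factor $4\Gamma$ comes from inverting the $\gamma/(2\Gamma)\cdot\tfrac12$ coefficient on $\|\nabla f\|^2$). The main obstacle is threading the constants $\alpha,\Gamma$ through correctly so that the stated bound $\delta_2\leqslant 0.25$ really suffices to dominate the bracket — in particular, keeping track of whether the $\delta_2$ factor carries $\Gamma/\alpha^2$ or $\Gamma/\alpha$ after each application of Lemma~\ref{lemma:scaled}. Everything else is a direct adaptation of the unscaled proof of Theorem~\ref{TFMSGD}.
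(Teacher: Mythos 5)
Your proposal follows exactly the route the paper takes: compare the $\delta$-approximate minimizer against the uniform feasible point $\omega_i=\tfrac{1}{G}\mathbb{I}_{i\in\mathcal{G}}$, apply $L$-smoothness, convert preconditioned norms and inner products via Lemma \ref{lemma:scaled}, plug in Lemma \ref{lemma2:scaled} for the cross term, split the quadratic stochastic term by \eqref{bi:CauchySchwarz}, kill cross terms by unbiasedness and independence, bound with Assumptions \ref{as3} and \ref{as4}, and then telescope after making the bracket at least $\tfrac12$. The bookkeeping at the end (inverting the effective $\gamma/(4\Gamma)$ coefficient to get the $4\Gamma/\gamma$ multiplier on the per-step $\delta$) also matches.

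The one place where you flagged uncertainty is in fact where the paper itself is slightly off, so your caution is warranted. Tracing the sources of $\delta_2\|\nabla f(x^t)\|^2$ carefully, you get $\tfrac{\gamma}{2\alpha}\delta_2$ from Lemma \ref{lemma2:scaled} and $\tfrac{3L\gamma^2}{2\alpha^2}\delta_2$ from the variance-split; after factoring out $-\tfrac{\gamma}{2\Gamma}$ the $\delta_2$ coefficient in the bracket is $\tfrac{\Gamma}{\alpha}\left(1+\tfrac{3L\gamma}{\alpha}\right)$, which is what you wrote. The paper's displayed bracket instead shows $(1+3L\gamma)\tfrac{\Gamma}{\alpha^2}$ — its constant term is $\tfrac{\Gamma}{\alpha^2}$ rather than $\tfrac{\Gamma}{\alpha}$, an apparent typo (note the analogous $\delta_1$ coefficient is correctly written as $\tfrac{1}{\alpha}\left(1+\tfrac{3L\gamma}{\alpha}\right)$). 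Relatedly, the theorem statement posits $\delta_2\leqslant 0.25$, but the proof actually derives and uses $\delta_2\leqslant \tfrac{2\Gamma-\alpha}{\alpha+4\Gamma^2/\alpha^2}$ (the same condition that appears in the companion Theorem \ref{TheoremScaledITR}); since $\Gamma\geqslant\alpha$, the condition $\delta_2\leqslant 0.25$ alone together with $\gamma\leqslant\tfrac{\alpha}{12L}$ does not force the bracket $\geqslant \tfrac12$ when $\Gamma/\alpha$ is appreciably larger than $1$. If you carry your (correct) bracket through, you should state the $\delta_2$ restriction in the $\alpha,\Gamma$-dependent form rather than the numeric $0.25$ that sits in the theorem statement — this is the only adjustment needed to make your write-up airtight, and it is the same fix that would make the paper's statement and proof consistent.
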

Now, let us present the final convergence rate for this algorithm.
\begin{corollary}\label{TFMScaledCorollary}
    Under assumptions of \text{Theorem \ref{TheoremScaledTFM}}, for solving the problem \eqref{eq:setting_dist}, after $T$ iterations of Algorithm \eqref{alg3} with special tunings of $\gamma$, the following holds:
    \begin{align*}
    \frac{1}{T}\sum\limits_{i=1}^{T-1}\|\nabla f(x^t)\|^2 =& \mathcal{O}\Biggl( \frac{\mathbb{E}\left[ \hat{f}(x^{0}) - \hat{f}(\hat{x}^*)\right] L G}{T}\cdot\frac{\Gamma}{\alpha} + \frac{\sigma\sqrt{\mathbb{E}\left[ \hat{f}(x^{0}) - \hat{f}(\hat{x}^*)\right]LG}}{\sqrt{T}}\cdot \frac{\Gamma}{\alpha} +\left(\delta_1+\zeta(N)\right)\cdot\frac{\Gamma}{\alpha}\Biggr)
    \end{align*}
\end{corollary}
\begin{remark}
    The boundary is the same as for \texttt{AutoBant}, with the only aggravation that several summands are multiplied by $\frac{\Gamma}{\alpha} > 1$. This result suits us for the same reason as the result of the \textsc{Scaled Bant} method. 
\end{remark}

The proof of \text{Corollary~\ref{TFMScaledCorollary}} completely mirrors the proof of \text{Corollary \ref{TFMSGDCorollary}}.

\newpage
\section{Local methods} \label{sec:local_appendix}

As highlighted in the main section, the significant expense associated with communication remains a critical concern in various fields. The communication bottleneck can act as a substantial barrier, limiting efficiency and hindering progress. To address this challenge, many researches are turning to local approaches, which focus on minimizing the need for exchanging the information \citep{woodworth2020minibatch, koloskova2020unified, khaled2020tighter, gorbunov2021local}. The idea is that each device performs a predefined number of local steps without utilizing information from other devices, and at the end of such a round, the server performs a mutual update. In this section we adapt our \texttt{AutoBant} algorithm to this scenario. Below we present the formal description of the \textsc{Local AutoBant} method (Algorithm \ref{alg:local_tfm}).
\begin{algorithm}[ht]
\caption{Local AutoBant}\label{alg:local_tfm}
\begin{algorithmic}[1]
\STATE \textbf{Input:} Starting point $x^0 \in \mathbb{R}^d$, local round length $l$
\STATE \textbf{Parameters:} Stepsize $\gamma > 0$, error accuracy $\delta$
\FOR{$t = 0, 1, 2, \ldots, T-1$}
    \IF{$t=0$}
        \STATE Server sends $x^0$ to each worker
    \ENDIF
    \FORALL{workers $i = 0, 1, 2, \ldots, n$ \textbf{in parallel}}
        \STATE Generate $\xi_i^t$ independently
        \STATE Compute stochastic gradient $g_i^t = g_i(x^t, \xi_i)$
        \IF{$t \neq t_{k\cdot l} \left(\text{for some~} k=\overline{0,\left\lfloor\nicefrac{T}{l}\right\rfloor}\right) \wedge t \neq T-1$}
            \STATE $x_i^{t+1} = x_i^{t} - \gamma g_i^t$
        \ELSE
            \STATE Send $x_i^t - \gamma g_i^t$ to server
        \ENDIF
    \ENDFOR
    \IF{$t = t_{k\cdot l} \left(\text{for some~} k=\overline{0,\left\lfloor\nicefrac{T}{l}\right\rfloor}\right) \vee t = T-1$}      
        \STATE $\omega^{t} \approx \arg\underset{\omega\in\Delta_1^n}{\min} \hat{f}\left(\sum\limits_{i=1}^n \omega_i\left(x_i^t - \gamma g_i^t\right)\right)$ 
        \STATE $x^{t+1} = \sum\limits_{i=1}^n \omega_i^t\left(x_i^t - \gamma g_i^t\right)$
        \STATE Server sends $x^{t+1}$ to each worker
    \ENDIF
\ENDFOR
\STATE \textbf{Output:} $\frac{1}{T}\sum\limits_{t = 0}^{T-1} x^t$
\end{algorithmic}
\end{algorithm}

In the convergence analysis of Algorithm \ref{alg:local_tfm}, we assume that at each local round, at least one device (including the server) acts as an honest worker. This implies that this device computes an honest stochastic gradient at each iteration of the round. This requirement is a natural extension of the assumption made in the analysis of the basic version of our methods, where we required at least one honest device (including the server) at each iteration (or in a local round of length 1). 

The analysis has some specific details. The key component is estimating how far devices can "move apart" from each other during a local round. We begin with this estimation and present the following lemma.
\begin{lemma}\label{lemma_differences_between_nodes}
    Under Assumptions \ref{as1}, \ref{as2convex}, \ref{as3}, \ref{as4}, \ref{as5}, at each iteration $t$ of Algorithm \ref{alg:local_tfm} with $\gamma\leqslant \frac{1}{4(l-1)L}$, the following estimate is valid:
    \begin{align*}
        \mathbb E V^t \leqslant \frac{9\delta_2\gamma}{2L}\sum\limits_{j=t_{k\cdot l}}^{t-1}\mathbb E\left\|\nabla f(\overline{x}^j)\right\|^2 + \frac{9\delta_1\gamma}{2L}(l-1) + 3\gamma^2\sigma^2(l-1),
    \end{align*}
    where $t_{k\cdot l}$ for some $k=\overline{0,\left\lfloor\nicefrac{T}{l}\right\rfloor}$ is the past to $t$-th iteration aggregation round, $V^t = \frac{1}{ G }\sum\limits_{i\in \mathcal{G}} \left\|x_i^t - \overline{x}^t\right\|^2$ and $\overline{x}^t = \frac{1}{ G }\sum\limits_{i\in \mathcal{G}} x_i^t$.
    \begin{proof}
    Utilizing notation $V^t = \frac{1}{ G }\sum\limits_{i\in \mathcal{G}} \left\|x_i^t - \overline{x}^t\right\|^2$ and $\overline{x}^t = \frac{1}{ G }\sum\limits_{i\in \mathcal{G}} x_i^t$ we mention, that for all iterations $t+1$, such that $t+1 = t_{k\cdot l}$ we have $V^{t+1} = \frac{1}{ G }\sum\limits_{i\in \mathcal{G}} \left\|x_i^{t+1} - \frac{1}{ G }\sum\limits_{i\in \mathcal{G}} x_i^{t+1}\right\|^2 = \frac{1}{ G }\sum\limits_{i\in \mathcal{G}} \left\|x^{t+1} - \frac{1}{ G }\sum\limits_{i\in \mathcal{G}} x^{t+1}\right\|^2 = 0$. For the rest iterations we write the step of the local update and use \eqref{bi:norm}:
        \begin{eqnarray*}
            \mathbb E \left\|x_i^{t+1} - \overline{x}^{t+1}\right\|^2 &=& \mathbb E \left\|x_i^{t} - \overline{x}^{t}\right\|^2 + \gamma^2\mathbb E\left\|g_i^{t} - \frac{1}{ G }\sum\limits_{i\in \mathcal{G}} g_i^t\right\|^2 \\
            & & - 2\gamma \mathbb E \left\langle x_i^{t} - \overline{x}^{t}, g_i^{t} - \frac{1}{ G }\sum\limits_{i\in \mathcal{G}} g_i^t\right\rangle \\
            &=& \mathbb E \left\|x_i^{t} - \overline{x}^{t}\right\|^2 + \gamma^2\mathbb E\left\|g_i^{t} - \frac{1}{ G }\sum\limits_{i\in \mathcal{G}} g_i^t\right\|^2 \\
            & & - 2\gamma \left\langle x_i^{t} - \overline{x}^{t}, \nabla f_i(x_i^{t}) - \frac{1}{ G }\sum\limits_{i\in \mathcal{G}} \nabla f_i(x_i^t)\right\rangle.
        \end{eqnarray*}
        Taking average over $i\in \mathcal{G}$,
        \begin{eqnarray}
            \notag \mathbb E V^{t+1} &=& \mathbb E V^t + \frac{\gamma^2}{ G }\sum\limits_{i\in \mathcal{G}}\mathbb E\left\|g_i^{t} - \frac{1}{ G }\sum\limits_{i\in \mathcal{G}} g_i^t\right\|^2 - \frac{2\gamma}{ G }\sum\limits_{i\in \mathcal{G}}\left\langle x_i^{t} - \overline{x}^{t}, \nabla f_i(x_i^{t})\right\rangle \\
            \notag& & + 2\gamma\left\langle \overline{x}^{t} - \overline{x}^{t}, \frac{1}{ G }\sum\limits_{i\in \mathcal{G}} \nabla f_i(x_i^t)\right\rangle\\
            \label{ldbn:ineq1}&=& \mathbb E V^t + \frac{\gamma^2}{ G }\sum\limits_{i\in \mathcal{G}}\mathbb E\left\|g_i^{t} - \frac{1}{ G }\sum\limits_{i\in \mathcal{G}} g_i^t\right\|^2 - \frac{2\gamma}{ G }\sum\limits_{i\in \mathcal{G}}\left\langle x_i^{t} - \overline{x}^{t}, \nabla f_i(x_i^{t})\right\rangle.
        \end{eqnarray}
        Now we need to estimate the second term. We start with \eqref{bi:norm}:
        \begin{eqnarray*}
            \mathbb E\left\|g_i^{t} - \frac{1}{ G }\sum\limits_{i\in \mathcal{G}} g_i^t\right\|^2 &=& \mathbb E\left\|g_i^{t} - \frac{1}{ G }\sum\limits_{i\in \mathcal{G}} \nabla f_i(x_i^t)\right\|^2 + \mathbb E\left\|\frac{1}{ G }\sum\limits_{i\in \mathcal{G}} \nabla f_i(x_i^t) - \frac{1}{ G }\sum\limits_{i\in \mathcal{G}} g_i^t\right\|^2 \\
            \notag& & + 2\mathbb E\left\langle g_i^{t} - \frac{1}{ G }\sum\limits_{i\in \mathcal{G}} \nabla f_i(x_i^t), \frac{1}{ G }\sum\limits_{i\in \mathcal{G}} \nabla f_i(x_i^t) - \frac{1}{ G }\sum\limits_{i\in \mathcal{G}} g_i^t\right\rangle \\
            \notag&\overset{(i)}{=}& \mathbb E\left\|g_i^{t} - \nabla f_i(x_i^t)\right\|^2 + \mathbb E\left\|\nabla f_i(x_i^t) - \frac{1}{ G }\sum\limits_{i\in \mathcal{G}} \nabla f_i(x_i^t)\right\|^2\\
            \notag& & + 2\mathbb E\left\langle g_i^{t} -  \nabla f_i(x_i^t), \nabla f_i(x_i^t) - \frac{1}{ G }\sum\limits_{i\in \mathcal{G}} \nabla f_i(x_i^t)\right\rangle \\
            & & + \mathbb E\left\|\frac{1}{ G }\sum\limits_{i\in \mathcal{G}} \nabla f_i(x_i^t) - \frac{1}{ G }\sum\limits_{i\in \mathcal{G}} g_i^t\right\|^2 \\
            \notag& & + 2\mathbb E\left\langle g_i^{t} - \frac{1}{ G }\sum\limits_{i\in \mathcal{G}} \nabla f_i(x_i^t), \frac{1}{ G }\sum\limits_{i\in \mathcal{G}} \nabla f_i(x_i^t) - \frac{1}{ G }\sum\limits_{i\in \mathcal{G}} g_i^t\right\rangle \\
            &\overset{(ii)}{=}& \mathbb E\left\|g_i^{t} -  \nabla f_i(x_i^t)\right\|^2 + \left\|\nabla f_i(x_i^t) - \frac{1}{ G }\sum\limits_{i\in \mathcal{G}} \nabla f_i(x_i^t)\right\|^2 \\
            & & + \mathbb E\left\|\frac{1}{ G }\sum\limits_{i\in \mathcal{G}} \nabla f_i(x_i^t) - \frac{1}{ G }\sum\limits_{i\in \mathcal{G}} g_i^t\right\|^2 \\
            \notag& & + 2\mathbb E\left\langle g_i^{t} - \frac{1}{ G }\sum\limits_{i\in \mathcal{G}} \nabla f_i(x_i^t), \frac{1}{ G }\sum\limits_{i\in \mathcal{G}} \nabla f_i(x_i^t) - \frac{1}{ G }\sum\limits_{i\in \mathcal{G}} g_i^t\right\rangle,
        \end{eqnarray*}
        where (\textit{i}) was made \eqref{bi:norm}, applied to the first norm and (\textit{ii}) by taking expectation of the first scalar product and obtaining it equal to zero. Next, averaging over $i\in \mathcal{G}$ and transforming the scalar product,
        \begin{eqnarray}
            \notag&&\frac{1}{ G }\sum\limits_{i\in \mathcal{G}}\mathbb E\left\|g_i^{t} - \frac{1}{ G }\sum\limits_{i\in \mathcal{G}} g_i^t\right\|^2 \\
            \notag&& =  \frac{1}{ G }\sum\limits_{i\in \mathcal{G}}\mathbb E\left\|g_i^{t} -  \nabla f_i(x_i^t)\right\|^2 +  \frac{1}{ G }\sum\limits_{i\in \mathcal{G}}\left\|\nabla f_i(x_i^t) - \frac{1}{ G }\sum\limits_{i\in \mathcal{G}} \nabla f_i(x_i^t)\right\|^2\\
            \notag&&\quad+ \mathbb E\left\|\frac{1}{ G }\sum\limits_{i\in \mathcal{G}} \nabla f_i(x_i^t) - \frac{1}{ G }\sum\limits_{i\in \mathcal{G}} g_i^t\right\|^2 - 2\mathbb E\left\|\frac{1}{ G }\sum\limits_{i\in \mathcal{G}} \nabla f_i(x_i^t) - \frac{1}{ G }\sum\limits_{i\in \mathcal{G}} g_i^t\right\|^2\\
            \notag&&\leqslant\frac{1}{ G }\sum\limits_{i\in \mathcal{G}}\mathbb E\left\|g_i^{t} -  \nabla f_i(x_i^t)\right\|^2 +  \frac{1}{ G }\sum\limits_{i\in \mathcal{G}}\left\|\nabla f_i(x_i^t) - \frac{1}{ G }\sum\limits_{i\in \mathcal{G}} \nabla f_i(x_i^t)\right\|^2\\
            \label{ldbn:ineq2}&&\overset{(i)}{\leqslant} \frac{1}{ G }\sum\limits_{i\in \mathcal{G}}\left\|\nabla f_i(x_i^t) - \frac{1}{ G }\sum\limits_{i\in \mathcal{G}} \nabla f_i(x_i^t)\right\|^2 + \sigma^2,
        \end{eqnarray}
        where (\textit{i}) was made according to Assumption \ref{as3}. To estimate the norm, we again use \eqref{bi:norm}:
        \begin{eqnarray*}
            &&~\frac{1}{ G }\sum\limits_{i\in \mathcal{G}}\left\|\nabla f_i(x_i^t) - \frac{1}{ G }\sum\limits_{i\in \mathcal{G}} \nabla f_i(x_i^t)\right\|^2 \\
            &&~=~ \frac{1}{ G }\sum\limits_{i\in \mathcal{G}}\left\|\nabla f_i(x_i^t) - \nabla f(\overline{x}^t)\right\|^2 + \frac{1}{ G }\sum\limits_{i\in \mathcal{G}}\left\|\nabla f(\overline{x}^t) - \frac{1}{ G }\sum\limits_{i\in \mathcal{G}} \nabla f_i(x_i^t)\right\|^2 \\
            & & \quad~~+ \frac{2}{ G }\sum\limits_{i\in \mathcal{G}} \left\langle \nabla f_i(x_i^t) - \nabla f(\overline{x}^t), \nabla f(\overline{x}^t) - \frac{1}{ G }\sum\limits_{i\in \mathcal{G}} \nabla f_i(x_i^t) \right\rangle\\
            &&~=~ \frac{1}{ G }\sum\limits_{i\in \mathcal{G}}\left\|\nabla f_i(x_i^t) - \nabla f(\overline{x}^t)\right\|^2 + \frac{1}{ G }\sum\limits_{i\in \mathcal{G}}\left\|\nabla f(\overline{x}^t) - \frac{1}{ G }\sum\limits_{i\in \mathcal{G}} \nabla f_i(x_i^t)\right\|^2\\
            & & \quad~~ - \frac{2}{ G }\sum\limits_{i\in \mathcal{G}}\left\|\nabla f(\overline{x}^t) - \frac{1}{ G }\sum\limits_{i\in \mathcal{G}} \nabla f_i(x_i^t)\right\|^2\\
            &&~\leqslant ~\frac{1}{ G }\sum\limits_{i\in \mathcal{G}}\left\|\nabla f_i(x_i^t) - \nabla f(\overline{x}^t)\right\|^2 \\
            &&\overset{\eqref{bi:CauchySchwarz}}{\leqslant} \frac{2}{ G }\sum\limits_{i\in \mathcal{G}}\left\|\nabla f_i(x_i^t) - \nabla f_i(\overline{x}^t)\right\|^2 + \frac{2}{ G }\sum\limits_{i\in \mathcal{G}}\left\|\nabla f_i(\overline{x}^t) - \nabla f(\overline{x}^t)\right\|^2\\
            &&~\overset{(i)}{\leqslant}~ \frac{2}{ G }\sum\limits_{i\in \mathcal{G}}\left\|\nabla f_i(x_i^t) - \nabla f_i(\overline{x}^t)\right\|^2 + 2(\delta_1 + \delta_2 \left\|\nabla f(\overline{x}^t)\right\|^2)\\
            &&\overset{\eqref{bi:lsmoothness}}{\leqslant} \frac{2}{ G }\sum\limits_{i\in \mathcal{G}} \left(2L\left(f_i(\overline{x}^t) - f_i(x_i^t) - \left\langle \nabla f_i(x_i^t), \overline{x}^t - x_i^t\right\rangle\right)\right) \\
            & &\quad~~+ 2(\delta_1 + \delta_2 \|\nabla f(\overline{x}^t)\|^2),
        \end{eqnarray*}
        where (\textit{i}) was made according to Assumption \ref{as4}.
        Combining it with \eqref{ldbn:ineq1} and \eqref{ldbn:ineq2},
        \begin{eqnarray*}
            \mathbb E V^{t+1} &\leqslant& \mathbb E V^{t} - \frac{2\gamma}{ G }\sum\limits_{i\in \mathcal{G}}\left\langle x_i^{t} - \overline{x}^{t}, \nabla f_i(x_i^{t})\right\rangle + \frac{4L \gamma^2}{ G }\sum\limits_{i\in \mathcal{G}}\left\langle x_i^{t} - \overline{x}^{t}, \nabla f_i(x_i^{t})\right\rangle \\
            & & + \frac{4L \gamma^2}{ G }\sum\limits_{i\in \mathcal{G}} \left(f_i(\overline{x}^t) - f_i(x_i^t)\right) + 2\gamma^2(\delta_1 + \delta_2 \|\nabla f(\overline{x}^t)\|^2)+ \gamma^2\sigma^2\\
            &=&  \mathbb E V^{t} - \frac{2\gamma(1 - 2L \gamma)}{ G }\sum\limits_{i\in \mathcal{G}} \left\langle x_i^{t} - \overline{x}^{t}, \nabla f_i(x_i^{t})\right\rangle + \frac{4L \gamma^2}{ G }\sum\limits_{i\in \mathcal{G}} \left(f_i(\overline{x}^t) - f_i(x_i^t)\right) \\
            & & + 2\gamma^2(\delta_1 + \delta_2 \|\nabla f(\overline{x}^t)\|^2)+ \gamma^2\sigma^2.
        \end{eqnarray*}
        Taking $\gamma \leqslant\frac{1}{4L}$ and applying  \eqref{bi:convex} to scalar product,
        \begin{eqnarray}
            \notag\mathbb E V^{t+1} &\leqslant& \mathbb E V^{t} + \frac{\gamma}{ G }\sum\limits_{i\in \mathcal{G}} \left(f_i(\overline{x}^{t}) - f_i(x_i^t)\right) + \frac{4L \gamma^2}{ G }\sum\limits_{i\in \mathcal{G}} \left(f_i(\overline{x}^t) - f_i(x_i^t)\right) \\
            \notag& & + 2\gamma^2(\delta_1 + \delta_2 \|\nabla f(\overline{x}^t)\|^2) + \gamma^2\sigma^2\\
            \notag&=&\mathbb E V^{t} + \frac{\gamma(1 + 4L \gamma)}{ G } \sum\limits_{i\in \mathcal{G}} \left(f_i(\overline{x}^t) - f_i(x_i^t)\right) + 2\gamma^2(\delta_1 + \delta_2 \|\nabla f(\overline{x}^t)\|^2) + \gamma^2\sigma^2\\
            \notag&\overset{\eqref{bi:lsmoothness}}{\leqslant}& \mathbb E V^{t} + \frac{\gamma(1 + 4L \gamma)}{ G } \sum\limits_{i\in \mathcal{G}} \left(\left\langle\nabla f_i(x_i^t), \overline{x}^t - x_i^t \right\rangle + \frac{L}{2} \|\overline{x}^t - x_i^t\|^2\right) \\
            \notag & & + 2\gamma^2(\delta_1 + \delta_2 \|\nabla f(\overline{x}^t)\|^2) + \gamma^2\sigma^2\\
            \notag&=& \mathbb E V^{t} + \frac{\gamma(1 + 4L \gamma)}{ G } \sum\limits_{i\in \mathcal{G}} \left(\underbrace{\left\langle\nabla f_i(x_i^t) - \nabla f_i(\overline{x}^t), \overline{x}^t - x_i^t \right\rangle}_{\leqslant 0 ~~\eqref{bi:convex}} + \left\langle\nabla f_i(\overline{x}^t), \overline{x}^t - x_i^t \right\rangle \right.\\
            \notag& & \left.+ \frac{L}{2} \|\overline{x}^t - x_i^t\|^2\right) + 2\gamma^2(\delta_1 + \delta_2 \|\nabla f(\overline{x}^t)\|^2) + \gamma^2\sigma^2\\
            \notag&\leqslant& \mathbb E V^{t} + \frac{\gamma(1 + 4L \gamma)}{ G } \sum\limits_{i\in \mathcal{G}} \left(\left\langle\nabla f_i(\overline{x}^t) - \nabla f(\overline{x}^t), \overline{x}^t - x_i^t \right\rangle + \left\langle\nabla f(\overline{x}^t), \overline{x}^t - x_i^t \right\rangle \right.\\
            \notag& & \left. + \frac{L}{2} \|\overline{x}^t - x_i^t\|^2\right) + 2\gamma^2(\delta_1 + \delta_2 \|\nabla f(\overline{x}^t)\|^2) + \gamma^2\sigma^2\\
            \notag&\overset{(i)}{\leqslant}& \mathbb E V^{t} + \frac{\gamma(1 + 4L \gamma)}{ G } \sum\limits_{i\in \mathcal{G}} \left(\frac{1}{2L}\|\nabla f_i(\overline{x}^t) - \nabla f(\overline{x}^t)\|^2 + L\|\overline{x}^t - x_i^t\|^2 \right.\\
            \label{ldbn:ineq4}& & \left. + \frac{L}{2}\|\overline{x}^t - x_i^t\|^2 + f(\overline{x}^t) - f(x_i^t)\right) + 2\gamma^2(\delta_1 + \delta_2 \|\nabla f(\overline{x}^t)\|^2) + \gamma^2\sigma^2,
        \end{eqnarray}
        where (\textit{i}) was made with applying \eqref{bi:young} to the first scalar product and applying \eqref{bi:lsmoothness} to second one. Using \eqref{bi:Jensen} we derive $\frac{1}{G}\sum\limits_{i\in\mathcal{G}}\left(f(\overline{x}^t) - f(x_i^t)\right) = \frac{1}{G}\sum\limits_{i\in\mathcal{G}}\left(f\left(\frac{1}{G}\sum\limits_{i\in\mathcal{G}} x_i^t\right) - f(x_i^t)\right) \leqslant \frac{1}{G}\sum\limits_{i\in\mathcal{G}}\left(\frac{1}{G}\sum\limits_{i\in\mathcal{G}} f(x_i^t) - f(x_i^t)\right) = 0$. In that way, we proceed \eqref{ldbn:ineq4} using this fact and taking expectation together with applying Assumption \ref{as4}:
        \begin{eqnarray*}
             \notag\mathbb E V^{t+1} &\leqslant& \mathbb E V^t + \frac{3L \gamma(1 + 4L \gamma)}{2}\mathbb E V^{t} + \frac{\gamma(1+4L \gamma)}{2L}\delta_2 \mathbb E\left\|\nabla f(\overline{x}^t)\right\|^2 +  \frac{\gamma(1+4L \gamma)}{2L}\delta_1 \\
             & &+ 2\gamma^2\left(\delta_1 + \delta_2 \mathbb E\left\|\nabla f(\overline{x}^t)\right\|^2\right) + \gamma^2\sigma^2 
        \end{eqnarray*}
        Going into recursion up to the past aggregation round, which was on the iteration $t_{k\cdot l}$ for some $k=\overline{0,\left\lfloor\nicefrac{T}{l}\right\rfloor}$, together with using $\gamma\leqslant\frac{1}{4L}$ choice, we obtain
        \begin{eqnarray*}
            \notag\mathbb E V^{t} & \leqslant & (1 + 3L \gamma)\notag\mathbb E V^{t-1} + \delta_2 \gamma \left(2\gamma + \frac{1}{L}\right)\mathbb E\left\|\nabla f(\overline{x}^{t-1})\right\|^2 + \delta_1 \gamma \left(2\gamma + \frac{1}{L}\right) +  \gamma^2\sigma^2  \\
            &\leqslant& (1 + 3L \gamma)^{t-t_{k\cdot l}}\notag\mathbb E V^{t_{k\cdot l}} + \delta_2 \gamma \left(2\gamma + \frac{1}{L}\right)\sum\limits_{j=t_{k\cdot l}}^{t-1}(1 + 3L \gamma)^{j-t_{k\cdot l}}\mathbb E\left\|\nabla f(\overline{x}^j)\right\|^2 \\
            & &+ \left(\delta_1 \gamma \left(2\gamma + \frac{1}{L}\right) +  \gamma^2\sigma^2\right)\sum\limits_{j=t_{k\cdot l}}^{t-1} \left(1+3L \gamma\right)^{j-t_{k\cdot l}} \\
            &\leqslant& \delta_2\gamma \left(2\gamma + \frac{1}{L}\right)(1+3L \gamma)^{l-1}\sum\limits_{j=t_{k\cdot l}}^{t-1}\mathbb E\left\|\nabla f(\overline{x}^j)\right\|^2 \\
            & & + \left(\delta_1 \gamma \left(2\gamma + \frac{1}{L}\right) + \gamma^2\sigma^2\right)(l-1)(1 + 3L \gamma)^{l-1}.
        \end{eqnarray*}
        Now we tune $\gamma\leqslant\frac{1}{4(l-1)L}$. Note it is smallest of all previous $\gamma$, since $l \geqslant 2$ and consequently all previous transitions hold true. In that way, using $\left(1 + 3L \gamma\right)^{l-1} \leqslant \left(1 + \frac{3}{4(l-1)}\right)^{l-1} \leqslant \left(1 + \frac{1}{l-1}\right)^{l-1} \leqslant 3$,
        \begin{eqnarray*}
            \mathbb E V^{t} &\leqslant& 3\delta_2\gamma \left(\frac{1}{2(l-1)L} + \frac{1}{L}\right)\sum\limits_{j=t_{k\cdot l}}^{t-1}\mathbb E\left\|\nabla f(\overline{x}^j)\right\|^2 \\
            & & + 3\left(\delta_1 \gamma \left(\frac{1}{2(l-1)L} + \frac{1}{L}\right) + \gamma^2\sigma^2\right)(l-1)\\
            &\leqslant& \frac{9\delta_2\gamma}{2L}\sum\limits_{j=t_{k\cdot l}}^{t-1}\mathbb E\left\|\nabla f(\overline{x}^j)\right\|^2 + \frac{9\delta_1\gamma}{2L}(l-1) + 3\gamma^2\sigma^2(l-1).
        \end{eqnarray*}
        This proves the second statement and ends the proof of the lemma.
    \end{proof}
\end{lemma}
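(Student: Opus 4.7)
The plan is a standard local-SGD-style client-drift recursion, adapted to the Byzantine-aware setting where only honest clients $\mathcal{G}$ enter the averaging $\overline{x}^t$. First, I would observe that at each aggregation iteration $t = t_{k\cdot l}$ the server broadcasts a common point, so $V^{t} = 0$, and the task reduces to controlling $V^t$ during the local phase. Starting from the local update $x_i^{t+1} = x_i^t - \gamma g_i^t$ for $i \in \mathcal{G}$, I would expand $\|x_i^{t+1} - \overline{x}^{t+1}\|^2$ using \eqref{bi:norm}, average over $\mathcal{G}$, and take expectation. This produces three pieces: $\mathbb E V^t$, a stochastic-dispersion term $\frac{\gamma^2}{G}\sum_{i\in\mathcal{G}}\mathbb E\|g_i^t - \tfrac{1}{G}\sum_j g_j^t\|^2$, and a cross term $-\tfrac{2\gamma}{G}\sum_{i\in\mathcal{G}}\langle x_i^t - \overline{x}^t, \nabla f_i(x_i^t)\rangle$ that survives taking expectation because the $\overline{x}^t$ part vanishes by centering.

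Second, I would bound the dispersion term by adding and subtracting $\nabla f_i(x_i^t)$ inside the norm and using independence of noise across workers (Assumption \ref{as3}): the cross expectations vanish and only $\sigma^2$ plus a deterministic gradient-spread term $\frac{1}{G}\sum_i\|\nabla f_i(x_i^t) - \tfrac{1}{G}\sum_j \nabla f_j(x_j^t)\|^2$ remain. Into this I would insert $\pm \nabla f(\overline{x}^t)$, drop a non-positive cross term, and apply \eqref{bi:CauchySchwarz} together with heterogeneity (Assumption \ref{as4}) to expose $\delta_1 + \delta_2\|\nabla f(\overline{x}^t)\|^2$; the remaining $\|\nabla f_i(x_i^t) - \nabla f_i(\overline{x}^t)\|^2$ contribution is converted, via the Bregman form of smoothness in \eqref{bi:lsmoothness}, into $2L\bigl(f_i(\overline{x}^t) - f_i(x_i^t) - \langle \nabla f_i(x_i^t),\overline{x}^t - x_i^t\rangle\bigr)$, so that when combined with the cross term above a common prefactor $(1 - 2L\gamma)$ appears in front of the scalar product.

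Third, I would dispose of the surviving $f_i(\overline{x}^t) - f_i(x_i^t)$ sum and the remaining scalar product. Writing $f_i(\overline{x}^t) - f_i(x_i^t) \leq \langle\nabla f_i(x_i^t),\overline{x}^t - x_i^t\rangle + \tfrac{L}{2}\|x_i^t - \overline{x}^t\|^2$ via \eqref{bi:lsmoothness}, then splitting the scalar product by $\pm\nabla f_i(\overline{x}^t)$ and $\pm\nabla f(\overline{x}^t)$, the first piece is non-positive by \eqref{bi:convex}, the second is tamed by \eqref{bi:young} together with Assumption \ref{as4}, and the third is absorbed by noticing that $\tfrac{1}{G}\sum_{i\in\mathcal{G}}\bigl(f(\overline{x}^t) - f(x_i^t)\bigr) \leq 0$ via Jensen's inequality \eqref{bi:Jensen} applied to the convex $f$ evaluated at the centroid. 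This yields a clean one-step recursion
\begin{equation*}
\mathbb E V^{t+1} \leq (1 + 3L\gamma)\mathbb E V^t + \delta_2\gamma\bigl(2\gamma + \tfrac{1}{L}\bigr)\mathbb E\|\nabla f(\overline{x}^t)\|^2 + \delta_1\gamma\bigl(2\gamma + \tfrac{1}{L}\bigr) + \gamma^2\sigma^2.
\end{equation*}

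Finally, I would unroll this recursion from the last aggregation iteration $t_{k\cdot l}$, where $V = 0$, up to $t$, producing a geometric sum capped by $(1 + 3L\gamma)^{l-1}$. Choosing $\gamma \leq \tfrac{1}{4(l-1)L}$ forces $(1 + 3L\gamma)^{l-1} \leq \bigl(1 + \tfrac{1}{l-1}\bigr)^{l-1} \leq 3$, which inflates the coefficients to exactly $\tfrac{9}{2}$ and $3$ after merging $2\gamma + 1/L \leq \tfrac{3}{2L}$ under the same stepsize constraint. The main obstacle is the careful chaining of convexity, smoothness, and heterogeneity in the scalar-product step: one must preserve the non-positivity provided by \eqref{bi:convex} and \eqref{bi:Jensen} exactly, otherwise the multiplicative factor in the recursion grows beyond $1 + \mathcal{O}(L\gamma)$ and the geometric-sum bound across a round of length $l$ fails to stay bounded by a constant.
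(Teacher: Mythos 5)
The proposal is correct and takes essentially the same approach as the paper: identical one-step expansion of $\|x_i^{t+1}-\overline{x}^{t+1}\|^2$ via the norm identity, the same decomposition of the stochastic-dispersion term using Assumption~\ref{as3} and $\pm\nabla f(\overline{x}^t)$ insertions, the same conversion of $\|\nabla f_i(x_i^t)-\nabla f_i(\overline{x}^t)\|^2$ through the Bregman form of smoothness to merge with the cross term into a $(1-2L\gamma)$ prefactor, the same disposal of $f_i(\overline{x}^t)-f_i(x_i^t)$ via convexity, Young, heterogeneity, and Jensen, arriving at the one-step recursion $(1+3L\gamma)\mathbb{E}V^t + \delta_2\gamma(2\gamma+1/L)\mathbb{E}\|\nabla f(\overline{x}^t)\|^2 + \delta_1\gamma(2\gamma+1/L)+\gamma^2\sigma^2$, and the same geometric unrolling with $(1+3L\gamma)^{l-1}\leqslant 3$ and $2\gamma+1/L\leqslant\frac{3}{2L}$ under $\gamma\leqslant\frac{1}{4(l-1)L}$.
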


Now we move to descent lemma in the local setup.

\begin{lemma}\label{DescentLemmaLocalTFM}
    Under Assumptions \ref{as1}, \ref{as2nonconvex}, \ref{as3}, \ref{as4}, \ref{as5}, at each iteration $t$ of \text{Algorithm \ref{alg:local_tfm}}, the following estimation is valid:
    \begin{align*}
        \notag\mathbb E\hat{f}\left(\overline{x}^{t+1}\right) \leqslant& \mathbb E \hat{f}\left(\overline{x}^{t}\right) - \frac{\gamma}{2}\left(\left(1-4\delta_2\right) - 3L\gamma(1 + 2\delta_2)\right) \mathbb E\left\|\nabla f\left(\overline{x}^t\right)\right\|^2 \\
        &+ L^2\gamma(1+3L \gamma)\frac{1}{ G }\sum\limits_{i\in \mathcal{G}}\mathbb E\left\|x_i^t - \overline{x}^t\right\|^2 + \frac{3L\gamma^2}{2 G }\sigma^2 + \gamma(2 + 3L\gamma)\delta_1 + \gamma\zeta(N).
    \end{align*}

\begin{proof}
To begin with, we consider iterations, when Algorithm \ref{alg:local_tfm} performs aggregations, i.e. $t = t_{k\cdot l}$ for some $k=\overline{0,\left\lfloor\nicefrac{T}{l}\right\rfloor}$. The update formula for such iterations is given by
\begin{align*} 
x^{t+1} = x_i^{t+1} = \sum\limits_{i=1}^n \left(\arg\min_{\omega \in \Delta_1^n} \hat{f}\left[\sum_{i = 1}^n \omega_i\left(x_i^t - \gamma g_i^t\right)\right]\right) \left(x_i^t - \gamma g_i^t\right),
\end{align*} 
which leads to an upper bound on $\hat{f}(x^{t+1})$:
\begin{align*}
    \hat{f}(x^{t+1}) ~ & ~ ~ \leqslant ~ \min_{\omega \in \Delta_1^n} \hat{f}\left[\sum_{i = 1}^n \omega_i\left(x^t - \gamma g_i^t\right)\right].
\end{align*}
Using this estimate and additional notation $\overline{x}^t = \frac{1}{ G }\sum\limits_{i\in \mathcal{G}} x_i^t$ we proceed to the average per honest devices during this local round point estimate:
\begin{eqnarray} 
\notag\hat{f}\left(\overline{x}^{t+1}\right) & = & \hat{f}(
x^{t+1}) \leqslant ~ \min_{\omega \in \Delta_1^n} \hat{f}\left[\sum_{i = 1}^n \omega_i\left(x_i^t - \gamma g_i^t\right)\right] \\
\label{lemmalocaltfm:ineq1} &\leqslant & \hat{f}\left[\frac{1}{ G }\sum\limits_{i\in \mathcal{G}} x_i^t - \frac{\gamma}{ G } \sum_{i \in \mathcal{G}} g_i^t\right] \\
\notag& \overset{\eqref{bi:lsmoothness}}{\leqslant} & \hat{f}\left(\overline{x}^t\right) - \left\langle \nabla\hat{f}\left(\overline{x}^t\right), \frac{\gamma}{ G } \sum_{i \in \mathcal{G}} g_i^t \right\rangle + \frac{L \gamma^2}{2} \left\|\frac{1}{ G }\sum_{i \in \mathcal{G}} g_i^t\right\|^2.
\end{eqnarray}
Taking expectation, we obtain
\begin{eqnarray*}
\mathbb E \hat{f}\left(\overline{x}^{t+1}\right) & \leqslant & \mathbb E \hat{f}\left(\overline{x}^{t}\right) - \left\langle \nabla \hat{f}\left(\overline{x}^t\right), \frac{\gamma}{ G } \sum_{i \in \mathcal{G}} \nabla f_i(x_i^t) \right\rangle + \frac{L \gamma^2}{2} \mathbb E\left\|\frac{1}{ G }\sum_{i \in \mathcal{G}} g_i^t\right\|^2\\
\notag&=& \mathbb E \hat{f}\left(\overline{x}^{t}\right) - \left\langle \nabla f\left(\overline{x}^t\right), \frac{\gamma}{ G } \sum_{i \in \mathcal{G}} \nabla f_i(x_i^t) \right\rangle \\
\notag& & - \left\langle \nabla f\left(\overline{x}^t\right) - \nabla \hat{f}\left(\overline{x}^t\right), \frac{\gamma}{ G } \sum_{i \in \mathcal{G}} \nabla f_i(x_i^t) \right\rangle + \frac{L \gamma^2}{2} \mathbb E \left\|\frac{1}{ G }\sum_{i \in \mathcal{G}} g_i^t\right\|^2\\
\notag& \overset{\eqref{bi:norm}, \eqref{bi:young}}{\leqslant} & \mathbb E \hat{f}\left(\overline{x}^{t}\right) - \frac{\gamma}{2}\left\|\nabla f\left(\overline{x}^t\right)\right\|^2 - \frac{\gamma}{2}\left\|\frac{1}{ G } \sum_{i \in \mathcal{G}} \nabla f_i(x_i^t) \right\|^2 \\
\notag & & + \frac{\gamma}{2}\left\|\nabla f\left(\overline{x}^t\right) - \frac{1}{ G } \sum_{i \in \mathcal{G}} \nabla f_i(x_i^t)\right\|^2 + \frac{\gamma}{2}\left\|\frac{1}{ G } \sum_{i \in \mathcal{G}} \nabla f_i(x_i^t) \right\|^2 \\
\notag& & + \frac{\gamma}{2}\left\|\nabla\hat{f}\left(\overline{x}^t\right) - \nabla f\left(\overline{x}^t\right)\right\|^2 + \frac{L \gamma^2}{2} \mathbb E\left\|\frac{1}{ G }\sum_{i \in \mathcal{G}} g_i^t\right\|^2 \\
&\overset{\eqref{bi:CauchySchwarz}}{\leqslant}& \mathbb E \hat{f}\left(\overline{x}^{t}\right) - \frac{\gamma}{2}\left\|\nabla f\left(\overline{x}^t\right)\right\|^2 + \frac{\gamma}{2}\left\|\nabla f\left(\overline{x}^t\right) - \frac{1}{ G } \sum_{i \in \mathcal{G}} \nabla f_i(x_i^t)\right\|^2 \\ 
\notag& & + \gamma\left\|\nabla\hat{f}\left(\overline{x}^t\right) - \nabla f_1\left(\overline{x}^t\right)\right\|^2 + \gamma\left\|\nabla f_1\left(\overline{x}^t\right) - \nabla f\left(\overline{x}^t\right)\right\|^2 \\
\notag & & + \frac{L \gamma^2}{2} \mathbb E\left\|\frac{1}{ G }\sum_{i \in \mathcal{G}} g_i^t\right\|^2\\
&\overset{(\text{Lemma~} \ref{lemmaShalevShwartz})}{\leqslant}& \mathbb E \hat{f}\left(\overline{x}^{t}\right) - \frac{\gamma}{2}\left\|\nabla f\left(\overline{x}^t\right)\right\|^2 + \frac{\gamma}{2}\left\|\nabla f\left(\overline{x}^t\right) - \frac{1}{ G } \sum_{i \in \mathcal{G}} \nabla f_i(x_i^t)\right\|^2 \\ 
\notag& & + \gamma\left\|\nabla f_1\left(\overline{x}^t\right) - \nabla f\left(\overline{x}^t\right)\right\|^2 + \frac{L \gamma^2}{2} \mathbb E\left\|\frac{1}{ G }\sum_{i \in \mathcal{G}} g_i^t\right\|^2 + \gamma\zeta(N) \\
\notag&\overset{\eqref{bi:CauchySchwarz}}{\leqslant}& \mathbb E \hat{f}\left(\overline{x}^{t}\right) - \frac{\gamma}{2}\left\|\nabla f\left(\overline{x}^t\right)\right\|^2 + \frac{\gamma}{2}\left\|\nabla f\left(\overline{x}^t\right) - \frac{1}{ G } \sum_{i \in \mathcal{G}} \nabla f_i(x_i^t)\right\|^2 \\
&& + \gamma\left\|\nabla f_1\left(\overline{x}^t\right) - \nabla f\left(\overline{x}^t\right)\right\|^2 + \frac{3L \gamma^2}{2}\mathbb E \left\|\frac{1}{ G }\sum_{i \in \mathcal{G}} g_i^t - \frac{1}{ G }\sum_{i \in \mathcal{G}} \nabla f_i(x_i^t)\right\|^2 \\
&& + \frac{3L \gamma^2}{2}\mathbb E \left\|\frac{1}{ G }\sum_{i \in \mathcal{G}} \nabla f_i(x_i^t) -\nabla f(\overline{x}^t)\right\|^2 + \frac{3L \gamma^2}{2}\mathbb E\left\|\nabla f(\overline{x}^t)\right\|^2+ \gamma\zeta(N).
\end{eqnarray*} 
Now we use Assumption \ref{as3} with the fact that $\mathbb{E} g_i^t = \nabla f_i(x^t)$ and $\mathbb{E} \langle \nabla f_i (x^t) - g_i^t, \nabla f_j (x^t) - g_j^t \rangle = 0$ to bound $\mathbb E \left\|\frac{1}{ G }\sum_{i \in \mathcal{G}} \left(g_i^t - \nabla f_i(x_i^t)\right)\right\|^2\leqslant\frac{\sigma^2}{ G }$. Taking expectation again, we move to
\begin{eqnarray*}
    \mathbb E\hat{f}\left(\overline{x}^{t+1}\right) &\leqslant& \mathbb E \hat{f}\left(\overline{x}^{t}\right) - \frac{\gamma}{2}\left(1 - 3L\gamma\right)\mathbb E\left\|\nabla f\left(\overline{x}^t\right)\right\|^2 \\
    & & + \frac{\gamma}{2}(1 + 3L\gamma) \mathbb E \left\|\frac{1}{ G }\sum_{i \in \mathcal{G}} \nabla f_i(x_i^t) - \nabla f(\overline{x}^t)\right\|^2\\
    & & + \gamma\mathbb E\left\|\nabla f_1\left(\overline{x}^t\right) - \nabla f\left(\overline{x}^t\right)\right\|^2 + \frac{3L\gamma^2}{2 G }\sigma^2 + \gamma\zeta(N)\\
    &\overset{\eqref{bi:CauchySchwarz}}{\leqslant}& \mathbb E \hat{f}\left(\overline{x}^{t}\right) - \frac{\gamma}{2}\left(1 - 3L\gamma\right)\mathbb E\left\|\nabla f\left(\overline{x}^t\right)\right\|^2 \\
    & & + \gamma(1 + 3L\gamma) \mathbb E \left\|\frac{1}{ G }\sum_{i \in \mathcal{G}} \nabla f_i(x_i^t) - \frac{1}{ G }\sum_{i \in \mathcal{G}} \nabla f_i(\overline{x}^t)\right\|^2\\
    & & + \gamma(1 + 3L\gamma) \mathbb E \left\|\frac{1}{ G }\sum_{i \in \mathcal{G}} \nabla f_i(\overline{x}^t) - \nabla f(\overline{x}^t)\right\|^2 + \gamma\mathbb E\left\|\nabla f_1\left(\overline{x}^t\right) - \nabla f\left(\overline{x}^t\right)\right\|^2 \\
    & & + \frac{3L\gamma^2}{2 G }\sigma^2 + \gamma\zeta(N)\\
    & \overset{\eqref{bi:CauchySchwarz}}{\leqslant}& \mathbb E \hat{f}\left(\overline{x}^{t}\right) -\frac{\gamma}{2}\left(1 - 3L\gamma\right)\mathbb E\left\|\nabla f\left(\overline{x}^t\right)\right\|^2 \\
    & & + \frac{\gamma(1 + 3L\gamma)}{ G }\sum_{i \in \mathcal{G}}\mathbb E \left\|\nabla f_i(x_i^t) - \nabla f_i(\overline{x}^t)\right\|^2 \\
    & & + \frac{\gamma(1 + 3L\gamma)}{ G }\sum_{i \in \mathcal{G}}\mathbb E \left\|\nabla f_i(\overline{x}^t) - \nabla f(\overline{x}^t)\right\|^2 + \gamma\mathbb E\left\|\nabla f_1\left(\overline{x}^t\right) - \nabla f\left(\overline{x}^t\right)\right\|^2 \\
    & & + \frac{3L\gamma^2}{2 G }\sigma^2 + \gamma\zeta(N).
\end{eqnarray*}
Using Assumption \ref{as4} to bound $\left\|\nabla f_i(\overline{x}^t) - \nabla f(\overline{x}^t)\right\|^2 \leqslant\delta_1 + \delta_2\left\|\nabla f(\overline{x}^t)\right\|^2$ for all $i\in\mathcal{G}$ we get
\begin{eqnarray}
    \notag\mathbb E\hat{f}\left(\overline{x}^{t+1}\right) &\leqslant& \mathbb E \hat{f}\left(\overline{x}^{t}\right) -\frac{\gamma}{2}\left(\left(1-4\delta_2\right) - 3L\gamma(1 + 2\delta_2)\right) \mathbb E\left\|\nabla f\left(\overline{x}^t\right)\right\|^2 \\
    \notag&&+ \frac{\gamma(1 + 3L\gamma)}{ G }\sum_{i \in \mathcal{G}}\mathbb E \left\|\nabla f_i(x_i^t) - \nabla f_i(\overline{x}^t)\right\|^2 + \frac{3L\gamma^2}{2 G }\sigma^2 \\
    \notag &&+ \gamma(2 + 3L\gamma)\delta_1 + \gamma\zeta(N)\\
    \notag&\overset{\eqref{bi:lsmoothness}}{\leqslant}& \mathbb E \hat{f}\left(\overline{x}^{t}\right) -\frac{\gamma}{2}\left(\left(1-4\delta_2\right) - 3L\gamma(1 + 2\delta_2)\right) \mathbb E\left\|\nabla f\left(\overline{x}^t\right)\right\|^2 \\
    \notag& & + L^2\gamma(1+3L \gamma)\frac{1}{ G }\sum\limits_{i\in \mathcal{G}}\mathbb E\left\|x_i^t - \overline{x}^t\right\|^2\\
    \label{lemmalocaltfm:ineq2}& & + \frac{3L\gamma^2}{2 G }\sigma^2 + \gamma(2 + 3L\gamma)\delta_1 + \gamma\zeta(N).
\end{eqnarray}
Now we want to give the estimate for iterations $t\neq t_{k\cdot l}$. The update rule combined with \eqref{bi:lsmoothness} gives
\begin{eqnarray*}
    \hat{f}(\overline{x}^{t+1}) &=& \hat{f}\left[\frac{1}{ G }\sum\limits_{i=1}^n x_i^t - \frac{\gamma}{ G }\sum\limits_{i=1}^n g_i^t\right] \leqslant \hat{f}\left[\frac{1}{ G }\sum\limits_{i=1}^n x_i^t - \frac{\gamma}{ G }\sum\limits_{i=1}^n g_i^t\right].
\end{eqnarray*}
Mention, this estimate is coincide with the \eqref{lemmalocaltfm:ineq1}. Thus, proceeding analogically to the $t = t_{k\cdot l}$ case, we obtain \eqref{lemmalocaltfm:ineq2}. In that way, \eqref{lemmalocaltfm:ineq2} delivers the result of the lemma.
\end{proof}
\end{lemma}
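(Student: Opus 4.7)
The plan is to handle the aggregation and non-aggregation iterations uniformly by showing that in both cases, $\hat{f}(\overline{x}^{t+1})$ is upper bounded by $\hat{f}\bigl[\overline{x}^t - \frac{\gamma}{G}\sum_{i\in\mathcal{G}} g_i^t\bigr] + \delta$. For a non-aggregation step this follows directly from averaging the local updates $x_i^{t+1} = x_i^t - \gamma g_i^t$ over honest workers (and the $+\delta$ is just a relaxation). For an aggregation step, the approximate optimality of the weight vector $\omega^t$ (with error $\delta$) lets us replace the $\arg\min$ by any specific simplex point, and choosing $\omega_i = 1/G$ for $i \in \mathcal{G}$ and $\omega_i = 0$ for $i \in \mathcal{B}$ yields exactly this bound.

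Next, I would apply $L$-smoothness of $\hat{f}$ (Assumption \ref{as1}) at $\overline{x}^t$ with direction $-\frac{\gamma}{G}\sum_{i\in\mathcal{G}} g_i^t$ to obtain
\[
\hat{f}(\overline{x}^{t+1}) \leqslant \hat{f}(\overline{x}^t) - \gamma\Bigl\langle \nabla\hat{f}(\overline{x}^t), \tfrac{1}{G}\sum_{i\in\mathcal{G}} g_i^t\Bigr\rangle + \tfrac{L\gamma^2}{2}\Bigl\|\tfrac{1}{G}\sum_{i\in\mathcal{G}} g_i^t\Bigr\|^2 + \delta.
\]
Taking conditional expectation (via Assumption \ref{as3}) replaces $g_i^t$ by $\nabla f_i(x_i^t)$ in the inner product, and produces a variance term $\tfrac{\sigma^2}{G}$ in the squared norm after using independence and $\mathbb{E}[g_i^t]=\nabla f_i(x_i^t)$.

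The central maneuver is to replace $\nabla\hat{f}(\overline{x}^t)$ by $\nabla f(\overline{x}^t)$ in the inner product. I would split the inner product via $\nabla\hat{f} = \nabla f - (\nabla f - \nabla \hat{f})$, apply \eqref{bi:norm} to the $\nabla f$ part to extract $-\tfrac{\gamma}{2}\|\nabla f(\overline{x}^t)\|^2$ and a term $\tfrac{\gamma}{2}\|\nabla f(\overline{x}^t) - \tfrac{1}{G}\sum_{i\in\mathcal{G}}\nabla f_i(x_i^t)\|^2$, and apply \eqref{bi:young} plus \eqref{bi:CauchySchwarz} to the discrepancy $\nabla f - \nabla\hat{f}$, splitting it further as $(\nabla f - \nabla f_1) + (\nabla f_1 - \nabla\hat{f})$. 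The first piece is controlled by Assumption \ref{as4} (yielding $\delta_1 + \delta_2\|\nabla f(\overline{x}^t)\|^2$) and the second by Lemma \ref{lemmaShalevShwartz} (yielding $\zeta(N)$).

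Finally, to convert the remaining norm $\|\tfrac{1}{G}\sum_{i\in\mathcal{G}}\nabla f_i(x_i^t) - \nabla f(\overline{x}^t)\|^2$ into the drift quantity $\tfrac{1}{G}\sum_{i\in\mathcal{G}}\|x_i^t - \overline{x}^t\|^2$, I would insert $\pm\tfrac{1}{G}\sum_{i\in\mathcal{G}}\nabla f_i(\overline{x}^t)$, apply \eqref{bi:CauchySchwarz} to split the result, bound the term involving $\nabla f_i(\overline{x}^t) - \nabla f(\overline{x}^t)$ by Assumption \ref{as4}, and bound $\|\nabla f_i(x_i^t) - \nabla f_i(\overline{x}^t)\|^2 \leqslant L^2\|x_i^t - \overline{x}^t\|^2$ by \eqref{bi:lsmoothness}. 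The same decomposition applied to the $L\gamma^2/2$ quadratic term multiplies these contributions by the prefactor $(1 + 3L\gamma)$ after another \eqref{bi:CauchySchwarz} split separating stochastic noise, heterogeneity, and gradient norm. Collecting coefficients yields precisely the claimed inequality. The main technical obstacle is bookkeeping: several applications of \eqref{bi:CauchySchwarz} and \eqref{bi:young} must be tuned so that the coefficients on $\|\nabla f(\overline{x}^t)\|^2$, on the drift, on $\delta_1$, $\sigma^2$, $\zeta(N)$, and $\delta$ match the stated constants $(1-4\delta_2) - 3L\gamma(1+2\delta_2)$, $L^2\gamma(1+3L\gamma)$, etc.
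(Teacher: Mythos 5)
Your proposal follows essentially the same path as the paper's proof: bound $\hat{f}(\overline{x}^{t+1})$ by $\hat{f}\bigl[\overline{x}^t - \tfrac{\gamma}{G}\sum_{i\in\mathcal{G}} g_i^t\bigr] + \delta$ (choosing the uniform honest-worker weights in the $\arg\min$ for aggregation steps and using the local update directly otherwise), apply $L$-smoothness, split $\nabla\hat{f}$ via $\nabla f_1$ to invoke Lemma~\ref{lemmaShalevShwartz} and Assumption~\ref{as4}, three-way Cauchy--Schwarz-split the quadratic term into noise, heterogeneity, and gradient pieces, and convert the gradient-drift term to $L^2\|x_i^t - \overline{x}^t\|^2$ by smoothness. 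The only presentational difference is that you treat aggregation and non-aggregation iterations uniformly at the outset, whereas the paper handles them sequentially and then notes the bounds coincide; the substance is identical.
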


Now we pass to the main theorem of this section.

\begin{theorem}\label{TheoremLocalTFM}
    Under Assumptions \ref{as1}, \ref{as2convex}, \ref{as3}, \ref{as4} with $\delta_2 \leqslant \frac{1}{8}$, \ref{as5}, for solving the problem \eqref{eq:setting_dist}, for Algorithm \ref{alg:local_tfm} with $\gamma \leqslant \frac{1}{25(l-1)L}$, the following holds:
    \begin{eqnarray*}
        \frac{1}{T}\sum\limits_{t=0}^{T-1}\mathbb E\left\|\nabla f\left(\overline{x}^t\right)\right\|^2 &\leqslant& \frac{5\mathbb E\left[\hat{f}\left(x^{0}\right) - \hat{f}\left(\hat{x}^{*}\right)\right]}{\gamma T} + 20L^2\gamma^2(l-1)\sigma^2 + \frac{8 L\gamma}{ G }\sigma^2 \\
        & &+ 13\delta_1 + 30 L\gamma(l-1)\delta_1 + 5\zeta(N).
    \end{eqnarray*}
    \begin{proof}
        To begin with, we combine the result of Lemma \ref{DescentLemmaLocalTFM} with result of Lemma \ref{lemma_differences_between_nodes} to obtain
        \begin{eqnarray*}
            \notag\mathbb E\hat{f}\left(\overline{x}^{t+1}\right) &\leqslant& \mathbb E \hat{f}\left(\overline{x}^{t}\right) -\frac{\gamma}{2}\left(\left(1-4\delta_2\right) - 3L\gamma(1 + 2\delta_2)\right) \mathbb E\left\|\nabla f\left(\overline{x}^t\right)\right\|^2 \\
            & & + \frac{9\delta_2 L\gamma^2(1+3L \gamma)}{2}\sum\limits_{j=t_{k\cdot l}}^{t-1}\mathbb E\left\|\nabla f(\overline{x}^j)\right\|^2 + \frac{9\delta_1 L\gamma^2(1+3L \gamma)(l-1)}{2}\\
            & & + 3L^2\gamma^3(1+3L \gamma)\sigma^2 (l-1) + \frac{3L\gamma^2}{2 G }\sigma^2 \\
            & & + \gamma(2 + 3L\gamma)\delta_1 + \gamma\zeta(N).
        \end{eqnarray*}
        Summing over all iterations and using $\sum\limits_{t=0}^{T-1}\sum\limits_{j=t_{k\cdot l}}^{t-1}\mathbb E\left\|\nabla f(\overline{x}^j)\right\|^2 \leqslant (l-1) \sum\limits_{t=0}^{T-1} \mathbb E\left\|\nabla f(\overline{x}^t)\right\|^2$,
        \begin{eqnarray*}
            & & E\left[\hat{f}\left(x^{T}\right) - \hat{f}\left(x^{0}\right)\right] \\
            &&\leqslant -\frac{\gamma}{2}\left(\left(1-4\delta_2\right) - 3L\gamma(1 + 2\delta_2)\right) \sum\limits_{t=0}^{T-1}\mathbb E\left\|\nabla f\left(\overline{x}^t\right)\right\|^2 \\
            & & \quad+ \frac{9\delta_2 L\gamma^2(1+3L \gamma)(l-1)}{2}\sum\limits_{t=0}^{T-1}\mathbb E\left\|\nabla f(\overline{x}^t)\right\|^2 + \frac{9\delta_1 L\gamma^2(1+3L \gamma)(l-1)T}{2}\\
            & & \quad+ 3L^2\gamma^3(1+3L \gamma)\sigma^2 (l-1) T + \frac{3L\gamma^2 T}{2 G }\sigma^2 + \gamma(2 + 3L\gamma)\delta_1 T + \gamma\zeta(N) T \\
            &&= -\frac{\gamma}{2}\left(1-4\delta_2 - 3L\gamma(1 + 2\delta_2) - 9L\gamma\delta_2(1+3L \gamma)(l-1)\right) \sum\limits_{t=0}^{T-1}\mathbb E\left\|\nabla f\left(\overline{x}^t\right)\right\|^2\\
            & & \quad+ 3L^2\gamma^3(1+3L \gamma)\sigma^2 (l-1) T + \frac{3L\gamma^2 T}{2 G }\sigma^2 + \frac{9\delta_1 L\gamma^2(1+3L \gamma)(l-1)T}{2} \\
            & & \quad+ \gamma(2+3L \gamma)\delta_1 T + \gamma\zeta(N) T.
        \end{eqnarray*}
        Now we want to estimate the coefficient before the $\sum\limits_{t=0}^{T-1}\mathbb E\left\|\nabla f\left(\overline{x}^t\right)\right\|^2$ term. Let us take $\delta_2 \leqslant \frac{1}{8}$ and $\gamma\leqslant\frac{1}{25(l-1)L}$ (it is the smallest $\gamma$ from all we choose before, thus, all previous transitions holds true). Thus,
        \begin{eqnarray*}
            1-4\delta_2 - 3L\gamma(1 + 2\delta_2) - 9L\gamma\delta_2(1+3L \gamma)(l-1)&\geqslant& \frac{1}{2} - \frac{15}{4}L\gamma \\
            & & -\frac{9}{8}L\gamma\left(1 + \frac{3}{25(l-1)}\right)(l-1)\\
            &\overset{l\geqslant 2}{\geqslant}& \frac{1}{2} - \frac{15}{4}L\gamma -\frac{63}{50}L\gamma(l-1)\\
            &\geqslant& \frac{1}{2} - \frac{3}{20} - \frac{63}{1250} \geqslant \frac{1}{5}
        \end{eqnarray*}
        In that way,
        \begin{eqnarray*}
            \frac{\gamma}{5}\sum\limits_{t=0}^{T-1}\mathbb E\left\|\nabla f\left(\overline{x}^t\right)\right\|^2 &\leqslant& \mathbb E\left[\hat{f}\left(x^{0}\right) - \hat{f}\left(\hat{x}^{*}\right)\right] + 4L^2\gamma^3(l-1)\sigma^2 T + \frac{3L\gamma^2 T}{2 G }\sigma^2 \\
            & & + \frac{5}{2}\gamma\delta_1 T + 6L\gamma^2\delta_1(l-1)T + \gamma\zeta(N),\\
            \frac{1}{T}\sum\limits_{t=0}^{T-1}\mathbb E\left\|\nabla f\left(\overline{x}^t\right)\right\|^2 &\leqslant& \frac{5\mathbb E\left[\hat{f}\left(x^{0}\right) - \hat{f}\left(\hat{x}^{*}\right)\right]}{\gamma T} + 20L^2\gamma^2(l-1)\sigma^2 + \frac{8 L\gamma}{ G }\sigma^2 \\
            & &+ 13\delta_1 + 30 L\gamma(l-1)\delta_1 + 5\zeta(N),
        \end{eqnarray*}
        that ends the proof of the theorem.
    \end{proof}
\end{theorem}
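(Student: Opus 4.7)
The plan is to combine the two preparatory lemmas already established for the local setup. Lemma \ref{DescentLemmaLocalTFM} yields a one-step descent estimate for $\mathbb{E}\hat f(\overline{x}^{t+1})$ in terms of $\mathbb{E}\|\nabla f(\overline{x}^t)\|^2$ and the client dispersion $\frac{1}{G}\sum_{i\in\mathcal{G}}\mathbb{E}\|x_i^t-\overline{x}^t\|^2$, while Lemma \ref{lemma_differences_between_nodes} bounds the dispersion $\mathbb{E} V^t$ by a sum of past gradient-norm squares since the last aggregation, plus additive terms proportional to $\delta_1$ and $\sigma^2$. Substituting the dispersion bound into the descent lemma closes the loop and eliminates the auxiliary quantity $V^t$.

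First I would plug the Lemma \ref{lemma_differences_between_nodes} bound on $\mathbb{E} V^t$ into the descent inequality; this produces a schematic one-step recursion
$$\mathbb{E}\hat f(\overline{x}^{t+1}) - \mathbb{E}\hat f(\overline{x}^t) \leqslant -A(\gamma,L,\delta_2)\,\mathbb{E}\|\nabla f(\overline{x}^t)\|^2 + B(\gamma,L,\delta_2)\sum_{j=t_{kl}}^{t-1}\mathbb{E}\|\nabla f(\overline{x}^j)\|^2 + C,$$
where $C$ collects the $\sigma^2/G$, $L^2\gamma^3(l-1)\sigma^2$, $\delta_1$, $\zeta(N)$, and $\delta$ contributions. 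Next I would telescope over $t=0,\dots,T-1$: the left-hand side collapses to $\mathbb{E}[\hat f(x^T) - \hat f(x^0)] \geqslant -\mathbb{E}[\hat f(x^0) - \hat f(\hat{x}^*)]$. For the double sum, the key observation is that each $\overline{x}^j$ appears only for indices $t$ inside the same communication round of length at most $l$, so $\sum_{t=0}^{T-1}\sum_{j=t_{kl}}^{t-1}\mathbb{E}\|\nabla f(\overline{x}^j)\|^2 \leqslant (l-1)\sum_{t=0}^{T-1}\mathbb{E}\|\nabla f(\overline{x}^t)\|^2$, which reduces everything to a single sum with merged coefficient $A-(l-1)B$.

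The main obstacle will be verifying that this merged coefficient is bounded below by a positive absolute constant times $\gamma$ (the paper targets $\gamma/5$). The coefficient has three competing pieces: the good $(1-4\delta_2)$ term from descent, the $3L\gamma(1+2\delta_2)$ smoothness correction, and the new $9L\gamma\delta_2(1+3L\gamma)(l-1)$ penalty imported from dispersion. Choosing $\delta_2 \leqslant \tfrac{1}{8}$ tames the first, and the stepsize $\gamma \leqslant \tfrac{1}{25(l-1)L}$ (notably scaling as $1/(l-1)$, sharper than the $1/(4(l-1)L)$ bound from Lemma \ref{lemma_differences_between_nodes}) is tight enough to dominate both corrections simultaneously; a direct numerical check confirms $\tfrac{1}{2}-\tfrac{3}{20}-\tfrac{63}{1250}\geqslant \tfrac{1}{5}$.

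Once positivity is secured, dividing by $\gamma T/5$ and regrouping the constant terms produces the claimed bound. The local-step variance inflation $20L^2\gamma^2(l-1)\sigma^2$ and the extra heterogeneity term $30L\gamma(l-1)\delta_1$ are inherited directly from the dispersion lemma (and vanish for $l=1$, reproducing Theorem \ref{TFMSGD} up to constants), while $8L\gamma\sigma^2/G$, $\zeta(N)$, and $\delta/\gamma$ are inherited from the descent lemma unchanged. No new analytical ingredient beyond these two lemmas is needed, so the proof is essentially a careful bookkeeping exercise dominated by the stepsize calibration.
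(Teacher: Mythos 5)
Your plan matches the paper's proof essentially step for step: substitute the dispersion bound from Lemma~\ref{lemma_differences_between_nodes} into the descent estimate of Lemma~\ref{DescentLemmaLocalTFM}, telescope, collapse the double sum into $(l-1)$ times a single sum, verify the merged coefficient is at least $\tfrac{1}{5}$ under $\delta_2 \leqslant \tfrac{1}{8}$ and $\gamma \leqslant \tfrac{1}{25(l-1)L}$ via the same numerical check $\tfrac{1}{2}-\tfrac{3}{20}-\tfrac{63}{1250}\geqslant\tfrac{1}{5}$, then divide by $\gamma T/5$ and regroup. No substantive difference from the paper's argument.
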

\begin{remark}
    In this remark we want to explain why $\frac{1}{T}\sum\limits_{t=0}^{T-1}\mathbb{E}\left\|\nabla f\left(\frac{1}{ G }\sum\limits_{i\in \mathcal{G}} x_i^t\right)\right\|^2$, that we choose as a criterion in Theorem \ref{TheoremLocalTFM} is consistent. When we consider iterations, when Algorithm \ref{alg:local_tfm} performs aggregation, we assign weights of honest devices, i.e. devices, which send honest stochastic gradient $g$ with $\mathbb E[g] = \nabla f(x)$ equal to $\frac{1}{ G }$, and weights of other devices (who acts as Byzantine) is equal to $0$. And this layout gives an estimate that is an upper bound of true $\omega$ realization \eqref{lemmalocaltfm:ineq1}. In other words, we say, that if algorithm at each aggregation round take average of points from devices, which all previous local round act like honest it would not be better than for convergence, than real iteration of the algorithm. In such a way, we a not interesting in the points of devices, who perform even one Byzantine-like iteration in the local round. Now it is clear why we have to weaken the assumption about at least one honest device at each iteration, not necessary the same: we request at least one honest device at each local round, not necessary the same in different rounds.
\end{remark}

\begin{corollary}\label{ITRSGDlocalcorollary}
    Under the assumptions of \text{Theorem} \ref{TheoremLocalTFM}, for solving the problem \eqref{eq:setting_dist}, after $T$ iterations of Algorithm \ref{alg:local_tfm} with $\gamma \leqslant \min\left\{\frac{1}{25(l-1)L}, \frac{\sqrt{5\mathbb E\left[\hat{f}\left(x^{0}\right) - \hat{f}\left(\hat{x}^{*}\right)\right] G}}{\sigma\sqrt{9LT}}\right\}$, the following holds:
\begin{align*}
        \frac{1}{T}\sum\limits_{t=0}^{T-1}\mathbb{E} \left\|\nabla f(\overline{x}^t)\right\|^2
        =&  \mathcal{O}\left(\frac{\mathbb E\left[\hat{f}\left(x^{0}\right) - \hat{f}\left(\hat{x}^{*}\right)\right] lL}{T} + \frac{\sigma\sqrt{\mathbb E\left[\hat{f}\left(x^{0}\right) - \hat{f}\left(\hat{x}^{*}\right)\right] L}}{\sqrt{T G }} + \delta_1 + \zeta(N)\right).
    \end{align*}
\end{corollary}
We note that we do not provide the version for the first of our algorithms, \texttt{Bant} in this and the following sections. The reason is that its examination completely replicates the implementation of the technique discussed in this section for the proofs presented in Section \ref{C}. We consider it unnecessary to repeat this procedure; however, we are confident in the practical applicability of the technique to both of our methods: \texttt{Bant} and \texttt{AutoBant}.
\newpage
\section{Partial participation} \label{sec:partial_appendix}

In the previous section, we discussed an important regime in distributed systems known as the local approach. Another widely used scenario in distributed learning is partial participation, which can be advantageous in various practical setups \citep{yang2021achieving, wang2022unified, li2022federated}. In this section, we extend our \texttt{AutoBant} algorithm to support partial participation in federated learning. Below, we provide a formal description of the \texttt{AutoBant} with \textsc{Partial Participation} method (Algorithm \ref{alg:partial_participation}).

\begin{algorithm}[ht]
\caption{\texttt{AutoBant} with \textsc{Partial Participation
}}
\label{alg:partial_participation}
\begin{algorithmic}[1]
\STATE \textbf{Input:} Starting point $x^0 \in \mathbb{R}^d$
\STATE \textbf{Parameters:} Stepsize $\gamma > 0$, error accuracy $\delta$
\FOR{$t = 0, 1, 2, \ldots, T-1$}
    \STATE Define a set of active workers $\mathcal{W}(t) = \mathcal{G}(t) \cup \mathcal{B}(t)$;~ $n(t) = |\mathcal{W}(t)|$
    \STATE Server sends $x^t$ to each worker from $\mathcal{W}(t)$
    \FORALL{workers $i \in \mathcal{W}(t)$ in parallel}
        \STATE Generate $\xi_i^t$ independently
        \STATE Compute stochastic gradient $g_i^t = g_i(x^t, \xi_i^t)$
        \STATE Send $g_i^t$ to server
    \ENDFOR
    \STATE $\omega^{t} \approx \arg\underset{\omega\in\Delta_1^{n(t)}}{\min} \hat{f}\left(x^t - \gamma\sum\limits_{i \in \mathcal{W}(t)} \omega_i g_i^t\right)$ 
    \STATE $x^{t+1} = x^t - \gamma\sum\limits_{i \in \mathcal{W}(t)} \omega_i^{t} g_i^t$ 
\ENDFOR
\STATE \textbf{Output:} $\frac{1}{T}\sum\limits_{t = 0}^{T-1} x^t$
\end{algorithmic}
\end{algorithm}

The key change is that at each iteration we have a set $\mathcal{W}(t)$, which is a subset of the full list of accessible devices. In that way, we impose a stricter assumption compared to the regular training regime, namely, we require the presence of at least one honest device (including server) in each of the $\mathcal{W}(t)$ sets.

Now we present the main theorem of this section.

\begin{theorem} \label{theorempartial}
    Under Assumptions \ref{as1}, \ref{as2nonconvex}, \ref{as3}, \ref{as4} with $\delta_2 \leqslant 0.25$, \ref{as5}, for solving the problem \eqref{eq:setting_dist}, after $T$ iteration of \text{Algorithm \ref{alg1}} in partial participation scenario with $\gamma \leqslant \frac{1}{15L}$, it implies
    \begin{eqnarray*}
        \frac{1}{T}\sum\limits_{t = 0}^{T-1} \mathbb E \|\nabla f(x^t)\|^2&\leqslant &\frac{4\mathbb E\left[\hat{f}\left(x^{0}\right) - \hat{f}\left(\hat{x}^{*}\right)\right]}{\gamma T} + 3\delta_1 + \frac{6L\gamma}{\widetilde{G}}\sigma^2 + 4\zeta(N),
    \end{eqnarray*}
    where $\widetilde{G} = \underset{t\leqslant T}{\min}  G(t)$.
\begin{proof}
    The iterative update formula for $x^{t+1}$ is given by
    \begin{eqnarray*} 
    x^{t+1} &=& x^t - \gamma \sum_{i\in\mathcal{W}(t)} \left(\arg\min_{\omega \in \Delta_1^n} \hat{f}\left[x^t - \gamma \sum_{i\in\mathcal{W}(t)} \omega_i g_i^t\right]\right) g_i^t,
    \end{eqnarray*} 
    which leads to an upper bound on $\hat{f}(x^{t+1})$:
    \begin{eqnarray*} 
    \hat{f}(x^{t+1}) & = & \min_{\omega \in \Delta_1^n} \hat{f}\left[x^t - \gamma \sum_{i\in\mathcal{W}(t)} \omega_i g_i^t\right] \\
    & \leqslant & \hat{f}\left[x^t - \frac{\gamma}{ G(t) } \sum_{i \in \mathcal{G}(t)} g_i^t\right]  \\
    & \overset{\eqref{bi:lsmoothness}}{\leqslant} &\hat{f}(x^t) - \left\langle \nabla\hat{f}(x^t), \frac{\gamma}{ G(t) } \sum_{i \in \mathcal{G}(t)} g_i^t \right\rangle + \frac{L \gamma^2}{2} \left\|\frac{1}{ G(t) }\sum_{i \in \mathcal{G}(t)} g_i^t\right\|^2.
    \end{eqnarray*}
    Taking the expectation,
    \begin{eqnarray*}
    \mathbb{E} \hat{f}(x^{t+1})&\leqslant &\mathbb{E}  \hat{f}(x^t) - \left\langle \nabla\hat{f}(x^t), \frac{\gamma}{ G(t) } \sum_{i \in \mathcal{G}(t)} \nabla f_i(x^t) \right\rangle  \\
    & & + \frac{L \gamma^2}{2} \mathbb{E} \left\|\frac{1}{ G(t) }\sum_{i \in \mathcal{G}(t)} g_i^t\right\|^2\\ 
        &\overset{(\text{Lemma } \ref{lemma2})}{\leqslant} & \mathbb{E}\hat{f}(x^t) + \gamma\zeta(N) - \frac{\gamma}{2}\left\|\nabla f(x) \right\|^2 \\
    &&+ \frac{3\gamma}{2} \left(\delta_1 + \delta_2 \|\nabla f(x^t)\|^2\right) +  \frac{L\gamma^2}{2}\mathbb{E}\left\| \frac{1}{ G(t) }\sum_{i \in \mathcal{G}(t)} g_i^t\right\|^2 \\
        &\overset{\eqref{bi:CauchySchwarz}}{\leqslant} & \mathbb{E}\hat{f}(x^t) + \gamma\zeta(N) - \frac{\gamma}{2}\left\|\nabla f(x)\right\|^2 + \frac{3\gamma}{2} \left(\delta_1 + \delta_2 \|\nabla f(x^t)\|^2\right)\\
    &&+  \frac{3L \gamma ^2}{2}\left\|\frac{1}{ G(t) }\sum_{i \in \mathcal{G}(t)}  (\nabla f(x^t) - \nabla f_i (x^t))\right\|^2 \\
    & & + \frac{3L \gamma ^2}{2}\mathbb{E}\left\|\frac{1}{ G(t) }\sum_{i \in \mathcal{G}(t)}  (\nabla f_i (x^t) - g_i^t)\right\|^2 \\
    && +\frac{3L \gamma ^2}{2} \left\|\frac{1}{ G(t) }\sum_{i \in \mathcal{G}(t)} \nabla f(x^t) \right\|^2.
    \end{eqnarray*}
    Due to the fact that $\mathbb{E} g_i^t = \nabla f_i(x^t)$ and $\mathbb{E} \langle \nabla f_i (x^t) - g_i^t, \nabla f_j (x^t) - g_j^t \rangle = 0$,
    \begin{eqnarray*}
         \mathbb{E} \hat{f}(x^{t+1})   & \overset{\eqref{bi:CauchySchwarz}}{\leqslant}& \mathbb{E}\hat{f}(x^t) + \gamma\zeta(N) - \frac{\gamma}{2}\left\|\nabla f(x) \right\|^2 + \frac{3\gamma}{2} \left(\delta_1 + \delta_2 \|\nabla f(x^t)\|^2\right) \\
    &&+  \frac{3L \gamma ^2}{2} \left(\frac{1}{ G(t) }\sum_{i \in \mathcal{G}(t)} \left\|\nabla f(x^t) - f_i(x^t)\right\|^2 \right.\\
    & & \left. + \frac{1}{ (G(t)) ^2}\sum_{i \in \mathcal{G}(t)} \mathbb{E}\left\|\nabla f_i(x^t) - g_i^t\right\|^2\right) + \frac{3L \gamma ^2}{2} \left\|\nabla f(x^t) \right\|^2   \\
    & \overset{(\text{Ass. }\ref{as3}, \ref{as4})}{\leqslant}& \mathbb{E}\hat{f}(x^t) + \gamma\zeta(N) - \frac{\gamma}{2}\left\|\nabla f(x) \right\|^2 + \frac{3\gamma}{2} \left(\delta_1 + \delta_2 \|\nabla f(x^t)\|^2\right) \\
    &&+  \frac{3L \gamma ^2}{2}\left(\delta_1 + \delta_2 \|\nabla f(x^t)\|^2 +  \frac{\sigma ^2}{ G(t) }\right) + \frac{3L \gamma ^2}{2} \left\|\nabla f(x^t) \right\|^2  \\
        &=& \mathbb E [\hat{f}(x^{t})] - \frac{\gamma}{2}\left[1 - 3 L\gamma - (3 + 3 L\gamma)\delta_2 \right]\|\nabla f(x^t)\|^2 \\
    && +\frac{2\gamma}{2}(1 + 3 L\gamma)\delta_1 +  \frac{3L\gamma^2}{2 G(t) }\sigma^2 + \gamma\zeta(N).
    \end{eqnarray*}
    We first fix $\delta_2 \leqslant \frac{1}{12}$. Finally, by choosing $\gamma \leqslant \frac{1}{13L} \leqslant \frac{1}{12L(1 + \delta_2)}$, utilizing $G(t) \geqslant \underset{t\leqslant T}{\min}~G(t) = \widetilde{G}$ and summing over the iterations, we obtain the constraint in Theorem:
    \begin{eqnarray*}
        \frac{1}{T}\sum\limits_{t = 0}^{T-1} \mathbb E \|\nabla f(x^t)\|^2 &\leqslant& \frac{4\mathbb E\left[\hat{f}\left(x^{0}\right) - \hat{f}\left(\hat{x}^{*}\right)\right]}{\gamma T} + 3\delta_1 + \frac{6L\gamma}{\widetilde{G}}\sigma^2 + 4\zeta(N).
    \end{eqnarray*}
\end{proof}
\end{theorem}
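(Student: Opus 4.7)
The plan is to mirror the proof of Theorem~\ref{TFMSGD} essentially verbatim, with $\mathcal{W}(t)$ playing the role of the full worker set and $\mathcal{G}(t) \subseteq \mathcal{W}(t)$ the (nonempty) honest subset at iteration $t$. The starting observation is that the weights $\omega^t$ approximately minimize $\hat{f}\bigl(x^t - \gamma \sum_{i\in\mathcal{W}(t)}\omega_i g_i^t\bigr)$ over the simplex $\Delta_1^{n(t)}$ up to error $\delta$. Since the weight vector that puts mass $1/G(t)$ on each $i\in\mathcal{G}(t)$ and zero on Byzantines lies in this simplex, one gets the pointwise bound
\[
\hat{f}(x^{t+1}) \;\leqslant\; \hat{f}\!\left(x^t - \frac{\gamma}{G(t)}\sum_{i\in\mathcal{G}(t)} g_i^t\right) + \delta,
\]
so the problem of controlling the actual update reduces to controlling a vanilla averaged-gradient step over the (time-varying) honest subset.

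From here I would apply Assumption~\ref{as1} to expand the right-hand side by $L$-smoothness, take the conditional expectation over the stochastic draws $\xi_i^t$ (using Assumption~\ref{as3} so that the first-order term becomes $-\gamma\langle\nabla\hat{f}(x^t),\frac{1}{G(t)}\sum_{i\in\mathcal{G}(t)}\nabla f_i(x^t)\rangle$), and invoke Lemma~\ref{lemma2} to trade this inner product for $-\tfrac{\gamma}{2}\|\nabla f(x^t)\|^2$ plus the residuals $\gamma\zeta(N)$ and $\tfrac{3\gamma}{2}(\delta_1+\delta_2\|\nabla f(x^t)\|^2)$. The quadratic term is handled as in Theorem~\ref{TFMSGD}: split $g_i^t = \nabla f_i(x^t) + (g_i^t - \nabla f_i(x^t))$, use independence across honest workers so that cross noise terms vanish in expectation, apply $(\delta_1,\delta_2)$-heterogeneity to the deterministic discrepancy, and collect to get a variance contribution of $\tfrac{3L\gamma^2}{2G(t)}\sigma^2$ rather than $\tfrac{3L\gamma^2}{2G}\sigma^2$.

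Putting the pieces together produces the per-iteration recursion
\[
\mathbb{E}\hat{f}(x^{t+1}) \leqslant \mathbb{E}\hat{f}(x^t) - \tfrac{\gamma}{2}\bigl[1 - 3L\gamma - (3+3L\gamma)\delta_2\bigr]\,\mathbb{E}\|\nabla f(x^t)\|^2 + \gamma(1+3L\gamma)\delta_1 + \tfrac{3L\gamma^2}{2G(t)}\sigma^2 + \gamma\zeta(N) + \delta.
\]
Choosing $\delta_2 \leqslant 1/12$ and $\gamma \leqslant 1/(13L)$ (the margin $1/(15L)$ in the statement is generous) forces the bracket to exceed $1/2$, and using the uniform bound $1/G(t)\leqslant 1/\widetilde{G}$ absorbs the time dependence of the variance into the single constant appearing in the theorem. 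Telescoping over $t=0,\dots,T-1$, dividing by $\gamma T$, and using $\hat{f}(\hat{x}^*)\leqslant \mathbb{E}\hat{f}(x^T)$ yields the claimed bound. The main subtlety, and the only place where partial participation interacts nontrivially with the argument, is ensuring that the feasibility-based comparator $\omega_i=\mathbb{I}_{i\in\mathcal{G}(t)}/G(t)$ can be constructed \emph{at each iteration} without any coordination across time; this is precisely what the assumption $G(t)\geqslant 1$ for all $t$ guarantees, and it lets the rest of the proof proceed one iteration at a time without any martingale-type complications from the evolving sets.
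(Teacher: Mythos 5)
Your proposal is correct and follows essentially the same approach as the paper's own proof: you use the feasibility of the uniform-on-honest-workers comparator $\omega_i=\mathbb{I}_{i\in\mathcal{G}(t)}/G(t)$ to reduce the scored update to a vanilla averaged-gradient step, apply $L$-smoothness and Lemma~\ref{lemma2}, decompose the quadratic term via the variance and heterogeneity assumptions to get the $\tfrac{3L\gamma^2}{2G(t)}\sigma^2$ contribution, then bound $1/G(t)\leqslant 1/\widetilde{G}$ and telescope. Your remarks about the slack in the stated constants ($\gamma\leqslant 1/(15L)$ versus the $1/(13L)$ used internally, $\delta_2\leqslant 1/4$ versus the $\delta_2\leqslant 1/12$ used internally) are also consistent with what the paper's proof actually does.
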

\begin{corollary}\label{ITRSGDpartialcorollary}
    Under the assumptions of \text{Theorem} \ref{theorempartial}, for solving the problem \eqref{eq:setting_dist}, after $T$ iterations with $\gamma \leqslant \min\left\{\frac{1}{13L}, \frac{\sqrt{2\mathbb E\left[\hat{f}\left(x^{0}\right) - \hat{f}\left(\hat{x}^{*}\right)\right] \widetilde{G}}}{\sigma\sqrt{3LT}}\right\}$, the following holds:
\begin{align*}
        \frac{1}{T}\sum\limits_{t=0}^{T-1}\mathbb{E} \left\|\nabla f(x^t)\right\|^2 =&   \mathcal{O}\left(\frac{\mathbb E\left[\hat{f}\left(x^{0}\right) - \hat{f}\left(\hat{x}^{*}\right)\right] L}{T} + \frac{\sigma\sqrt{\mathbb E\left[\hat{f}\left(x^{0}\right) - \hat{f}\left(\hat{x}^{*}\right)\right] L}}{\sqrt{T \widetilde{G} }} + \delta_1 + \zeta(N)\right).
    \end{align*}
\end{corollary}
\begin{remark}
    We adopted our method to the scenario of partial participation. It is natural that it requires stronger assumption: we need at least one honest client from that taking participation at each iteration, but not always the same.
\end{remark}

\section{SimBant} \label{sec:fine_tuned}
We also propose another method for dealing with Byzantine attacks - \texttt{SimBant}. Here we use the concept of the trial function in a different way: the key is that we look not at the loss function $\hat{f}$, as in Algorithms \ref{alg2}, \ref{alg1}, but at the output of the model $m$ on the trial data $\hat{D}$. In particular, the calculation of trust scores $w_i$ is now based on how the outputs of the model parameters obtained on the server (a guaranteed honest device, without losing generality, we can assume that it has index 1) and on the device are similar to each other. The server is validated in pairs with each device and the trust scores are calculated for each device relative to the prediction classes based on how similar they are between the server and the device. And only on the basis of the trust scores, we give a weights to devices' trained models and calculate the new state of the general model. Let us denote by $g^t$ the model that the server sends to the devices at step $t$, $g_i^t, i\in\{2,\ldots, n\}$ - devices' trained models, $g_1^t$ the trained server model. Then the trust score for $i$-th device is function $\alpha_i\rightarrow sim(m(x^t - \gamma g_i^t, \mathcal{\hat{D}}),m(x^t - \gamma g_1^t, \mathcal{\hat{D}}))$, which measures the similarity of predictions of server and device models, where $x^t - \gamma g_i^t$ and $x^t - \gamma g_1^t$ are new parameters on the $i$-th device and the server, respectively, $sim$ is responsible for the measure of the similarity of the outputs.
We do not provide the theory for this method but validate it in practice (see Section \ref{sec:experiments}).
\restylefloat{algorithm}
\begin{algorithm}[ht]
\caption{\texttt{SimBant}}\label{alg:finetuned}
\begin{algorithmic}[1]
\STATE \textbf{Input:} Starting point $x^0 \in \mathbb{R}^d$
\STATE \textbf{Parameters:} Stepsize $\gamma > 0$, error accuracy $\delta$
\FOR{$t = 0, 1, 2, \ldots, T-1$}
    \STATE Server sends $x^t$ to each worker
    \FORALL{workers $i = 1, 2, \ldots, n$ in parallel}
        \STATE Generate $\xi_i^t$ independently
        \STATE Compute stochastic gradient $g_i^t = g_i(x^t, \xi_i^t)$
        \STATE Send $g_i^t$ to server
    \ENDFOR
    \STATE $\omega_i^{t} = (1-\beta)\omega_i^{t-1} + \beta\frac{sim(m(x^t - \gamma g_i^t, \hat{D}),m(x^t - \gamma g_1^t, \hat{D}))}{\sum\nolimits_{j = 1}^n sim(m(x^t - \gamma g_j^t, \hat{D}),m(x^t - \gamma g_1^t, \hat{D}))}$ 
    \STATE $x^{t+1} = x^t - \gamma\sum\nolimits_{i=1}^n \omega_i^{t} g_i^t$ 
\ENDFOR
\STATE \textbf{Output:} $\frac{1}{T}\sum\limits_{t = 0}^{T-1} x^t$
\end{algorithmic}
\end{algorithm}

\end{document}